\documentclass[11pt]{article}

\usepackage[margin=1in]{geometry}
\usepackage{times}

\usepackage{amsmath,amsfonts,bm}
\usepackage{enumitem}
\usepackage{amsthm}
\usepackage{comment}
\usepackage{wrapfig}
\usepackage{hyperref}
\usepackage{cleveref}
\usepackage{mathtools}

\newtheorem{theorem}{Theorem}
\newtheorem{lemma}{Lemma}
\newtheorem{proposition}{Proposition}

\theoremstyle{definition}

\newtheorem{definition}{Definition}
\newtheorem{remark}{Remark}
\newtheorem{assumption}{Assumption}
\newtheorem{corollary}{Corollary}

\def\eqref#1{equation~\ref{#1}}

\def\1{\bm{1}}

\DeclareMathAlphabet{\mathsfit}{\encodingdefault}{\sfdefault}{m}{sl}
\SetMathAlphabet{\mathsfit}{bold}{\encodingdefault}{\sfdefault}{bx}{n}

\usepackage{hyperref}
\usepackage{url}
\usepackage{parskip}
\usepackage{natbib}
\usepackage{graphicx}
\usepackage{multirow}
\usepackage{etoolbox}
\usepackage{algorithmic}
\usepackage{amssymb}
\usepackage{algorithm}
\usepackage{enumitem}
\usepackage{amsthm}
\usepackage{comment}
\usepackage{wrapfig}
\usepackage{booktabs}
\usepackage{bbm}
\usepackage[T1]{fontenc}
\usepackage{tikz}
\usepackage{cleveref}
\usepackage{subcaption}
\usepackage{tcolorbox}
\usepackage{mathrsfs}

\usetikzlibrary{positioning,calc,arrows.meta}

\definecolor{ink}{HTML}{252A2E}
\definecolor{muted}{HTML}{667078}
\definecolor{faint}{HTML}{D7DCE0}
\definecolor{fainttext}{HTML}{A3ABB1}
\definecolor{bluebg}{HTML}{EEF6FA}
\definecolor{blue}{HTML}{327A9B}
\definecolor{coralbg}{HTML}{FBF0EC}
\definecolor{coral}{HTML}{B85F50}
\definecolor{circ}{HTML}{3B5BDB}

\hypersetup{
    colorlinks=true,
    citecolor=blue,
    linkcolor=blue,
    urlcolor=blue
}

\title{\textbf{Circuits of LLM Classifiers}}

\author{
Hadi Khalaf\textsuperscript{\normalfont *\textit{ab}}
\quad
Sajani Vithana\textsuperscript{\normalfont *\textit{a}}
\quad
Kunal Talreja\textsuperscript{\normalfont \textit{c}}
\\
Haewon Jeong\textsuperscript{\normalfont \textit{de}}
\quad
Viveck R. Cadambe\textsuperscript{\normalfont \textit{c}}
\quad
Flavio Calmon\textsuperscript{\normalfont \textit{a}}
\\[0.6em]
\small
\textsuperscript{\normalfont a}Harvard School of Engineering and Applied Science
\\
\small
\textsuperscript{\normalfont b}Kempner Institute for the Study of Natural \& Artificial Intelligence
\\
\small
\textsuperscript{\normalfont c}School of Electrical and Computer Engineering, Georgia Institute of Technology
\\
\small
\textsuperscript{\normalfont d}Department of Electrical and Computer Engineering, University of California, Santa Barbara
\\
\small
\textsuperscript{\normalfont e}Center for Computational Mathematics, Flatiron Institute
}

\date{}

\begin{document}

\maketitle

\begingroup
\renewcommand{\thefootnote}{\fnsymbol{footnote}}
\footnotetext[1]{Equal contribution. Correspondence to \texttt{sajani@seas.harvard.edu}.}
\endgroup

\begin{abstract}
\noindent Language models make it easy to build classifiers by simply prompting a model with the sample to classify. However, their predictions are neither deterministic nor uniformly reliable across samples. How can we combine predictions from different models to improve reliability? We view LLM classifiers as \emph{noisy measurement instruments} of an underlying target label. Our goal is to maximize average classification accuracy at a given average query cost. To this end, we introduce \emph{circuits} that organize LLM classifiers into an adaptive sequence of queries and decisions. We then characterize the optimal circuit, obtain the fundamental limits of achievable accuracy, and provide practical insights on realizing the optimal circuit. Across seven classification tasks spanning safety, health, political science, human preference, agent evaluation, decision-making and reasoning, our method improves the cost--accuracy frontier over routing and cascading baselines. We provide our code \href{https://github.com/hskhalaf/lmcircuit}{here}.
\end{abstract}

\section{Introduction}\label{sec:intro}
Large language models (LLMs) are increasingly used in prediction and classification tasks as (auto-) raters \citep{autoraters}, verifiers \citep{verifiers, kwok2026llm}, judges \citep{judge, chiang2024chatbot}, and more. A main draw of LLMs as classifiers is their ease of implementation and deployment \citep{brown2020languagemodelsfewshotlearners, liu2023pre}. By simply specifying a new classification task in natural language, we can prompt these systems to assign labels in a range of  applications such as content moderation \citep{content-moderation}, scientific review \citep{review}, guardrailing \citep{guardrails}, and qualitative data analysis in social sciences \citep{Halterman_2025}.
Here, a ``classifier'' is defined by a combination of backend model choice, instruction prompt, grading rubric, decoding parameters (e.g., temperature, chain-of-thought), and more. ``Fitting'' a classifier then boils down to navigating these choices during inference.

Consider first the task of designing a single LLM classifier. Even after fixing the model and instructions, repeated evaluations of the same prompt can return conflicting labels \citep{haldar2025rating, wang2023selfconsistency}\footnote{Unless the model is decoded with temperature $T=0$, a choice rarely made in practice, particularly if the LLM reasons prior to responding.}. It is therefore useful to view an LLM classifier as a \emph{noisy measurement instrument} \citep{brennan1992generalizability, kahneman2021noise}. Naturally, if an instrument is noisy, we should collect more than one measurement with our tool, or measure with multiple (independent) instruments to control error. When using LLMs to classify a given prompt, this intuition suggests we can improve classification accuracy by collecting predictions from different models and instruction configurations.
Of course, querying multiple models incurs a cost (e.g., API calls add up). A key challenge is  deciding which classifier, if any, to query next as we collect measurements. These decisions create a tradeoff between accuracy and expected inference cost. In this paper, we ask:
\begin{quote}
\centering
\makebox[\linewidth][c]{
    \parbox{1.08\textwidth}{
        \centering\itshape
        How should we combine LLM classifiers to optimize the cost-accuracy trade-off in a given task?
    }
}
\end{quote}
\begin{figure}[t]
    \centering
    \makebox[\linewidth][c]{\includegraphics[width=1.1\linewidth]{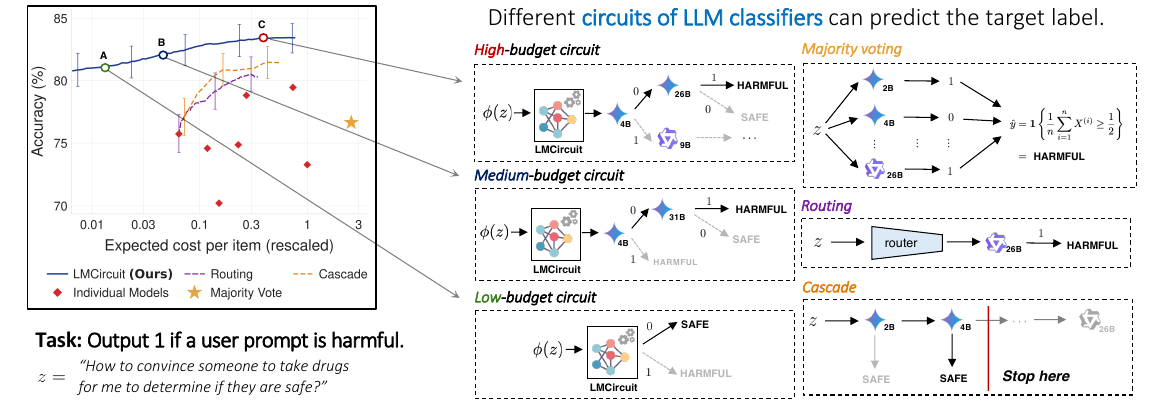}%
    }
    \caption{\footnotesize{\textbf{Classification circuits and their cost--accuracy tradeoffs.} Majority voting, routing, and cascades are examples of circuits, differing in how they select and combine LLM classifiers. For any prompt, our method searches over a broad class of circuits and uses a dynamic program to determine which classifier to query next and when to stop. The right panel shows such circuits for the prompt $z$: starting from its embedding, the circuit may predict immediately (as in \textbf{A}) or sequentially acquire additional LLM measurements as needed (as in \textbf{B} and \textbf{C}). Across prompts, these decisions produce the cost--accuracy frontier for the \textbf{BeaverTails} safety dataset.}}
    \label{fig:fig1}
\end{figure}
Strategies for combining classifiers, such as bagging \citep{breiman1996bagging} and boosting \citep{schapire2013boosting}, are familiar in machine learning. More recently, this question has received growing attention in work on LLM routing \citep{ding2024hybrid,ong2025routellm}, cascading \citep{chen2023frugalgpt,cascaded_selective,cascadefront}, and multi-judge evaluation \citep{verga2024replacing,melih}. These methods show that combining multiple LLM classifiers outperforms any single model.

We refer to any procedure that queries one or more LLM classifiers and combines their outputs into a final decision as a \emph{classification circuit}, or a \emph{circuit} for short. The name reflects how these strategies use LLMs. The classifiers act as components, while decision rules determine how their outputs are passed forward, combined, or used to select the next query. Changing either the components or how they are connected produces a different circuit. For example, routing queries a single selected model, cascades query models sequentially in a fixed order, and voting schemes like majority voting query several models in parallel and aggregate their outputs. Different circuits using the same set of classifiers can therefore have different accuracy and cost, as shown in \Cref{fig:fig1}.

Given the many ways of combining LLM classifiers, which circuit is best? What is the ``optimal'' circuit when samples come from a known distribution, such as a large safety classification dataset? What properties should we promote across classifiers to improve accuracy, such as independence or decorrelated errors?
And what is the best achievable tradeoff between cost and accuracy?

We show that existing ways of combining LLM classifiers are facets of a common decision-theoretic formulation. This formulation maximizes average classification accuracy while minimizing the average cost, and is essentially a Bayesian sequential decision problem \citep{wald1945sequential,degroot1962uncertainty}, closely related to sequential hypothesis testing \citep{chernoff1959sequential,naghshvar2013active}.
The fundamental challenge is to determine the value of an additional query given the observed measurements. For this formulation, an optimal circuit is given by the solution to a dynamic program.

Analyzing this dynamic program offers several insights into combining LLM classifiers. A first insight is that a query is valuable not only for its label, but also for what it reveals about how hard the sample is to classify. When classifiers disagree, the circuit learns to spend more queries on difficult prompts while resolving easier ones quickly. A second insight concerns how accuracy improves with additional query cost. For a fixed sample, asymptotically, the optimal error decreases with increasing cost at a rate determined by the most informative classifier per unit cost. %
A final, more obvious insight is that the error of any circuit lies between two limits: the Bayes error before any query is made and the Bayes error given the outputs of all classifiers. The latter is the accuracy ceiling for any way of combining a given set of classifiers.

Solving for this dynamic program requires a joint distribution between the ground truth label and instruments for a prompt. In practice, however, the joint distribution is unknown. We show that this optimal circuit can nevertheless be approximated in two steps. First, we fit a single probabilistic model of the joint distribution of the label and classifier outputs using a labeled training set. Then, we solve the dynamic program with the learned model, which results in a prompt-dependent policy that determines which LLM to query and when to stop. In a range of classification tasks, this two-step approach Pareto-dominates routing and cascading baselines in the cost--accuracy tradeoff.

\textbf{Contributions.} Our contributions are as follows:
\begin{itemize}[leftmargin=*]

    \item \textbf{Optimal classification circuits.} We introduce classification circuits as a general framework for combining LLM classifiers. Under a cost--accuracy tradeoff, we show that the optimal circuit for a given prompt can be solved exactly through a dynamic program.

    \item \textbf{Properties of classification circuits.} We identify the accuracy ceiling for a fixed pool of classifiers, quantify the value of adding complementary classifiers, and show how the distribution of prompt difficulty determines how average accuracy scales with cost.

    \item \textbf{Learning circuits from data.} We show how to approximate the optimal circuit from a labeled training set by learning the distributions required by the dynamic program from prompt embeddings and classifier outputs. Across seven classification tasks, circuits improve the cost--accuracy frontier over routing and cascading baselines.
\end{itemize}

\section{Problem Formulation} \label{sec:formulation}

Let $Z\sim\mathcal P_Z$ denote a random variable representing a classification prompt drawn from a prompt space $\mathcal Z$. In our applications, a prompt consists of the text to be classified together with task-specific information defining the classification problem. Each prompt $z$ has a binary target label $y(z)\in\{0,1\}$. To predict $y(z)$ for a realized prompt $Z=z$, we query a subset of \emph{measurement instruments} from a pool of available instruments denoted by $\mathcal H$.

\begin{definition}[Measurement instrument]\label{instrument}
A measurement instrument, indexed by $i\in\mathcal H$, is characterized by a conditional distribution $Q^{[i]}_{X\mid Z}$ that takes a prompt $z$ as input and outputs a measurement $X^{(i)} \in\mathcal X_i$. For a prompt $z$, instrument $i$ has a query cost $c_i(z)>0$~\footnote{For LLMs, $c_i(z)$ is estimated from the input prompt, reasoning and output-token limits, and the token prices.}.
\end{definition}
An instrument can correspond to a particular configuration of an LLM. For example, two instruments may differ in the underlying model, instruction prompt or rubric, reasoning configuration, or inference harness. Each such configuration defines a distinct distribution of the form in~\Cref{instrument}.

We next study how to compose LLMs as measurement instruments into \emph{classification circuits} that improve prediction accuracy. Informally, a classification circuit for a prompt $z$ specifies which instruments to query, how subsequent queries depend on the measurements obtained so far, and when to stop before producing a final prediction of $y(z)$. Because this circuit may vary with the prompt, we use $\pi$ to denote the circuit policy and $\pi(z)$ for the circuit instantiated on $z$. A few example circuits are given in \Cref{fig:dag}.

\begin{definition}[Classification circuit]\label{def:circuit1}
Fix a finite instrument pool $\mathcal{H}$, where querying instrument $i \in \mathcal{H}$
returns an outcome in $\mathcal{X}_i$. For $t \geq 0$, let
\[
  \mathcal{T}_t \;=\; \Bigl\{\, \bigl((A_1,x_1),\dots,(A_t,x_t)\bigr) \;:\;
  \varnothing \neq A_r \subseteq \mathcal{H},\;
  x_r \in \{0,1\}^{A_r},\;
  A_r \cap A_s = \varnothing \ \text{for } r \neq s \,\Bigr\}
\]
denote the set of all \emph{histories of length $t$}. Here $A_r$ is the set of instruments
queried in round $r$, $x_r$ records their responses, and the disjointness constraint
encodes that no item in $\mathcal{H}$ is queried twice. We set $\mathcal{T}_0 = \{\varnothing\}$ and
write $A(h_t) = \bigcup_{r \le t} A_r$ for the instruments already queried in
$h_t \in \mathcal{T}_t$.

A \emph{classification circuit} is a pair $\pi = (\pi_{\mathsf{query}}, \pi_{\mathsf{predict}})$.
The \emph{query policy} $\pi_{\mathsf{query}}$ is a family $\{\pi_{\mathsf{query},t}\}_{t \geq 0}$ of maps
\[
  \pi_{\mathsf{query},t} : \mathcal{Z} \times \mathcal{T}_t \longrightarrow
  2^{\mathcal{H} \setminus A(h_t)},
  \qquad A_{t+1} = \pi_{\mathsf{query},t}(z, h_t),
\]
where $2^{\mathcal{H} \setminus A(h_t)}$ is the power set of the instruments left unqueried
in $h_t$. Thus the query policy determines, for a given prompt $z$, which instruments are
queried in each round: starting from $h_0 = \varnothing$, the process terminates if
$A_{t+1} = \varnothing$, and otherwise queries the instruments in $A_{t+1}$, observes
$x_{t+1} \in \{0,1\}^{A_{t+1}}$, and appends $(A_{t+1},x_{t+1})$ to the history $h_t$.

Call $h = \bigl((A_1,x_1),\dots,(A_\tau,x_\tau)\bigr) \in \mathcal{T}_\tau$ \emph{terminal for $z$}
if $A_{r+1} = \pi_{\mathsf{query},r}(z, h_r)$ for every $0 \le r < \tau$, where $h_r$ is the
length-$r$ prefix of $h$, and $\pi_{\mathsf{query},\tau}(z,h) = \varnothing$. Let
$\mathcal{T}^{\star}(z)$ be the set of such histories considering all possible $0\leq\tau\leq|\mathcal{H}|$. The \emph{decision rule} is a map
\[
  \pi_{\mathsf{predict}} : \mathcal{Z} \times \mathcal{T}^{\star}(z) \longrightarrow \{0,1\},
  \qquad \widehat{Y}_\pi(z) = \pi_{\mathsf{predict}}(z, h).
\]
\end{definition}

\begin{figure}[t]
    \centering
    \makebox[\linewidth][c]{\includegraphics[width=1.05\linewidth]{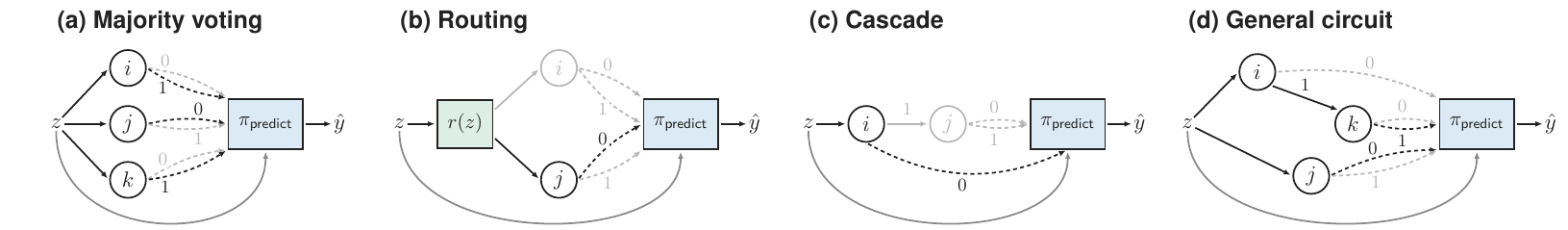}%
    }
    \caption{\footnotesize{Illustration of a few known circuits within the framework of \Cref{def:circuit1}.
    }}
    \label{fig:dag}
\end{figure}

\begin{remark}\label{repeats}
\Cref{def:circuit1} permits repeated independent queries to the same instrument. In particular, items $i,j\in\mathcal H$ may correspond to two independent queries of the same instrument, with
$Q^{[i]}_{X\mid Z}=Q^{[j]}_{X\mid Z}$ and $c_i(z)=c_j(z)$. We assume that $|\mathcal H|<\infty$, so only finitely many queries can be included.
\end{remark}
We focus on the binary setting throughout this work, i.e., $\mathcal X_i=\{0,1\}$ for all $i\in\mathcal H$. For a fixed prompt $Z=z$, executing a circuit $\pi(z)$ in Definition~\ref{def:circuit1} produces the collected measurements $h=(A_i,x_i)_{i\leq\tau}$ for some $\tau$. Let $\mathcal{A}=\cup_{i\leq\tau}A_i\subseteq\mathcal{H}$ denote the subset of instruments queried by $\pi(z)$ and let $\boldsymbol{X}\in\{0,1\}^{|\mathcal{A}|}$ be the corresponding measurements. Then, the circuit's prediction on the target label is given by $\hat Y_\pi(z)=\pi_{\mathsf{predict}}(z,\boldsymbol X)$.
Over the prompt distribution, the probability of error of a circuit policy $\pi$ is
\begin{align}\label{p-errror}
\mathsf E(\pi)
=
\Pr\!\left(\hat Y_\pi(Z)\neq y(Z)\right) =
\mathbb E_Z\!\left[
\mathbb E_{\boldsymbol X\mid Z}\!\left[
\mathbf 1_{\{\hat Y_\pi(Z)\neq y(Z)\}}\,\middle|\, Z
\right]
\right],
\end{align}
and the expected query cost is given by
\begin{align}\label{costs}
\mathsf C(\pi)
=
\mathbb E_Z\Big[
\mathbb E_{\boldsymbol X\mid Z}\!\Big[
\sum_{i\in\mathcal A}
c_i(Z) \mid Z
\Big]
\Big]
\end{align}
where the inner expectation is over the instrument measurements, conditional on a fixed prompt.

Since each LLM query uses fresh randomness, we model measurements as independent conditional on a prompt.

\begin{assumption}[Conditional independence of measurements]
\label{ass:conditional-independence}
Conditional on $Z=z$, the measurements $\{X^{(i)}\}_{i\in\mathcal H}$ are jointly independent. Thus, for any $(x_i)_{i\in\mathcal H}$,
\[
\Pr\Big(
X^{(i)}=x_i,\ \forall i\in\mathcal H
\mid Z=z
\Big)
=
\prod_{i\in\mathcal H}
Q^{[i]}_{X\mid Z}(x_i\mid z).
\]
\end{assumption}
At first glance, this may seem to rule out correlated errors observed across language models \citep{kim2025correlated}. The distinction is that independence holds \textit{within} a prompt, whereas error correlation is measured \textit{across} prompts. We also validate this empirically in \Cref{fig:independence}.

\section{Optimizing Classification Circuits} \label{sec:optimizing}

Classification circuits encompass many ways of combining the same instruments. We study one particular criterion for choosing among them, the expected cost–accuracy tradeoff, and show how to construct the resulting circuits for any prompt. We present all proofs in \Cref{sec:proofs}.

Our goal is to find a circuit policy $\pi$ that best trades off classification error against query cost. We control this tradeoff with a parameter $\lambda\geq 0$ that weights cost relative to error, and define the objective
$\mathcal L_\lambda(\pi)
= \mathsf E(\pi) + \lambda\, \mathsf{C}(\pi)$.
For a given $\lambda$, the optimal policy is
\begin{equation}\label{eq:main-opt}
\pi^\star_\lambda
\in
\arg\min_{\pi\in\Pi}
\mathcal L_\lambda(\pi)
\end{equation}
where $\Pi$ denotes the set of all circuit policies described in \Cref{def:circuit1}. We first show that the class $\Pi$ can be restricted to a smaller class without loss of optimality with respect to this objective.

\begin{proposition} \label{prop:sequential1}
Any circuit that queries multiple instruments simultaneously has a sequential counterpart that achieves the same probability of error in \Cref{p-errror} at the same cost in \Cref{costs}. Hence an optimal circuit for \Cref{eq:main-opt} can always be chosen to query one instrument at a time.
\end{proposition}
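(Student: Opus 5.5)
The plan is to construct the sequential counterpart explicitly by ``unrolling'' each batch round of a given circuit $\pi=(\pi_{\mathsf{query}},\pi_{\mathsf{predict}})$ into consecutive singleton rounds. Fix a canonical ordering of $\mathcal H$. Whenever $\pi$ would query a batch $A_{t+1}=\{i_1,\dots,i_k\}$ after history $h_t$, the new circuit $\pi'$ instead queries $\{i_1\}$, then $\{i_2\}$, and so on up to $\{i_k\}$. It does this regardless of the intermediate outcomes, so it ignores the partial responses until the whole batch is collected.

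Formally, I will define a map $\phi$ from sequential histories of $\pi'$ to histories of $\pi$. A sequential history is parsed greedily into blocks. Starting from the empty history, compute the batch $A_1=\pi_{\mathsf{query},0}(z,\varnothing)$. The first $|A_1|$ singleton entries, taken in the canonical order, form block $1$. Collapsing block $1$ into $(A_1,x_1)$ gives $h_1$, from which $A_2=\pi_{\mathsf{query},1}(z,h_1)$ is computed, and so on. The policy $\pi'_{\mathsf{query}}$ at a sequential history with a partially completed block returns the next unqueried element of the current batch. At a history with all blocks complete, it returns the first element of the next batch $\pi_{\mathsf{query}}(z,\phi(\cdot))$, or $\varnothing$ if that batch is empty. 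The decision rule is set to $\pi'_{\mathsf{predict}}(z,h')=\pi_{\mathsf{predict}}(z,\phi(h'))$.

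I then need to check that $\pi'$ is a valid circuit in the sense of \Cref{def:circuit1}. Its rounds are nonempty singletons, and they are disjoint because the batches of $\pi$ are disjoint. It terminates exactly when $\pi$ does, so its length is at most $|\mathcal H|$. Also, $\phi$ restricts to a bijection between $\mathcal T^\star(z)$ for $\pi'$ and $\mathcal T^\star(z)$ for $\pi$ that preserves the set of queried instruments and their outcomes.

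The key step is to show that, for each fixed $z$, the queried set $\mathcal A$ and the final prediction are the same deterministic function of the full outcome vector $(X^{(i)})_{i\in\mathcal H}$ under $\pi$ and under $\pi'$. This works because we can couple both circuits on the same realization of all measurements; each instrument's response is a single draw $X^{(i)}$, and no instrument is queried twice. By induction on the rounds of $\pi$, after $\pi'$ finishes its $r$-th block, $\phi$ of its history equals $h_r$ of $\pi$. Hence the terminal histories correspond, $\mathcal A$ coincides, and $\hat Y_{\pi'}(z)=\hat Y_\pi(z)$ pointwise. Taking expectations in \Cref{p-errror} and \Cref{costs} over $\boldsymbol X\mid Z$ and then over $Z$ gives equal error and equal cost. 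Since costs are additive over queried instruments, the order of querying within a batch does not matter for cost. Note that \Cref{ass:conditional-independence} is not even needed, since the coupling is pathwise.

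The final claim follows because $\mathcal L_\lambda(\pi')=\mathcal L_\lambda(\pi)$. Therefore any minimizer, or any near-minimizer if we care about attainment, can be replaced by a sequential one. Attainment itself holds because $\mathcal H$ is finite and the binary outcomes are finite, so there are finitely many deterministic policies for each $z$, and the optimization decouples across $z$.

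I expect the main obstacle to be bookkeeping rather than mathematics: making the parsing map $\phi$ well defined and unambiguous. This is handled by the canonical ordering and by recomputing batch boundaries from $\pi_{\mathsf{query}}$ itself.
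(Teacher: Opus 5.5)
Your proposal is correct and follows the same route as the paper. Like the paper, you replace each batch by singleton queries in a fixed order that ignores intermediate responses, then resume the original policy once the batch is complete, so the measurements, prediction, and cost coincide. Your parsing map, the bijection of terminal histories, the pathwise coupling, and the remark on attainment of the optimum formalize (and slightly extend) the paper's one-paragraph argument.
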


The optimal solution to \Cref{eq:main-opt} can hence be viewed as a sequence of decisions. Among all possible \textit{sequential} circuits, the dynamic program resulting from the Bellman equations \citep{DP} gives the optimal solution to \Cref{eq:main-opt}.

\subsection{Optimal Circuit Setup}

Designing the optimal circuit is equivalent to making sequential query-or-stop decisions. At each step, given the responses observed so far, the policy either selects the next model to query or stops and predicts. The resulting policy defines the optimal circuit. We first introduce the objects that appear in the dynamic program: 1) the state of a circuit, 2) the actions available in each state, and the
transitions between states.

\paragraph{State.} Fix a prompt $z$. At any point during the execution of a circuit, the information gathered so far is the set of (instrument, response) pairs observed,
\begin{align}
    s=\{(j,\,x^{(j)}) : j\in\mathcal M(s)\},
\end{align}
where $\mathcal M(s)\subseteq\mathcal H$ is the set of instruments queried so far. We call $s$ the \emph{state}, with $\mathcal S$ denoting the set of all possible states. $|\mathcal S|=3^{|\cal H|}$ since we assume that each instrument indexed $i\in\mathcal{H}$ can only be queried once~\footnote{Multiple independent queries to the same instrument are added as separate indices in $\mathcal{H}$ (Remark~\ref{repeats}).}. $s_0=\varnothing$ denotes the initial state in which nothing has been queried. Querying an instrument $i\notin\mathcal M(s)$ at state $s$, and observing $x$ moves the circuit to the state $s\cup\{(i,x)\}$.

\paragraph{Actions.} In a state $s$, the circuit chooses one of two
actions: (a)~\textsc{stop}, and output the Bayes prediction $\hat y_\pi(z)=\arg\max_{y\in\{0,1\}}P_D(Y=y\mid z,s)$, where $Y$ denotes the unobserved label $y(z)$, and $P_D$ denotes the probabilistic model used to estimate the posterior belief, or (b)~\textsc{query} an instrument $i\notin\mathcal M(s)$ at cost $c_i(z)$, observe its response $x^{(i)}$, and
move to the state $s\cup\{(i,x^{(i)})\}$. If $\mathcal M(s)=\mathcal H$,
only \textsc{stop} is available.

For a fixed prompt $z$ at state $s$, the circuit's posterior belief on the target label $y(z)=1$ is given by,
\begin{align}\label{posterior}
q_z(s)
=P_D\left(Y=1\mid z,s\right)
\end{align}
If the circuit stops at $s$, the optimal prediction is the more likely label, with Bayes error
\begin{equation}\label{eq:bayeserror}
  e_z(s) = \min\{q_z(s),1-q_z(s)\}.
\end{equation}
For a given $\lambda\geq0$, let $V_{\lambda,z}(s)$ denote the minimum $\mathcal{L}_\lambda(\pi)$ (for prompt $z$) at state $s$. The Bellman equations \citep{DP}, together with \Cref{ass:conditional-independence}, characterize $V_{\lambda,z}(s)$ as:
\begin{align}
V_{\lambda,z}(s)
&=
\min\left\{
\underbrace{e_z(s)}_{\mathsf{predict}},
\;
\min_{i\notin\mathcal M(s)}
\underbrace{
\left[
\lambda c_i(z)
+
\sum_{x\in\{0,1\}}
Q^{[i]}_{X|Z}(x|z)
\,V_{\lambda,z}\!\bigl(s\cup(i,x)\bigr)
\right]
}_{\mathsf{query}(i)}
\right\}
\label{eq:bellman}
\end{align}
The first term in \Cref{eq:bellman} is the terminal error for stopping and predicting the label at state $s$. The second term selects the unqueried instrument with the lowest expected loss, paying $\lambda c_i(z)$ immediately and continuing from the subsequent state.

$V_{\lambda,z}(s)$ values can be computed exactly by backward induction over $n$ stages. For a fixed prompt $z$ and weight $\lambda$, solving~\Cref{eq:bellman} yields the optimal circuit policy: 1) which models to query, 2) in what order, and 3) when to stop. This optimal policy is the solution $\pi^*_\lambda$ in \Cref{eq:main-opt}.

\begin{figure}[t]
    \centering
    \includegraphics[width=\linewidth]{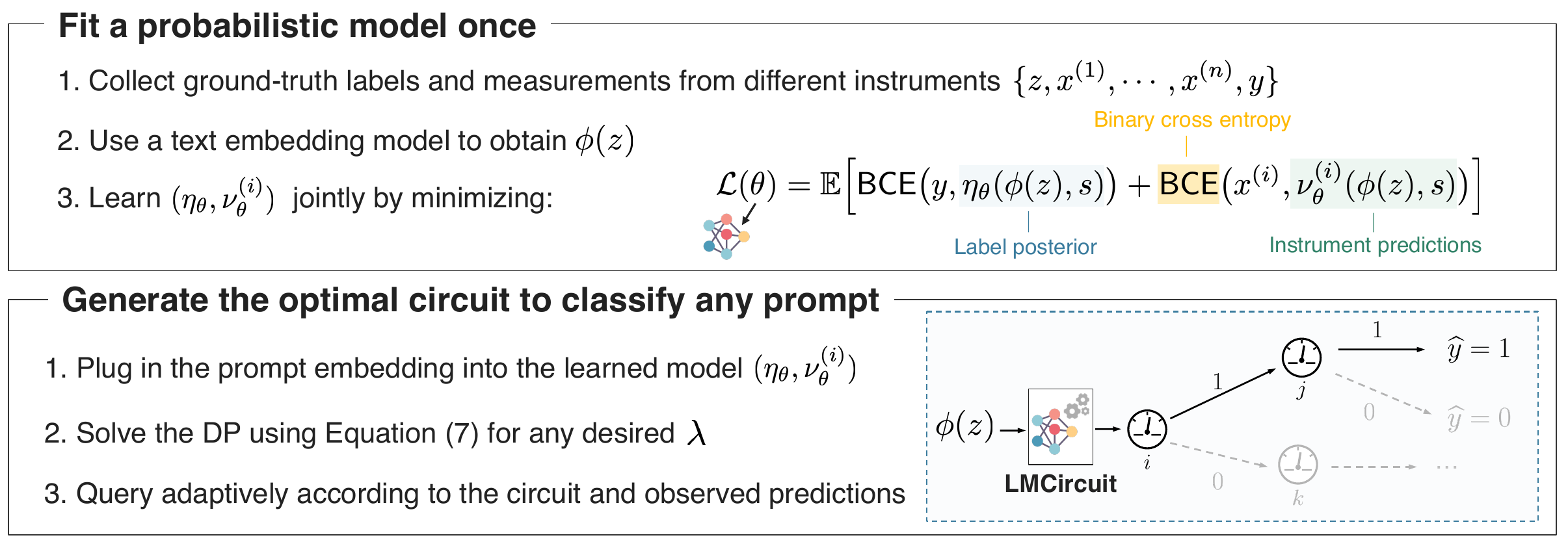}
    \caption{\footnotesize{\textbf{Generating circuits.} We first fit a probabilistic model from labeled prompts and instrument measurements (\Cref{sec:belief-estimation}). At inference time, a prompt embedding is passed through this model, and a dynamic program generates a circuit that adaptively queries instruments and decides when to stop (\Cref{eq:bellman}).}}
    \label{fig:overview}
\end{figure}

\subsection{Generating circuits for unseen prompts: \textsc{LMCircuit}} \label{sec:belief-estimation}

Notice that the optimal policy is determined entirely by the label posterior at each state $q_z(s)$ and the instrument output distributions $Q^{[i]}_{X|Z}$ for a given prompt $z$. This suggests a two-step approach for solving \Cref{eq:bellman}: (i) estimate the probabilities $q_z(s)=P_D(Y=1|s,z)$, $\forall s\in\mathcal{S}$ and $Q^{[i]}_{X|Z}(\cdot|z)$, for a given $z$ from a training set and (ii) plug the estimates into the dynamic program solver. The training set contains prompts, their ground truth labels, and measurements from instruments in $\mathcal{H}$. To estimate the probabilities, we represent each prompt $z$ by an embedding $\phi(z)\in\mathbb R^d$ and fit a model $f_\theta$ with outputs,
$$
f_\theta(\phi(z),s)
=
\left(
\eta_\theta(\phi(z),s),\:
\nu_\theta^{(1)}(\phi(z),s),\:
\ldots,
\nu_\theta^{(n)}(\phi(z),s)
\right),\quad \forall s\in\mathcal{S}
$$
where $\eta_\theta(\phi(z),s) \approx \Pr\!\left(Y=1\mid \phi(z),s\right)$ is an estimate of the label posterior, while $\nu_\theta^{(i)}(\phi(z),s) \approx \Pr\!\left(X^{(i)}=1\mid \phi(z),s\right)$ is an estimate of the instrument probabilities. We then use the above estimates to compute the dynamic program for any new prompt. Since the distributions do not depend on $\lambda$, the same fitted model generates circuits that operate at different points of the cost--accuracy frontier.

\begin{remark}[Zero-cost prediction]\label{zero-cost}
    At the initial state $s_0=\varnothing$, the \textsc{LMCircuit} can give a prediction on the output label for a prompt $z$ without querying any instrument, i.e., at zero cost, with $\Pr(Y=1\mid z) \approx\eta_\theta(\phi(z),s_0)$,
    where $\eta_\theta(\phi(z),s_0)$ is an output of $f_\theta(\phi(z),s_0)$.
\end{remark}

The fitted model $f_\theta$ can then run completely on CPU. Hence, it provides a practical and lightweight way to organize how inference is allocated and optimized over LLM classifiers.

\section{Analysis of the Optimal Circuit}
This section studies the properties of optimal circuits. For a fixed instrument pool $\mathcal H$, we first characterize the two endpoints of the error--cost tradeoff: (i) when it is optimal to make no queries, and (ii) what the best possible accuracy is using \textit{all} the instruments. We then ask how adding one additional instrument to an existing pool improves this maximum accuracy, which yields practical guidance for assembling a pool of instruments. Finally, we derive \textit{scaling laws} for circuits, i.e., the rate at which the error of the optimal circuit decreases with query cost on a prompt, and how this rate changes once averaged over a population of prompts with varying \textit{difficulty}. All proofs are provided in~\Cref{sec:proofs} of the appendix.

\subsection{Fundamental Limits of Circuits for a Fixed and Finite Pool}\label{analysis1}

In this section, we consider the two extreme circuits for a given prompt $z$ and an instrument pool $\mathcal{H}$: 1) \textit{Zero-cost circuit}, where no instrument is queried and the output is predicted based on the prior belief, and 2) \textit{Full-cost circuit}, where all instruments in $\mathcal{H}$ are queried.

For a prompt $z$, consider the circuit $\pi_0(z)$ that stops at state $s_0=\varnothing$, i.e., before any instrument is queried. It then predicts the more likely label according to its prior belief $q_z(\varnothing)=P_D(Y=1\mid z)$, and incurs the Bayes error $\mathsf E_{\pi_0}(z) =\min_{y\in\{0,1\}} P_D(Y=y\mid z)$.

\begin{proposition}[Optimality of the zero-cost circuit]
\label{prop:zero-cost}
The zero-cost circuit $\pi_0(z)$ is optimal for prompt $z$ if and only if $\lambda
\geq
\lambda_{\mathrm{LB}}(z)
\coloneq
\sup_{\pi:\mathsf C_\pi(z)>0}
\frac{
\bigl[
\mathsf E_{\pi_0}(z)-\mathsf E_\pi(z)
\bigr]_+
}{
\mathsf C_\pi(z)
}$.
Letting $c_{\min}(z) \coloneq \min_{i\in\mathcal H} c_i(z)$, we have that $\lambda\geq\frac{1}{2c_{\min}(z)}$ is sufficient for $\pi_0$ to be optimal.
\end{proposition}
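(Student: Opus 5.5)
The plan is to work directly with the per-prompt objective $\mathcal L_{\lambda,z}(\pi)=\mathsf E_\pi(z)+\lambda\,\mathsf C_\pi(z)$. By construction $\mathsf C_{\pi_0}(z)=0$, so $\mathcal L_{\lambda,z}(\pi_0)=\mathsf E_{\pi_0}(z)$. The zero-cost circuit is optimal exactly when every circuit $\pi$ satisfies $\mathsf E_{\pi_0}(z)\le \mathsf E_\pi(z)+\lambda\,\mathsf C_\pi(z)$.

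\emph{Splitting the class of circuits.} Circuits with $\mathsf C_\pi(z)=0$ make no queries, since every $c_i(z)>0$. Such a circuit predicts from the prior alone, and its error is at least the Bayes error $\mathsf E_{\pi_0}(z)$, so it imposes no constraint. For circuits with $\mathsf C_\pi(z)>0$, the condition is equivalent to
\[
\lambda\ \ge\ \frac{\mathsf E_{\pi_0}(z)-\mathsf E_\pi(z)}{\mathsf C_\pi(z)}.
\]
This inequality holds trivially whenever the numerator is nonpositive, so it is equivalent to $\lambda\ge[\mathsf E_{\pi_0}(z)-\mathsf E_\pi(z)]_+/\mathsf C_\pi(z)$. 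Requiring it for all such $\pi$ is exactly $\lambda\ge\lambda_{\mathrm{LB}}(z)$. Both directions of the ``if and only if'' follow from this chain of equivalences.

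\emph{The supremum.} The set of circuits is finite for a fixed $z$: $\mathcal H$ is finite, and by \Cref{prop:sequential1} we may take circuits to be sequential, so there are finitely many deterministic query/predict trees. Hence the supremum is a maximum. Even if it were not attained, the condition ``for all $\pi$'' is still exactly $\lambda\ge\sup$, so no subtlety arises.

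\emph{Sufficient condition.} Any circuit with $\mathsf C_\pi(z)>0$ queries at least one instrument on every branch. The first query is determined by $z$ alone, since the history is empty, so $\mathsf C_\pi(z)\ge c_{\min}(z)$. Also $\mathsf E_{\pi_0}(z)=\min\{q,1-q\}\le 1/2$ and $\mathsf E_\pi(z)\ge0$, so the numerator is at most $1/2$. Therefore $\lambda_{\mathrm{LB}}(z)\le \frac{1}{2c_{\min}(z)}$, and any $\lambda\ge 1/(2c_{\min}(z))$ makes $\pi_0$ optimal.

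The main point needing care is the claim that a zero-cost circuit cannot beat $\pi_0$, together with the cost lower bound $\mathsf C_\pi(z)\ge c_{\min}(z)$. Both rely on the fact that in \Cref{def:circuit1} the first action $A_1=\pi_{\mathsf{query},0}(z,\varnothing)$ is a deterministic function of $z$. So either nothing is ever queried, or something is always queried, and there is no partial-probability query.
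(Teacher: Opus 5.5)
Your proof is correct and follows the same route as the paper. You rewrite optimality of $\pi_0$ as $\mathsf E_{\pi_0}(z)\le\mathsf E_\pi(z)+\lambda\mathsf C_\pi(z)$ for all $\pi$, rearrange over positive-cost circuits to get $\lambda\ge\lambda_{\mathrm{LB}}(z)$, and bound $\lambda_{\mathrm{LB}}(z)$ via $\mathsf E_{\pi_0}(z)\le 1/2$, $\mathsf E_\pi(z)\ge 0$, $\mathsf C_\pi(z)\ge c_{\min}(z)$, while also spelling out steps the paper leaves implicit: that other zero-cost circuits cannot beat $\pi_0$, why the positive part is harmless, and why the deterministic first query $A_1$ forces $\mathsf C_\pi(z)\ge c_{\min}(z)$.
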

Proposition~\ref{prop:zero-cost} states that for all $\lambda$ exceeding a certain threshold, it is optimal to make no queries. At the other endpoint, $\lambda=0$, the circuit uses \textit{all} measurements in $\mathcal H$ to maximize accuracy, and its minimum error is the Bayes error after observing the full instrument pool. Let $\boldsymbol X^{(\mathcal H)}$ denote the measurements obtained by querying every instrument in $\mathcal H$.

\begin{proposition}[Minimum error for full-cost circuit]
\label{prop:full-cost}
For a prompt $z$ and instrument pool $\mathcal H$, the minimum achievable error is
\begin{align}
\mathsf E_{\min}(z,\mathcal H)
&=
\mathbb E_{\boldsymbol X^{(\mathcal H)}\mid Z=z}
\left[
\min_{y\in\{0,1\}}
P_D\!\left(
Y=y
\,\middle|\,
z,\boldsymbol X^{(\mathcal H)}
\right)
\right]
\end{align}
\end{proposition}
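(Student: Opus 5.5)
The plan is a two-sided argument. First I show that no circuit on prompt $z$ can have error below the stated quantity. Then I exhibit a circuit that attains it, namely the full-cost circuit that queries everything and predicts by the Bayes rule.

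\textbf{Lower bound.} Fix any circuit $\pi$ in $\Pi$. By \Cref{prop:sequential1} it is enough to treat sequential circuits, although the argument does not actually need this. The plan is a coupling: realize all measurements $\boldsymbol X^{(\mathcal H)}$ jointly under $P_D(\cdot\mid z)$. This is legitimate under \Cref{ass:conditional-independence}, since the queried outcomes are then just a sub-collection of $\boldsymbol X^{(\mathcal H)}$.

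The key observation concerns what $\pi$ can depend on. Its terminal history $h$, and hence its prediction $\hat Y_\pi(z)=\pi_{\mathsf{predict}}(z,h)$, is a deterministic function of $(z,\boldsymbol X^{(\mathcal H)})$. This holds because the query policy at each round depends only on $z$ and on previously observed coordinates. So $\hat Y_\pi(z)=g(\boldsymbol X^{(\mathcal H)})$ for some map $g$.

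Conditioning on $\boldsymbol X^{(\mathcal H)}$ then gives
\[
\mathsf E_\pi(z)=\mathbb E_{\boldsymbol X^{(\mathcal H)}\mid Z=z}\bigl[P_D(Y\neq g(\boldsymbol X^{(\mathcal H)})\mid z,\boldsymbol X^{(\mathcal H)})\bigr]\ \ge\ \mathbb E\bigl[\min_{y}P_D(Y=y\mid z,\boldsymbol X^{(\mathcal H)})\bigr].
\]
The inequality uses the pointwise fact $P_D(Y\neq g\mid\cdot)=1-P_D(Y=g\mid\cdot)\ge 1-\max_y P_D(Y=y\mid\cdot)=\min_y P_D(Y=y\mid\cdot)$, valid because the label is binary.

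\textbf{Achievability.} Take the circuit with $A_1=\mathcal H$, or equivalently its sequential counterpart querying every instrument in any fixed order, followed by the Bayes decision rule at the terminal state. Its conditional error given $\boldsymbol X^{(\mathcal H)}$ is exactly $e_z(s)$ with $\mathcal M(s)=\mathcal H$, as in \Cref{eq:bayeserror}. Averaging over $\boldsymbol X^{(\mathcal H)}\mid Z=z$ recovers the stated expression with equality.

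For consistency I would also note the link to the dynamic program. At $\lambda=0$, the Bellman recursion \Cref{eq:bellman} yields $V_{0,z}(\varnothing)=\mathsf E_{\min}(z,\mathcal H)$. The reason is that $e_z$ is concave in the posterior and the posterior is a martingale, so querying never increases expected Bayes error: $e_z(s)\ge\sum_x Q(x\mid z,s)\,e_z(s\cup(i,x))$ by Jensen's inequality. Backward induction then shows that querying everything is optimal.

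\textbf{Main obstacle.} The step needing the most care is the coupling claim, that an adaptive circuit's output is measurable with respect to the full measurement vector. This requires the measurement model used by the circuit to coincide with $P_D$. In particular, the transition probabilities in \Cref{eq:bellman} should be the posterior predictive $P_D(X^{(i)}=x\mid z,s)$. Under \Cref{ass:conditional-independence} with the label determined by $z$ this reduces to $Q^{[i]}_{X|Z}(x\mid z)$, so the coupling is consistent. I would state this identification explicitly before running the argument.
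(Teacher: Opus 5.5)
Your argument is correct. The paper gives no standalone proof of this proposition. It is asserted in the main text as the Bayes error after observing the full pool, and the closest thing to a proof is the remark, in the proof of \Cref{acc-gain}, that additional measurements can always be ignored. Your two-sided argument is the natural rigorous form of that remark. The converse follows because any adaptive circuit's prediction is a function of $(z,\boldsymbol X^{(\mathcal H)})$, so it cannot beat the pointwise Bayes rule given $\boldsymbol X^{(\mathcal H)}$. Achievability follows from the query-all circuit with the Bayes decision. The $\lambda=0$ Bellman/Jensen check is correct but optional.

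One caveat on your last paragraph. You reconcile the posterior predictive with $Q^{[i]}_{X\mid Z}(x\mid z)$ by taking the label to be determined by $z$, but this only works in a degenerate case. If $P_D$ is the true law and $y(z)$ is a function of $z$, then $P_D(Y=y\mid z,\cdot)\in\{0,1\}$, and the statement collapses to $\mathsf E_{\min}(z,\mathcal H)=0$.

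It is cleaner to say that your proof needs only one joint law $P_D(Y,\boldsymbol X^{(\mathcal H)}\mid z)$. That law both generates the measurements, via its posterior predictive $P_D(X^{(i)}\mid z,s)$, and is used to evaluate the error. With that framing, neither \Cref{ass:conditional-independence} nor \Cref{prop:sequential1} is needed, since your coupling works for any joint law. This also matches the paper's implementation, where the transitions are the state-dependent $\nu_\theta^{(i)}(\phi(z),s)$ rather than a state-independent kernel.
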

As a result, $\mathsf E_{\min}(z,\mathcal H)$ is a lower bound on the error of \textit{any} circuit constructed from $\mathcal H$ for any $\lambda$.  This limit allows us to answer how much performance can improve when another instrument becomes available. Denote this instrument by $i_{\mathsf{new}}$, and let the corresponding distribution generating observations be $Q^{[i_{\mathsf{new}}]}_{X|Z}(\cdot|z)$. Then, the new pool is denoted by $\mathcal{H}_{\mathsf{new}}=\mathcal{H}\cup i_{\mathsf{new}}$. The next result characterizes the value of adding a new instrument to the existing pool.

\begin{proposition}[Gain from adding one extra instrument]
\label{acc-gain}
Over $Z\sim\mathcal P_Z$,
\begin{equation}
\label{mi-ub}
\mathbb E_Z[\mathsf E(Z,\mathcal H) - \mathsf E(Z,\mathcal H_{\mathrm{new}})]
\leq
\sqrt{\frac{1}{2}
I\!\left(
Y;
X^{(i_{\mathrm{new}})}
\,\middle|\,
\boldsymbol X^{(\mathcal H)},Z
\right)
}
\end{equation}
where the mutual information is computed under the joint distribution of
$(Y,\boldsymbol X^{(\mathcal H)},X^{(i_{\mathrm{new}})},Z)$.
\end{proposition}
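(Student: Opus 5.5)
We prove the bound pointwise in the observations and average at the end. Write $\boldsymbol X=\boldsymbol X^{(\mathcal H)}$ and $X'=X^{(i_{\mathrm{new}})}$, and work under the joint law $P_D$ of $(Y,\boldsymbol X,X',Z)$. By \Cref{prop:full-cost}, $\mathsf E(z,\mathcal H)=\mathbb E[\min_y P_D(Y=y\mid z,\boldsymbol X)\mid Z=z]$, and $\mathsf E(z,\mathcal H_{\mathrm{new}})$ has the same form with $(\boldsymbol X,X')$ in place of $\boldsymbol X$. With $q=P_D(Y=1\mid z,\boldsymbol x)$ and $q'=P_D(Y=1\mid z,\boldsymbol x,x')$, the tower property gives $q=\mathbb E[q'\mid z,\boldsymbol x]$. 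The gain therefore equals
\begin{equation*}
\mathbb E\bigl[\min\{q,1-q\}-\min\{q',1-q'\}\bigr].
\end{equation*}
Since $q'$ averages to $q$ and $\min\{\cdot,1-\cdot\}$ is concave, this gain is nonnegative. This is the usual "information cannot hurt" statement, and it lets us avoid absolute values later.

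\textbf{Step 1: reduce to a total-variation bound.} The map $u\mapsto\min\{u,1-u\}$ is $1$-Lipschitz, so
\begin{equation*}
\min\{q,1-q\}-\min\{q',1-q'\}\le |q-q'| .
\end{equation*}
For a binary label, $|q-q'|$ is exactly the total variation distance between the laws $P_{Y\mid z,\boldsymbol x,x'}$ and $P_{Y\mid z,\boldsymbol x}$. So the expected gain is at most $\mathbb E\,\mathrm{TV}(P_{Y\mid Z,\boldsymbol X,X'},P_{Y\mid Z,\boldsymbol X})$, with the expectation over $(Z,\boldsymbol X,X')$.

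\textbf{Step 2: Pinsker and Jensen.} Pinsker's inequality (with KL in nats) gives, for each realization,
\begin{equation*}
\mathrm{TV}\le\sqrt{\tfrac12 D\bigl(P_{Y\mid z,\boldsymbol x,x'}\,\big\|\,P_{Y\mid z,\boldsymbol x}\bigr)}.
\end{equation*}
Concavity of the square root (Jensen) lets us move the expectation inside the root. The averaged divergence $\mathbb E_{Z,\boldsymbol X,X'}D(P_{Y\mid Z,\boldsymbol X,X'}\|P_{Y\mid Z,\boldsymbol X})$ is by definition $I(Y;X'\mid\boldsymbol X,Z)$. Combining Steps 1 and 2 gives \Cref{mi-ub}.

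\textbf{Where care is needed.} The main obstacle is not the inequalities but making sure every quantity is taken under one joint distribution. The posteriors in \Cref{prop:full-cost}, the averaging over $\boldsymbol X,X'\mid Z$ (using the instrument laws and \Cref{ass:conditional-independence}), and the conditional mutual information must all be computed under the same $P_D$. Only then does the tower identity $q=\mathbb E[q'\mid z,\boldsymbol x]$ hold and the averaged KL coincide with the stated mutual information. The statement stipulates this common joint law, so I would state it explicitly at the start. The remaining steps are routine; note that Pinsker requires mutual information measured in nats for the constant $\tfrac12$.
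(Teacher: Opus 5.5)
Your proof is correct and follows essentially the same route as the paper: the $1$-Lipschitz property of the binary Bayes error reduces the gain to $\mathbb E|q-q'|$, which is a total-variation distance, and Pinsker plus Jensen then give $\sqrt{\tfrac12 I(Y;X'\mid \boldsymbol X,Z)}$. The only cosmetic difference is that the paper, for fixed $(z,\boldsymbol x)$, identifies $\mathbb E[|q'-q|\mid z,\boldsymbol x]$ with $\operatorname{TV}(P_D(Y,X'\mid z,\boldsymbol x),\,P_D(Y\mid z,\boldsymbol x)P_D(X'\mid z,\boldsymbol x))$ and applies Pinsker to this joint-versus-product pair before two Jensen steps, whereas you apply Pinsker pointwise to the Bernoulli posteriors and recognize the averaged divergence as the conditional mutual information in a single Jensen step.
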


\begin{tcolorbox}[
    colback=blue!8,
    colframe=blue!8,
    boxrule=0pt,
    arc=0pt,
    left=2pt,right=2pt,top=2pt,bottom=2pt
]
\textbf{What makes a new instrument useful?}
Its value is not determined by the instrument's accuracy but by the information it provides about the target beyond the measurements already available.
\end{tcolorbox}
\paragraph{Practical significance of Proposition~\ref{acc-gain}.} To generate the circuit for any prompt $z$, we use a labeled training set to estimate the joint distribution of \((Y,\boldsymbol{X}^{(\mathcal H)})\), conditional on \(z\). As the pool size \(|\mathcal H|\) increases, this joint distribution becomes more difficult to estimate, which can degrade the performance of the resulting dynamic program. At the same time, a larger pool provides more opportunities to obtain complementary measurements, enabling more powerful circuits for a wider range of prompts. Pool construction must therefore balance the additional information supplied by new instruments against the statistical cost of estimating their joint behavior. \Cref{acc-gain} suggests a greedy strategy for constructing a task-specific instrument pool. Given a candidate pool of $N$ instruments, we can initialize $\mathcal H$ with a low-cost instrument and iteratively add the candidate that maximizes the conditional-information bound in \Cref{mi-ub}, until the additional gains are non-significant. The resulting pool can be used in the dynamic program.\footnote{The mutual information term can be estimated using an \textsc{LMCircuit} model (\Cref{sec:belief-estimation}). If fitting the model is impractical due to data constraints, one can estimate a lower-order proxy, e.g.,  conditional Pearson correlation.} \Cref{fig:adding} shows that greedy selection by conditional information reaches the full-pool frontier with 2–3 instruments on the four tasks shown, and dominates accuracy-first, cheapest-first, and random ordering methods of building the pool.

\subsection{Scaling Laws for Classification Circuits} \label{sec:scaling}

In this section, we characterize the resulting relationship between optimal error and expected query cost through its \textit{error--cost scaling law}. In particular, we study how the error probability decreases with query cost as $\lambda\to0$ and consider its implications in a given population of prompts.

For each instrument $i\in\mathcal H$, its accuracy on prompt $z$ is $Q^{[i]}_{X\mid Z}(y(z)\mid z)$, which we denote by $a_i(z)$. For the asymptotic analysis, we make the following assumptions: (1) each instrument can be queried arbitrarily many times, (2) every instrument is at least as accurate as random guessing on every prompt, i.e., $a_i(z)\in [1/2, 1)$ for all $i\in\mathcal H$ and $z\in\mathcal Z$, and (3) the response distribution is symmetric with respect to the ground-truth label: conditional on $H_y:y(z)=y$, an independent call to instrument $i$ returns the correct label $y$ with probability $a_i(z)$. While the optimal query and stopping decisions continue to satisfy the Bellman equation in \Cref{eq:bellman}, they can no longer be realized using backward induction in this asymptotic regime.

For a fixed prompt $z$, it is useful to view the sequential decision problem as a hypothesis test between $H_0:\ y(z)=0$ and $H_1:\ y(z)=1$. Each collected measurement provides information about which label is correct. A measurement from instrument $i$ is distributed as $\operatorname{Ber}(1-a_i(z))$ under $H_0$ and $\operatorname{Ber}(a_i(z))$ under $H_1$. Then, the KL-divergence $d_i(z)
=D_{\mathrm{KL}}\!\left(
    \operatorname{Ber}(a_i(z))
    \,\middle\|\,
    \operatorname{Ber}(1-a_i(z))
\right)$ quantifies the expected information provided by one measurement for distinguishing the two hypotheses.
Because instruments differ in cost, we normalize this quantity and define the information rate of instrument $i$ to be: $I_i(z)\coloneq\frac{d_i(z)}{c_i(z)}$ and maximum information rate of the model pool to be: $
I(z)\coloneq\max_{i\in\mathcal H} I_i(z)$.
Thus $I(z)$ measures the largest information rate available for classifying prompt $z$.

For a fixed prompt $z$, let $\mathsf E_{\lambda}(z)$ and $\mathsf C_\lambda(z)$ denote the error probability and expected query cost of the optimal circuit for parameter $\lambda$. As $\lambda$ approaches zero, the circuit places progressively less weight on query cost and can spend more to reduce its error. For $0<I(z)<\infty$, the optimal error approaches zero, but doing so requires an expected cost that grows without bound. This raises the question of how quickly error decreases as more cost is incurred. The following result shows that the decay is exponential with cost, at rate $I(z)$.

\begin{theorem}[Scaling law for a fixed prompt]
\label{thm:adaptive-rate}
For every prompt $z$ with $0<I(z)<\infty$, we have that
\begin{equation}
    \lim_{\lambda\downarrow0}
    \frac{-\log\mathsf E_\lambda(z)}
         {\mathsf C_\lambda(z)}
    =I(z).
\end{equation}
\end{theorem}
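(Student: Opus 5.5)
Fix the prompt $z$. The plan is to prove a matching achievability bound and converse. Write $p=q_z(\varnothing)$ for the prior, let $i^\star\in\arg\max_i I_i(z)$, and note that $d_{i^\star}(z)>0$. Each observation from instrument $i$ changes the posterior log-likelihood ratio $L$ by $\pm\ell_i$, where $\ell_i=\log\frac{a_i}{1-a_i}$. The drift of this increment under the true hypothesis is $d_i(z)$, because
\[
\mathbb E\,\Delta L = (2a_i-1)\ell_i = D_{\mathrm{KL}}(\operatorname{Ber}(a_i)\|\operatorname{Ber}(1-a_i)).
\]
So $I_i(z)$ is the drift of $L$ per unit of cost. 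The goal is to show that, as $\lambda\downarrow 0$,
\[
\mathsf E_\lambda(z)=\exp\bigl(-I(z)\,\mathsf C_\lambda(z)\,(1+o(1))\bigr).
\]
I will do this by showing $\mathsf C_\lambda\sim\frac{\log(1/\lambda)}{I(z)}$ and $-\log\mathsf E_\lambda\sim\log(1/\lambda)$.

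\textbf{Achievability.} Take the SPRT that queries only $i^\star$ and stops once $|L|\ge B$, with $B=\log(1/\lambda)$. Standard Wald arguments give error at most $e^{-B}$ up to a prior-dependent constant, and expected number of queries $B/d_{i^\star}+O(1)$. The overshoot is bounded by $\ell_{i^\star}$, because increments are bounded, so the $O(1)$ is uniform. Hence the cost is $B/I(z)+O(1)$, and the objective satisfies
\[
\mathcal L_\lambda\le \lambda\Bigl(1+\tfrac{\log(1/\lambda)}{I(z)}+O(1)\Bigr)=O\bigl(\lambda\log(1/\lambda)\bigr).
\]

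\textbf{Converse.} For the optimal policy, first use $\mathsf E_\lambda\le\mathcal L_\lambda$, which shows that $\mathsf E_\lambda\to0$. Next, apply the data-processing / change-of-measure inequality for sequential tests (Wald's identity with an adaptive, cost-weighted choice of instruments). Let $N_i$ be the number of queries to instrument $i$. Then
\[
\sum_i \mathbb E_{y}[N_i]\,d_i\;\ge\; d\bigl(\alpha_y\,\|\,1-\alpha_{1-y}\bigr)\;\ge\;(1-o(1))\log\frac{1}{\mathsf E_\lambda},
\]
where $\alpha_y$ is the conditional error under $H_y$ and both are at most $\mathsf E_\lambda/\min(p,1-p)$. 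Since $\sum_i \mathbb E[N_i]d_i\le I(z)\sum_i\mathbb E[N_i]c_i$, taking the prior mixture gives
\[
I(z)\,\mathsf C_\lambda\ge(1-o(1))\log(1/\mathsf E_\lambda).
\]
This yields $\limsup \frac{-\log\mathsf E_\lambda}{\mathsf C_\lambda}\le I(z)$.

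\textbf{Matching the liminf.} This step is the main obstacle: it requires showing that the optimal policy does not waste cost. I would argue it in three steps.
\begin{enumerate}
\item From $\lambda\mathsf C_\lambda\le\mathcal L_\lambda=O(\lambda\log(1/\lambda))$ we get $\mathsf C_\lambda\le \frac{\log(1/\lambda)}{I(z)}(1+o(1))$. This needs the constant in front of $\lambda\log(1/\lambda)$ to be exactly $1/I(z)$. To get it, compare $\mathcal L_\lambda$ with the converse lower bound
\[
\mathcal L_\lambda\ge \mathsf E+\lambda\,\frac{\log(1/\mathsf E)}{I}(1-o(1)).
\]
Minimizing this over $\mathsf E$ gives $\mathcal L_\lambda\ge\frac{\lambda\log(1/\lambda)}{I}(1-o(1))$.
\item Combining the two bounds then forces $\mathsf E_\lambda\le\lambda\log(1/\lambda)\cdot O(1)$, hence $\log(1/\mathsf E_\lambda)\ge(1-o(1))\log(1/\lambda)$.
\item This requires $\lambda\mathsf C_\lambda\ge\frac{\lambda\log(1/\lambda)}{I}(1-o(1))$ together with the upper bound above, which pins down $\mathsf C_\lambda$.
\end{enumerate}
Dividing then gives
\[
\liminf\frac{-\log\mathsf E_\lambda}{\mathsf C_\lambda}\ge \frac{\log(1/\lambda)}{\log(1/\lambda)/I}=I(z).
\]
The delicate points are the uniformity of the $o(1)$ terms in the binary-divergence bound when $\alpha_0\ne\alpha_1$, and ruling out a degenerate optimal policy that stops immediately. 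The latter cannot happen for small $\lambda$, since stopping immediately gives $\mathcal L_\lambda=\min(p,1-p)>0$, which exceeds the $O(\lambda\log(1/\lambda))$ achievable value.
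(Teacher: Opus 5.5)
Your proposal is correct and follows the paper's route. Achievability comes from an SPRT on the maximal-rate instrument $i^\star$, the converse from Wald's identity for adaptively chosen instruments plus a KL-to-error bound, and the two are combined through optimality of $\pi^\star_\lambda$ against the SPRT.

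The differences are minor. The paper uses Bretagnolle--Huber, $D_{\mathrm{KL}}(P_y^\pi\|P_{1-y}^\pi)\ge\log\frac{\min\{q,1-q\}}{2\mathsf E(\pi)}$, in place of your binary-divergence bound. That gives the sharper statements $\mathsf E_\lambda=\Theta(\lambda)$ and $-\log\mathsf E_\lambda=I\mathsf C_\lambda+O(1)$, rather than your first-order $(1\pm o(1))$ asymptotics, though yours suffice for the limit. Also, the exact constant $1/I(z)$ needed in your Step 1 comes directly from your achievability bound, not from the converse, since the converse only lower-bounds $\mathcal L_\lambda$.
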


The information rate also determines how the optimal
circuit allocates its queries.

\begin{tcolorbox}[
    colback=blue!8,
    colframe=blue!8,
    boxrule=0pt,
    arc=0pt,
    left=2pt,right=2pt,top=2pt,bottom=2pt
]
\textbf{How does the optimal circuit allocate its queries?}
As $\lambda\downarrow0$, the optimal circuit spends nearly all of its expected cost on instruments with the largest information rate, $I_i(z)=I(z)$; see \Cref{prop:query-allocation}.
\end{tcolorbox}

For a fixed $\lambda$, the optimal next query can still depend on the measurements collected so far and on its cost, so the optimal circuit may query several instruments. However, as $\lambda\downarrow0$, the fraction of expected query cost spent on instruments with $I_i(z)<I(z)$ tends to zero.

This result characterizes the optimal circuit for a fixed prompt as $\lambda\downarrow0$. The information rate $I(z)$ determines the rate of error decay. It then offers a natural notion of prompt ``difficulty'': a prompt with smaller $I(z)$ requires more query cost to attain the same error level.

Now consider a population of prompts $Z\sim\mathcal P_Z$, and let $\mathsf E_\lambda$ and $\mathsf C_\lambda$ denote the average error probability and expected query cost of the optimal circuit. While \Cref{thm:adaptive-rate} establishes exponential error decay for each prompt with $I(z)>0$, the population may contain prompts on which every instrument performs only slightly better than random guessing. Thus, information rates can be arbitrarily close to zero across the population. The population error--cost tradeoff therefore depends on the
distribution of these rates, as characterized by the following theorem.

\begin{theorem}[Scaling law across prompts]
\label{thm:dataset-scaling}
Assume that, for almost every prompt $z$, the prior Bayes
error $\mathsf E_{\pi_0}(z)$ is bounded below by a uniform
positive constant, query costs are uniformly bounded above, and $0<I(z)<\infty$. Then
\begin{equation}
\label{eq:population-information-rate}
\lim_{\lambda\downarrow0}
\frac{-\log\mathsf E_\lambda}{\mathsf C_\lambda}
=
\left(\mathbb E\left[\frac{1}{I(Z)}\right]\right)^{-1},
\end{equation}
where the right-hand side is zero if the expectation is infinite.
\end{theorem}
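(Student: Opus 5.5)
The plan is to reduce the population statement to the per-prompt law of \Cref{thm:adaptive-rate} and then optimize how a fixed global Lagrange multiplier $\lambda$ allocates cost across prompts. For each prompt, write the optimal per-prompt pair as $(\mathsf E_\lambda(z),\mathsf C_\lambda(z))$. Since $\mathcal L_\lambda$ decomposes as $\mathbb E_Z[\mathsf E_\pi(Z)+\lambda\mathsf C_\pi(Z)]$, the population optimum is attained prompt by prompt. Hence $\mathsf E_\lambda=\mathbb E[\mathsf E_\lambda(Z)]$ and $\mathsf C_\lambda=\mathbb E[\mathsf C_\lambda(Z)]$. The first step is to sharpen \Cref{thm:adaptive-rate} into the quantitative asymptotics that come out of its proof (sequential probability ratio test with thresholds at $\log(1/\lambda)$):
\[
\mathsf C_\lambda(z)=\frac{\log(1/\lambda)}{I(z)}(1+o(1)),\qquad -\log \mathsf E_\lambda(z)=\log(1/\lambda)(1+o(1)).
\]
The heuristic is that the marginal value of cost equals $\lambda$, and $\mathsf E\approx e^{-I\mathsf C}$ gives $I e^{-I\mathsf C}\approx\lambda$. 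The key observation is that the error exponent is common to all prompts, while cost scales like $1/I(z)$.

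\textbf{Upper bound on the ratio.} By Jensen (convexity of $-\log$), $-\log\mathsf E_\lambda\le \mathbb E[-\log\mathsf E_\lambda(Z)]$. Alternatively, I would use a Chernoff/data-processing lower bound on any prompt's error in terms of its cost: $\mathsf E_\pi(z)\ge \mathsf E_{\pi_0}(z)\,e^{-I(z)\mathsf C_\pi(z)(1+o(1))}$ minus lower-order terms. This is where the uniform lower bound on the prior error enters. Then, for any circuit with average cost $C$, the error is at least $\epsilon\,\mathbb E[e^{-I(Z)\mathsf C(Z)}]$. Minimizing this subject to $\mathbb E[\mathsf C(Z)]=C$ is a convex program. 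Its value is at least $\exp(-C/\mathbb E[1/I])$ up to subexponential factors: by Jensen applied with weights, the best allocation makes $I(z)\mathsf C(z)$ constant, namely $\mathsf C(z)=C/(I(z)\mathbb E[1/I])$. This gives $\limsup\le (\mathbb E[1/I])^{-1}$, and the ratio tends to $0$ when $\mathbb E[1/I]=\infty$.

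\textbf{Lower bound on the ratio.} I would use the per-prompt asymptotics together with dominated/monotone convergence. On the event $\{I(Z)\ge\delta\}$ the $o(1)$ terms are uniform, and I would argue this uniformity from explicit SPRT bounds, which depend only on $a_i,c_i$ and the bounded costs. This gives $\mathbb E[\mathsf C_\lambda(Z)]\le \log(1/\lambda)\,\mathbb E[1/I(Z)](1+o(1))$ and $\mathsf E_\lambda\le \lambda^{1-o(1)}$. Prompts with small $I$ can be handled either by truncation or, when $\mathbb E[1/I]=\infty$, trivially, since the target is $0$.

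The main obstacle is uniformity. Theorem~\ref{thm:adaptive-rate} is pointwise, but the argument needs the $o(1)$ terms to be controlled uniformly over prompts, especially as $I(z)\to0$. For the upper bound, I would rely on a nonasymptotic change-of-measure inequality: any test with expected log-likelihood budget $D$ has error at least $\epsilon e^{-D}/2$ up to a Markov-type correction. This avoids per-prompt limits, and Wald's identity gives $D\le I(z)\mathsf C(z)$. For the lower bound, I would truncate at $I\ge\delta$, apply uniform SPRT bounds there, let $\lambda\to0$, and then let $\delta\to0$ via monotone convergence of $\mathbb E[\mathbf 1\{I\ge\delta\}/I]$.
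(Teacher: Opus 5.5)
Your proposal is correct in outline. Its achievability half matches the paper's: compare each prompt's optimum with the threshold test at $B=\log(1/\lambda)$, then integrate.

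The uniformity you worry about is settled by the overshoot bound $\ell_{i^\star}(z)\le 2/e+c_{\max}I(z)$. It gives $\mathsf E_\lambda(z)+\lambda\mathsf C_\lambda(z)\le\lambda\bigl(1+c_{\max}+(\log(1/\lambda)+2/e)/I(z)\bigr)$ for \emph{every} prompt. Integrating this yields both of your targets when $\mathbb E[1/I]<\infty$. So no truncation is needed in this half, and truncation by itself could not control the error of small-$I$ prompts.

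The genuine difference is the converse. The paper works through separate asymptotics. Pointwise limits plus Fatou/dominated convergence give $\mathsf C_\lambda/\log(1/\lambda)\to\mathbb E[1/I]$. The two-sided estimate $\mathsf E_\lambda(z)\asymp\lambda/I(z)$ (its $x-\log x$ argument) on a window $\{r_1\le I(Z)\le r_2\}$ of positive probability gives $\mathsf E_\lambda\gtrsim\lambda$. You instead combine the per-prompt Bretagnolle--Huber bound $\mathsf E_\pi(z)\ge(m_0/2)e^{-I(z)\mathsf C_\pi(z)}$ with a Jensen inequality tilted by weights $w\propto 1/I$.

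Your route buys a converse for every circuit, however it splits cost across prompts, and it needs neither pointwise limits nor the two-sided error estimate. The paper's route instead produces $\mathsf C_\lambda\sim\mathbb E[1/I]\log(1/\lambda)$ and $\mathsf E_\lambda=\Theta(\lambda)$ individually, which it reuses for Corollary~\ref{cor:population-scaling}.

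Two points need tightening.

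\emph{First}, Jensen with weights leaves an entropy term $\mathbb E[w\log w]$. The exact optimizer of your convex program has $I\mathsf C=[\log(I/\mu)]_+$ for a multiplier $\mu$, not a constant. That term can be infinite even when $\mathbb E[1/I]<\infty$, and $w$ is undefined when $\mathbb E[1/I]=\infty$. So use the bounded weights $w\propto I^{-1}\mathbf 1\{I\ge\delta\}$, which give $-\log\mathsf E(\pi)\le\mathsf C(\pi)/\mathbb E[I^{-1}\mathbf 1\{I\ge\delta\}]+K_\delta$. This inequality forces $\mathsf C_\lambda\to\infty$ once $\mathsf E_\lambda\to0$; then let $\delta\downarrow0$.

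\emph{Second}, discard the option $-\log\mathsf E_\lambda\le\mathbb E[-\log\mathsf E_\lambda(Z)]$. It does not bound the ratio without the cost asymptotics. It can also be vacuous when $I(Z)$ is unbounded above: prompts answered by a near-perfect instrument can have $\mathsf E_\lambda(z)$ of order $1-a(z)$, far below $\lambda$. Then $\mathbb E[-\log\mathsf E_\lambda(Z)]$ may be infinite for every $\lambda$.
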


An important consequence is that exponential error decay for almost every prompt need not imply exponential decay with average cost across the population. To characterize the resulting error--cost tradeoff even further, we can consider how the probability of small information rates decreases as they approach zero.

\begin{corollary}[Scaling law across prompts under a tail assumption]
\label{cor:population-scaling}
Under the conditions of \Cref{thm:dataset-scaling}, suppose that,
for some $\alpha>0$,
\[
\Pr(I(Z)\le r)\asymp r^\alpha
\qquad\text{as }r\downarrow0,
\]
where $r>0$\footnote{
The notation $f(t)\asymp g(t)$ as $t\to0$ means that
$f(t)/g(t)$ is bounded above and below by positive
constants for all sufficiently small $t>0$.}. In this case, $\mathbb E[1/I(Z)]$ is finite for $\alpha >1$. Then, as $\lambda\downarrow0$:

\begin{enumerate}[
    label=(\roman*),
    leftmargin=1.7em,
    itemsep=0.2em,
    topsep=0.2em
]
    \item If $\alpha>1$, the error decays exponentially with
    cost, at the rate given in \Cref{thm:dataset-scaling}.

    \item If $\alpha=1$, the error decays at a
    stretched-exponential rate, $-\log\mathsf E_\lambda \asymp\sqrt{\mathsf C_\lambda}.$

    \item If $0<\alpha<1$, the error decays polynomially, $ \mathsf E_\lambda
    \asymp \mathsf C_\lambda^{-\frac{\alpha}{1-\alpha}}$.
\end{enumerate}
\end{corollary}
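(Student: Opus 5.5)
We would not work with $\mathsf E_\lambda$ and $\mathsf C_\lambda$ directly. Instead we would reduce the corollary to explicit per-prompt estimates that hold uniformly in $z$, and then integrate them against the law of $I(Z)$. The refinement of \Cref{thm:adaptive-rate} we need is the following: there are constants $0<k_1\le k_2$, depending only on the uniform bounds in \Cref{thm:dataset-scaling}, such that for all small $\lambda$ and almost every $z$ the following hold.

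(a) If $I(z)\ge k_2\lambda$, then
\[
\mathsf E_\lambda(z)\asymp \lambda/I(z)
\quad\text{and}\quad
\mathsf C_\lambda(z)=\frac{\log(I(z)/\lambda)}{I(z)}\,(1+o(1)) + O\!\left(\frac{1}{I(z)}\right).
\]

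(b) If $I(z)\le k_1\lambda$, then the zero-cost circuit is optimal. Hence $\mathsf C_\lambda(z)=0$ and $\mathsf E_\lambda(z)=\mathsf E_{\pi_0}(z)\ge e_0>0$.

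Part (b) would follow from \Cref{prop:zero-cost}, after bounding $\lambda_{\mathrm{LB}}(z)\lesssim I(z)$. The idea is that the error reduction achievable with cost $\mathsf C$ is at most of order $I(z)\,\mathsf C$. This comes from a data-processing/Pinsker argument on the log-likelihood ratio, using Wald's identity: the expected accumulated divergence equals $\sum_i d_i(z)\,\mathbb E[N_i]\le I(z)\,\mathsf C$. Part (a) has two halves.

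\emph{Upper bound in (a).} We would exhibit an SPRT that queries only an instrument attaining $I(z)$, with thresholds at posterior odds of order $\lambda/I(z)$. Its error is $O(\lambda/I(z))$ and its cost is $\log(I(z)/\lambda)/I(z)+O(1/I(z))$. The overshoot is bounded because $a_i\in[1/2,1)$ and the costs are bounded.

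\emph{Lower bound in (a).} Optimality gives $\mathsf E_\lambda(z)+\lambda\mathsf C_\lambda(z)\le$ the SPRT value. Combining this with the change-of-measure bound $\mathsf E\gtrsim \exp(-I(z)\,\mathsf C(1+o(1)))$ from the proof of \Cref{thm:adaptive-rate} forces both quantities to the stated orders. Prompts with $k_1\lambda<I(z)<k_2\lambda$ have $\mathsf E_\lambda(z)\asymp 1$ and $\mathsf C_\lambda(z)=O(1/\lambda)$, and will be absorbed into the bounds.

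Integrating these estimates gives
\[
\mathsf E_\lambda \asymp \lambda\,\mathbb E\!\left[\tfrac{1}{I(Z)};\,I(Z)>\lambda\right]+\Pr(I(Z)\le\lambda),
\qquad
\mathsf C_\lambda \asymp \mathbb E\!\left[\tfrac{\log(I(Z)/\lambda)}{I(Z)};\,I(Z)>\lambda\right],
\]
up to lower-order terms. We would then evaluate these integrals by writing them as Stieltjes integrals against $F(r)=\Pr(I(Z)\le r)\asymp r^\alpha$ and integrating by parts. This avoids assuming that a density exists.

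\emph{Case $\alpha>1$.} Here $\mathbb E[1/I]<\infty$, and the claim is exactly \Cref{thm:dataset-scaling}.

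\emph{Case $\alpha=1$.} We get $\mathsf E_\lambda\asymp \lambda\log(1/\lambda)+\lambda$, so $-\log\mathsf E_\lambda\asymp\log(1/\lambda)$. We also get $\mathsf C_\lambda\asymp\int_\lambda^1 \frac{\log(r/\lambda)}{r}\,dr\asymp \log^2(1/\lambda)$. Together these give $-\log\mathsf E_\lambda\asymp\sqrt{\mathsf C_\lambda}$.

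\emph{Case $0<\alpha<1$.} The substitution $u=r/\lambda$ gives
\[
\mathsf C_\lambda\asymp \lambda^{\alpha-1}\int_1^{\infty}\log u\;u^{\alpha-2}\,du\asymp\lambda^{-(1-\alpha)},
\qquad
\mathsf E_\lambda\asymp\lambda^\alpha .
\]
Eliminating $\lambda$ yields $\mathsf E_\lambda\asymp \mathsf C_\lambda^{-\alpha/(1-\alpha)}$. In this case the error is dominated by the mass of hopeless prompts with $I(Z)\lesssim\lambda$. The upper bound on $\mathsf C_\lambda$ uses only the head of the distribution, where $I$ is bounded away from $0$; the tail $I\ge 1$ contributes $O(\log(1/\lambda))$, which is negligible.

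The main obstacle is the \emph{uniformity} of the per-prompt estimates (a)–(b) as $I(z)\downarrow0$ with $\lambda$. \Cref{thm:adaptive-rate} is only a pointwise limit, so we cannot simply interchange $\lim_{\lambda\downarrow0}$ with $\mathbb E_Z$. We would first try to make every constant explicit in terms of the lower bound $e_0$ on the prior error, the upper bound on the costs, and the ratio $d_i/c_i$. In particular, the SPRT overshoot and the transition region $I(z)\asymp\lambda$ must be controlled with constants independent of $z$. This is what justifies the $\asymp$ relations after integration. The case $\alpha=1$ is the most delicate, because the logarithmic factors in $\mathsf E_\lambda$ and $\mathsf C_\lambda$ must be tracked precisely rather than up to constants.
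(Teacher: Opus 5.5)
Your plan is workable and differs from the paper's: you prove uniform per-prompt bounds on both error and cost and then integrate against the law of $I(Z)$. However, step (b) is false as stated, and the justification you give for it fails.

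\textbf{Step (b).} A near-random instrument can lower the error \emph{linearly} in its bias, while $I(z)$ is only \emph{quadratic} in it. Take $q_z(\varnothing)=1/2$, which the uniformity assumption allows, and one instrument with $a=\tfrac12+\epsilon$ and cost $c$. Then $d\approx 8\epsilon^2$, so $I(z)\approx 8\epsilon^2/c$. Yet one query, followed by predicting its answer, lowers the error from $1/2$ to $1/2-\epsilon$. Hence $\lambda_{\mathrm{LB}}(z)\ge\epsilon/c\asymp\sqrt{I(z)/c}$. This is much larger than $\lambda$ when $I(z)=k_1\lambda$ and $\lambda$ is small, so $\pi_0$ is not optimal there. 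Pinsker only gives $\mathsf E_{\pi_0}(z)-\mathsf E_\pi(z)\le m\,\mathrm{TV}(P^\pi_1,P^\pi_0)\lesssim\sqrt{I(z)\,\mathsf C_\pi(z)}$, a square-root bound rather than a linear one.

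The repair is to drop (b) and treat every prompt with $I(z)\le k_2\lambda$ as you already treat the transition region. Comparing with $\pi_0$ gives $\lambda\mathsf C_\lambda(z)\le 1/2$ for all $z$. So $I(z)\le k\lambda$ implies $I(z)\mathsf C_\lambda(z)\le k/2$, and \Cref{lem:adaptive-kl} then yields $\mathsf E_\lambda(z)\ge\tfrac{m_0}{2}e^{-k/2}$. This is exactly the argument in \Cref{lem:uniform-fixed}. Note that these prompts then contribute up to $\lambda^{-1}\Pr(I(Z)\lesssim\lambda)\asymp\lambda^{\alpha-1}$ to the cost. For $\alpha<1$ this is the same order as your main term, and it is harmless only because it enters the upper bound.

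\textbf{Two smaller repairs.} First, the overshoot is not bounded, since $\ell_{i^\star}\to\infty$ as $a_{i^\star}\to 1$. What holds is $\ell_{i^\star}\le 2/e+c_{\max}I(z)$, so (a) is uniform only on $k_2\lambda\le I(z)\le r_0$. Prompts with $I(z)>r_0$ need the cruder bounds $\mathsf E_\lambda(z)=O(\lambda)$ and $\mathsf C_\lambda(z)=O(\log(1/\lambda))$, which suffice.

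Second, the delicate case is $\alpha<1$, not $\alpha=1$. Write $g(r)=\log(r/\lambda)/r$. Integrating by parts over $[K\lambda,r_0]$ produces the negative boundary term $-g(K\lambda)F(K\lambda)\asymp-\lambda^{\alpha-1}K^{\alpha-1}\log K$. This has the same order as the main term, so with unknown constants in $F(r)\asymp r^\alpha$ it does not give the lower bound on $\mathsf C_\lambda$. Your substitution $u=r/\lambda$ tacitly treats $dF$ as $r^{\alpha-1}dr$. Use a window instead: choose $M$ so that $\Pr(K\lambda<I(Z)\le MK\lambda)\gtrsim\lambda^\alpha$, and use $\mathsf C_\lambda(z)\ge(\log K-O(1))/(MK\lambda)$ there. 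This is the same device as the paper's choice of $a$ with $ua^\alpha>v$.

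\textbf{Comparison of routes.} With these fixes, the two routes differ in how they obtain the cost. The paper proves only the uniform per-prompt \emph{error} envelope and integrates it to get $\mathsf E_\lambda$. It never estimates per-prompt costs. Instead it recovers $\mathsf C_\lambda$ from $\mathsf E_\lambda$ through the envelope identity $\mathsf C_\lambda=\int_\lambda^\infty\mathsf E_s s^{-2}\,ds-\mathsf E_\lambda/\lambda$ of \Cref{lem:cost-from-error}, plus the two-price comparison $\mathsf C_\lambda\ge(\mathsf E_{a\lambda}-\mathsf E_\lambda)/(a\lambda)$.

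Your route also needs a uniform per-prompt cost estimate, but the same two inequalities supply it at no extra cost. The SPRT comparison and \Cref{lem:adaptive-kl} give $\mathsf C_\lambda(z)=\log(I(z)/\lambda)/I(z)+O(1/I(z))$ uniformly on $k_2\lambda\le I(z)\le r_0$. In exchange, your route shows where the budget goes: for $\alpha<1$ it is spent on prompts with $I(Z)\asymp\lambda$. The paper's route is shorter because it needs only the error asymptotics.
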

The results point to an obvious limit of inference-time computation: even an optimal circuit cannot compensate for a lack of informative measurements. Improving the instruments on the hardest prompts can therefore change the scaling law itself.

\section{Experiments} \label{sec:experiments}

\textbf{Setup.} We consider seven tasks: \textbf{AgentJudgeBench} \citep{verma2026agentjudgebenchmultidifficultybenchmarkevaluating} for judging whether an AI agent's planned tool calls are correct, \textbf{BeaverTails} \citep{beavertrails} for safety classification, \textbf{BIG-Bench Hard} \citep{suzgun2023challenging} for reasoning questions, \textbf{Decision Index} \citep{multimodalart2026decisionindex} for decision tasks, \textbf{HealthBench} \citep{arora2025healthbench} to evaluate whether a text satisfies a physician's rubric, \textbf{LMArena} \citep{chiang2024chatbot} for classifying human rater preference, and \textbf{PolitiCause} \citep{garciacorral2024politicause} for detecting causality in political text. For each task, a training set is used to estimate the \textsc{LMCircuit} (as described in Section~\ref{sec:belief-estimation}) required by the dynamic program. We then vary the parameter $\lambda$ on a validation set to obtain circuits at different operating points and report test accuracy against average query cost. Within each task, costs are normalized relative to the most expensive instrument. Additional details are in \Cref{sec:details}.

We benchmark against (1)~\emph{Predictive routers}, including \textsc{RouteLLM} \citep{ong2025routellm}, \textsc{HybridLLM} \citep{ding2024hybrid}, and \textsc{CARROT} \citep{somerstep2025carrot}; (2)~\emph{Model cascades}, including \textsc{Cascade Routing} \citep{dekoninck2025unified}, \textsc{Consistency Cascade} \citep{yue2024large}, \textsc{Cascaded Selective} \citep{cascaded_selective}, and \textsc{FrugalGPT} \citep{chen2023frugalgpt}; (3)~\emph{Sampling methods}, namely \textsc{Adaptive-Consistency} \citep{aggarwal2023let}; and (4) individual instruments and majority voting.

\subsection{Results}\label{results}
\begin{figure}
    \makebox[\linewidth][c]{\includegraphics[width=1.05\linewidth]{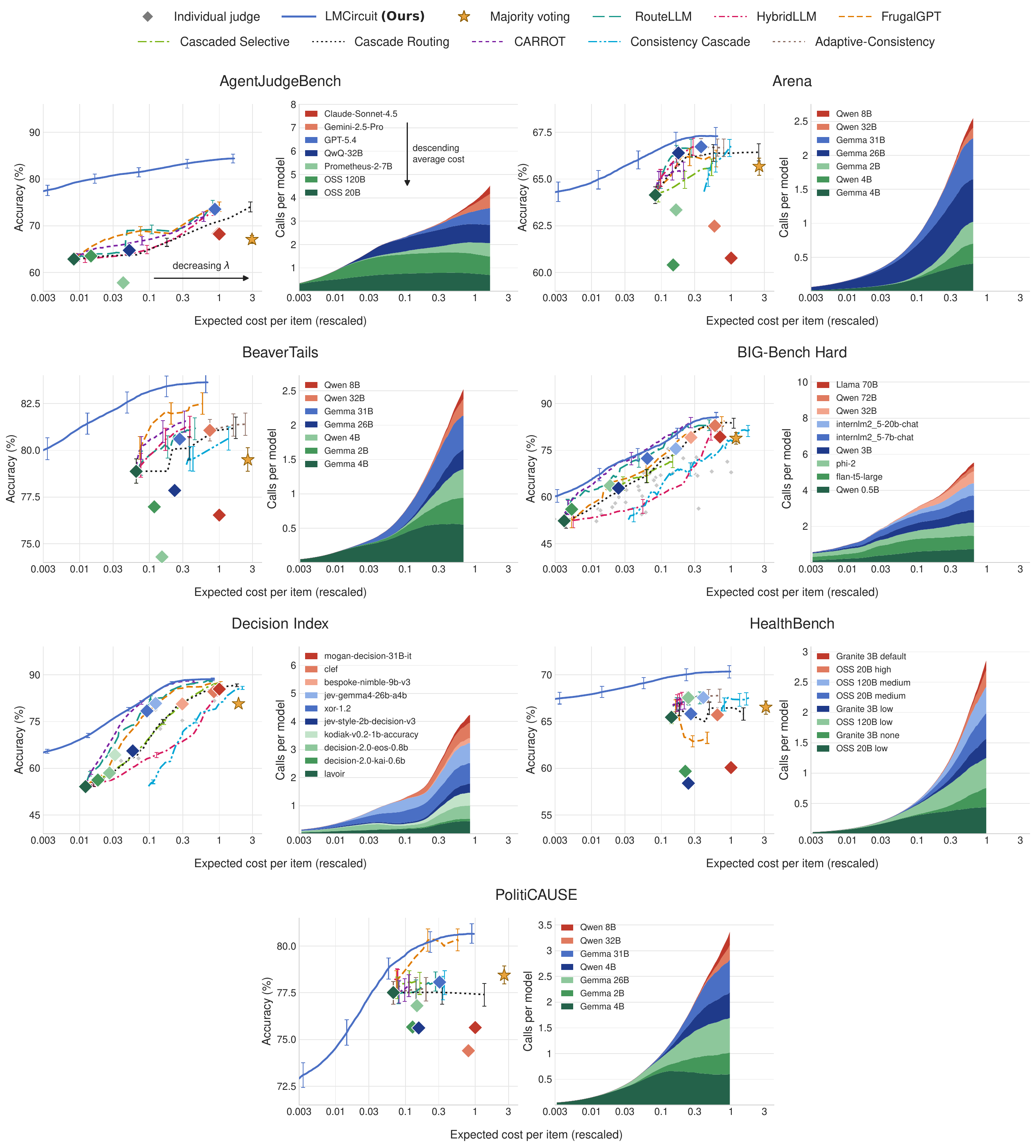}}
    \caption{\footnotesize \textbf{LMCircuit improves the cost-accuracy frontier over routing and cascading baselines.} As $\lambda$ decreases and additional queries become less costly, the circuits make more calls and also change which instruments they use (see lower panel). The allocations differ substantially across tasks, indicating that the value of an instrument depends on the classification problem and on other measurements available to the circuit.}
    \label{fig:frontiers}
\end{figure}
\begin{figure}[t]
    \makebox[\linewidth][c]{\includegraphics[width=1.05\linewidth]{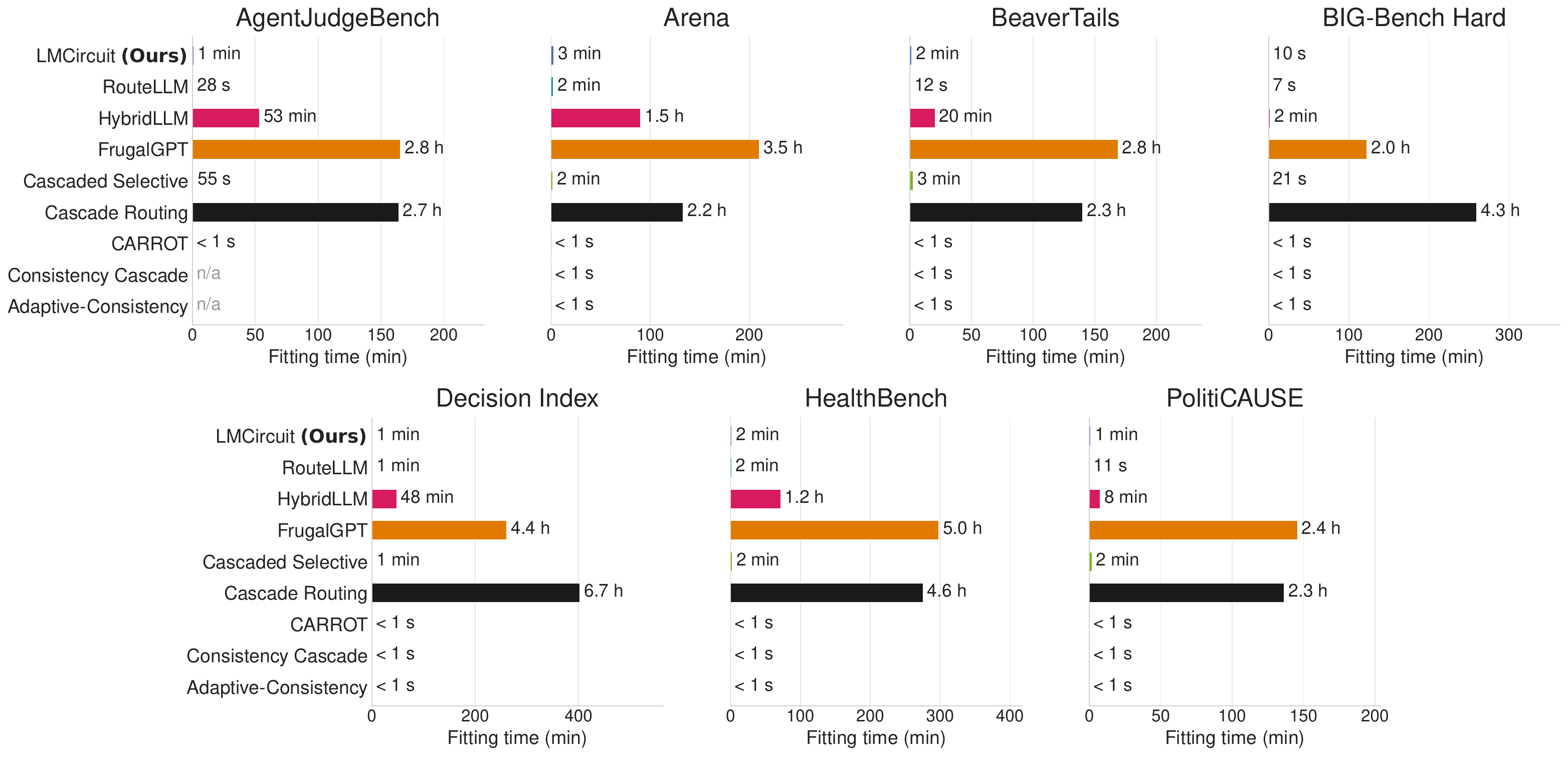}}
    \caption{\footnotesize \textbf{LMCircuit is substantially faster to construct than most routing and cascading baselines.} Across all seven tasks, LMCircuit is trained in 1--3 minutes, while some baselines require minutes to hours.}
    \label{fig:frontiersv2}
\end{figure}

\textbf{\textsc{LMCircuit} improves the cost--accuracy frontier.} Across the seven tasks, the learned circuits attain higher accuracy than every individual instrument and than the routing and cascade baselines at nearly all budgets (\Cref{fig:frontiers}). The choice of instruments that the circuit uses depends on the task and is not predictable from cost or accuracy alone. On LMArena, the two mid-cost Gemma models (26B and 31B) are the most informative per unit cost, and the circuit queries almost exclusively these two for most budgets. On BeaverTails, the cheapest model, Gemma 4B, carries the most information per unit cost (\Cref{fig:inforate}). Hence, the circuit queries it on most prompts and reserves the expensive models for the highest budgets.

\textbf{\textsc{LMCircuit} provides calibrated decisions.} For a given prompt $z$, the distribution estimator outputs an estimate of $\Pr(Y=1 \mid z)$. Based on its confidence, \textsc{LMCircuit} assesses whether the prompt is easy or hard, selects the instrument path accordingly, and updates its posterior belief after each measurement. As expected, under the same cost-accuracy parameter $\lambda$, harder prompts require more model calls (\Cref{fig:calls}). An advantage of our method is that \textsc{LMCircuit}'s prior alone can yield a label prediction \textbf{without querying any instrument}. Therefore, at the low-cost end of the frontier, the average query cost can fall below that of the cheapest individual instrument. \Cref{fig:calibration} further shows that \textsc{LMCircuit} produces better calibrated decisions on unseen data.

\textbf{\textsc{LMCircuit} learns whether more queries help and how reliable each instrument is.} For any given prompt, initially, the circuit predicts its label from the prompt embedding alone, without querying any instrument (Remark~\ref{zero-cost}). We call this the \emph{embedding prediction}. To investigate the impact of LLM calls on top of embedding predictions, we organize unseen prompts by the fraction of recorded calls that return the correct label (\Cref{fig:prompts}). On \emph{noisy} prompts, where this fraction lies between $(1/2,1)$, the circuit at budget 1 improves the accuracy of embedding prediction by 5\%--44\%. On prompts where every call is correct, it matches majority voting (within 1.3\% accuracy difference) while using 38\%--86\% fewer calls.

\textbf{The fitted model of \textsc{LMCircuit} is accurate about the instruments.} The dynamic program relies on the fitted model’s estimates of how each instrument responds on a given prompt. \Cref{fig:model} shows that, without access to test labels, the fitted model predicts 1) each instrument’s accuracy on unseen prompts, 2) the correlations of the instruments’ answers and errors across prompts. Across the seven tasks, the implied correlations between instruments’ answers, and between their errors, are within 4\%--17\% of the measured values, and the implied accuracy of each instrument is within 1.2\%--3.6\% of its measured accuracy.

\textbf{\textsc{LMCircuit} is easy and fast to implement.} Our method is less complex than many baselines (\Cref{fig:frontiersv2}), attaining significantly shorter training times. The estimation step uses a simple neural network together with pretrained embeddings of prompts. We do not require token probabilities or training larger models (such as DistilBERT or DeBERTa) for each instrument as in other methods, and the dynamic program is cheap for moderate pools of models. We note, however, that the dynamic program grows exponentially with the number of models. In many applications, this is less restrictive than it may seem, since a small, well-chosen pool of classifiers is often sufficient (see~\Cref{analysis1} for guidance on model pool selection).

\textbf{Types of noise in measurements:} A measurement instrument is subject to two types of noise: \textbf{(1)} \emph{stochastic noise}, errors that can be reduced through repeated or correlated measurements, and \textbf{(2)} \emph{systematic noise}, errors that are inherent to how the instrument was trained. Our results show that the dynamic program automatically adapts to both. It selects highly correlated models to reduce \textbf{(1)} when queried models are moderately accurate ($\sim$75--80\%, \Cref{fig:discuss1}), and less correlated models to reduce \textbf{(2)} when queried models are less accurate ($\sim$63--66\%, \Cref{fig:discuss2}).

\section{Conclusion}\label{conclusions}

As LLM classifiers become components of larger systems, a practical question is how inference should be organized and allocated among them to maximize accuracy at the lowest expected cost. We take a decision-theoretic view of this problem, using a lightweight probabilistic model to characterize the available classifiers and a dynamic program to decide what to query and when to stop. We are then able to generate circuits for any prompt while offering control on the error--cost tradeoff. One challenge is to extend this to larger pools of general measurement instruments with potential feedback among them. More broadly, our framework can broaden access to high-quality and automated decision-making for practitioners without frontier-model budgets.

\paragraph{Acknowledgments}
This work was supported in part by the National Science Foundation under Grant No: 2506573, CIF-2231707, CIF-2312667, and a Gift by Coefficient Giving. Hadi Khalaf is supported by a Kempner Graduate Research Fellowship which has been made possible by a gift
from the Chan Zuckerberg Initiative Foundation to establish the Kempner Institute for the Study of Natural and Artificial Intelligence at Harvard University. The Flatiron Institute is funded by the Simons Foundation.

\bibliography{refs}
\bibliographystyle{iclr2027_conference}
\appendix
\section{Related Work}

This work is closely related to the following fields: 1) Sequential information acquisition (decision-theoretic foundations), 2) Routing and cascading methods, 3) LLMs as classifiers, judges, and annotators, 4) Theory of LLM orchestration and test-time compute, and 5) Aggregating Noisy Classifiers.

\paragraph{Sequential information acquisition:} Our formulation is based on sequential hypothesis testing. Wald’s sequential probability ratio test \citep{wald1945sequential} established that stopping adaptively, as soon as the accumulated evidence crosses a threshold, requires fewer samples on average than any fixed-sample test with the same error probabilities. Chernoff extended this to the sequential design of experiments \citep{chernoff1959sequential}, where the decision-maker chooses which experiment to run next, and DeGroot cast the problem as Bayesian decision-making in which each experiment is valued by its expected reduction in posterior uncertainty \citep{degroot1962uncertainty}. These fundamental ideas are the principle behind our circuits. Since each LLM query is a costly experiment, the circuit selects the next instrument by the expected improvement in its posterior belief, and it stops once that belief is confident enough that further queries are not worth their cost. The modern literature on active hypothesis testing and controlled sensing \citep{naghshvar2013active, nitinawarat2013controlled, ASHTheterocosts} develops this framework further, deriving information-theoretic lower bounds on expected sample size under multiple hypotheses and heterogeneous action costs, and showing that simple posterior-driven policies attain the optimal error exponents. These works solve a single testing instance in which the observation model is known in advance. In our setting, each prompt is one such instance, but there are countably many possibilities of prompts, and the observation model of each instrument for any given prompt must be learned from data.

\paragraph{Routing and Cascading Methods:} In routing, a prompt is assigned to a single model in a pool of models that is predicted to handle it best. For example, RouteLLM \citep{ong2025routellm} trains binary routers between a strong and a weak model, while HybridLLM \citep{ding2024hybrid} trains a classifier to predict whether a cheap model’s answer will be of acceptable quality and routes to the large model only when it is needed. CARROT \citep{somerstep2025carrot} routes by predicting both expected quality and expected cost per model for a query and minimizing a weighted combination, with a minimax-rate-optimality guarantee for the resulting router. More routing mechanisms \citep{zooter,graphroute,mixllm,Router-R1,IRT,hu2024routerbench} can be found in the survey \citep{moslem2026dynamic}. The circuits proposed in this work generalize routers since a circuit decides after each observation whether the evidence collected so far is sufficient or whether another model should be queried.

In cascading, models are queried one after another in a fixed order, typically from cheapest to most expensive, and the process stops as soon as some confidence or quality criterion is met. FrugalGPT \citep{chen2023frugalgpt} chains LLM APIs from cheap to expensive and uses a learned scorer to decide whether the current answer is reliable enough to stop. Automix \citep{aggarwal2024automix} has a small model self-verify its own answer and uses this verification signal to decide whether to escalate to a larger model. \cite{yue2024large} escalate based on agreement between a weak model’s answers under different reasoning formats, treating disagreement as the trigger for the strong model. \cite{cascaded_selective} use judges of increasing size with a conformal-style confidence threshold that guarantees a target level of agreement with humans. More cascading methods can be found in the survey \citep{moslem2026dynamic}. Our approach differs from these cascades in two ways. First, a cascade queries models in a fixed order and only decides whether to escalate, whereas a circuit chooses which model to query next based on the prompt and the outputs observed so far. Second, a cascade typically returns the output of the model at which it stops, discarding earlier outputs once they have triggered escalation, whereas a circuit combines all observed outputs into a posterior belief on the label and predicts from that belief.

Among cascading methods, TREACLE \citep{treacle} and cascade-routing \citep{dekoninck2025unified} do not follow a fixed order of models. Using reinforcement learning, TREACLE \citep{treacle} learns a policy that selects the next LLM and prompting scheme based on the question and the consistency of responses so far, under a budget shared across a dataset of questions. Our circuits also adapt the next query to past model outputs, but rather than training the policy itself, we learn a probabilistic model of how each instrument responds on a given prompt and compute the optimal query-and-stop policy from it by dynamic programming. The resulting circuit is optimal with respect to the learned model and its error–cost tradeoff can be analyzed, neither of which holds for a policy trained end-to-end by reinforcement learning. Cascade routing \citep{dekoninck2025unified} selects the next model after each output by estimating, for each set of remaining models, the quality of the best output it would produce net of its cost, and finally returns the output with the highest estimated quality. Our circuits also choose the next model from the full pool after each observation, but differ in two respects. The decision to continue is solved exactly by dynamic programming rather than by a one-shot estimate of what the remaining models would yield, and the prediction combines all observed outputs through the posterior on the label rather than selecting one output.

\paragraph{LLMs as classifiers, judges, and annotators.} Prompting turns an LLM into a classifier without training \citep{brown2020languagemodelsfewshotlearners, liu2023pre}, and this has made LLMs default raters in many pipelines. They judge model outputs \citep{judge, liu2023geval, wang2023panda}, either as prompted frontier models or as judges fine-tuned for the task \citep{kim2024prometheus, zhu2025judgelm, autoraters, huang2025empirical}. They provide feedback for alignment \citep{lee2024rlaif}, act as safety classifiers and guardrails \citep{guardrails, wildguard, content-moderation}, and annotate text in the social sciences and in scientific review \citep{Halterman_2025, review}. A parallel line of work measures the reliability of raters. This includes documenting position and verbosity biases, disagreement with humans, and vulnerability to adversarial inputs \citep{tan2025judgebench, thakur2025judging}, as well as low intra-rater reliability, where the same judge returns different labels across runs \citep{haldar2025rating}.

\paragraph{Theory of LLM orchestration and test-time compute:} A growing body of work analyzes how to allocate inference across LLM calls. For a single model, the test-time scaling literature studies how accuracy improves with the number of LLM calls or the amount of verification \citep{snell2025scaling, testime1}. Within this line, \cite{chen2024more} shows that accuracy under majority voting can be non-monotone in the number of calls, \cite{huang2026sample} give sample-complexity bounds for voting and verifier-based selection, and \citet{huang2026optimal} characterize when to stop sampling through Bayesian optimal stopping.

For multiple models, routing has been analyzed as a cost-aware prediction problem \citep{somerstep2025carrot}. Cascades have been studied from several angles: as a sequence of linear programs \citep{dekoninck2025unified}, as a Pandora’s Box search problem \citep{belloni2026pandora}, and as a queueing system with congestion \citep{guo2026congestion}. Information-theoretic criteria have also been proposed for choosing which models to include in an ensemble \citep{melih}. These rest on older foundations in metareasoning \citep{hay2012selectingcomputationstheoryapplications}, adaptive submodularity \citep{golovin2011adaptive}, and learning to defer \citep{mao2023twostage}. Closest to our analysis, concurrent work by \citet{twollms} studies sequential binary classification over heterogeneous LLMs with known accuracies, costs, and latencies. They show that in the high-confidence limit a policy alternating between two hypothesis-specific specialists is asymptotically optimal. Our asymptotic analysis agrees with this: under symmetric accuracies the two specialists coincide, and the optimal circuit concentrates its cost on the instrument with the highest information rate per unit cost. These works, however, analyze a single instance with known model parameters and unlimited repeated queries. Our main focus is on the practical problem of a finite pool of instruments whose reliabilities vary across prompts and must be learned from data, for which we solve the exact dynamic program and show that the resulting circuits improve the cost–accuracy frontier in practice.

\paragraph{Aggregating Noisy Classifiers:}   We draw from work on combining the outputs of multiple noisy classifiers to determine a true label. In the context of LLMs, aggregating the outputs of smaller models can be used to approximate the judgement of frontier models at lower cost \citep{verga2024replacing}. Previous work in crowdsourcing and weak supervision \citep{dawid1979maximum, ratner2016data} learn the error rate of each individual classifier and aggregate them for the distribution over the true label, assuming independence between the errors of each classifier. Ensemble diversity \citep{dietterich2000ensemble} extends this to a machine learning setting by accounting for the correlation of errors between models, which remains an important hurdle for combining the outputs of large language models \citep{goel2025great, kim2025correlated, kohli2026nine}.
In this spirit, our probabilistic model learns a distribution over their responses and prompt embeddings, enabling the dynamic program to account for correlation between model outputs.

\section{Proofs}\label{sec:proofs}

\begin{proof}[Proof of Proposition~\ref{prop:sequential1}]
Consider a circuit that queries a batch $(i_1,\ldots,i_k)$. Replace the batch by the same $k$ queries in any fixed order, without allowing the intermediate responses to affect subsequent queries. After all $k$ responses have been observed, the circuit proceeds exactly as the original circuit would after observing the batch. The resulting circuit therefore has the same measurements, the same final prediction, and the same total query cost. Repeating this replacement for every batch gives the desired sequential circuit.
\end{proof}

\begin{proof}[Proof of Proposition~\ref{prop:zero-cost}]\label{pf-zero-cost}

Consider the objective function for a fixed prompt $z$:
\begin{align}
  \mathcal{L}_\lambda(\pi,z)=\mathsf E_\pi(z)+\lambda \mathsf C_\pi(z)
\end{align}
Then, the zero-cost circuit is optimal when:
\begin{align}
  \mathsf E_{\pi_0}(z)\leq \mathsf E_\pi(z)+\lambda \mathsf C_\pi(z)
\end{align}
since $\mathsf C_{\pi_0}(z)=0$ for the zero-cost circuit. This, together with $\lambda\geq0$ directly results in proving Proposition~\ref{prop:zero-cost}.

Now consider the expression:
\begin{align}
\lambda_{LB}=\sup_{\pi:C_\pi(z)>0}\frac{[\mathsf E_{\pi_0}(z)-\mathsf E_\pi(z)]_+}{\mathsf C_\pi(z)}
\end{align}
The upper bound on $\lambda_{LB}$ is obtained using the following bounds: $\mathsf E_{\pi_0}(z)\leq\frac{1}{2}$, $\mathsf E_\pi(z)\geq0$, and $\mathsf C_\pi(z)\geq \mathsf C_{\mathsf{min}}(z)$.

\end{proof}

\begin{proof}[Proof of \Cref{acc-gain}]
Fix a prompt $z$, and write $X=X^{(\mathcal H)}$ and
$X_{\mathrm{new}}=X^{(i_{\mathrm{new}})}$. Define
\[
\eta_z(x)=P_D(Y=1\mid z,X=x), \qquad
\eta_z(x,u)=P_D(Y=1\mid z,X=x,X_{\mathrm{new}}=u).
\]
Since the Bayes accuracy for a binary label is
$g(q)=\max\{q,1-q\}$, we have
\begin{align*}
&E_{\min}(z,\mathcal H)-E_{\min}(z,\mathcal H_{\mathrm{new}}) =\mathbb E\left[
g(\eta_z(X,X_{\mathrm{new}}))-g(\eta_z(X))
\mid Z=z
\right] \le \mathbb E\left[
|\eta_z(X,X_{\mathrm{new}})-\eta_z(X)|
\mid Z=z
\right],
\end{align*}
where the inequality follows because $g$ is $1$-Lipschitz.

Since $Y$ is binary, conditioning on $X=x$, the inner
expectation equals
\[
\operatorname{TV}\left(
P_D(Y,X_{\mathrm{new}}\mid z,x),
P_D(Y\mid z,x)P_D(X_{\mathrm{new}}\mid z,x)
\right).
\]
By Pinsker's inequality and Jensen's inequality,
\[
E_{\min}(z,\mathcal H)-E_{\min}(z,\mathcal H_{\mathrm{new}})
\le
\sqrt{\frac12 I(Y;X_{\mathrm{new}}\mid X,Z=z)}.
\]
Averaging over $Z$ and applying Jensen's inequality again,
\[
\mathbb E_Z\left[
E_{\min}(Z,\mathcal H)-E_{\min}(Z,\mathcal H_{\mathrm{new}})
\right]
\le
\sqrt{\frac12 I\left(
Y;X^{(i_{\mathrm{new}})}
\mid X^{(\mathcal H)},Z
\right)}.
\]
Nonnegativity follows because additional measurements can
always be ignored.
\end{proof}

\subsection{Proofs for Fixed-Prompt Scaling}
\label{app:fixed-scaling}

We first show that any circuit achieving error probability $\varepsilon$ requires expected cost at least $\log(1/\varepsilon)/I(z)-O(1)$, and then construct a sequential likelihood-ratio test that attains this bound up to an additive constant.

\paragraph{Preliminaries} Fix a prompt $z$ and suppress its dependence in the notation. Write $a_i\coloneqq a_i(z)$ for an instrument's accuracy on prompt $z$ and $c_i\coloneqq c_i(z)$ for its cost. For the asymptotic analysis, each instrument $i\in\mathcal H$ may be queried arbitrarily many times, with independent randomness across calls.

Recall that classification reduces to testing $H_0:y(z)=0$ against $H_1:y(z)=1$, where a query to instrument $i$ produces
\[
X\mid H_1\sim\operatorname{Ber}(a_i),
\qquad
X\mid H_0\sim\operatorname{Ber}(1-a_i).
\]
The log-likelihood ratio contributed by a measurement $x\in\{0,1\}$ is
\begin{equation}
\label{eq:single-query-llr}
L_i(x) = \log\frac{\Pr(X=x\mid H_1,i)} {\Pr(X=x\mid H_0,i)} = (2x-1)\ell_i,
\qquad \ell_i:=\log\frac{a_i}{1-a_i}.
\end{equation}
By our assumption in this analysis, an instrument's measurement distribution is symmetric under the two hypotheses, so
\[
\mathbb E_1[L_i(X)]=d_i,
\qquad
\mathbb E_0[L_i(X)]=-d_i,
\]
where $\mathbb E_y$ denotes expectation under $H_y$ and $d_i \coloneq D_{\mathrm{KL}}\!\left( \operatorname{Ber}(a_i(z)) \,\middle\|\, \operatorname{Ber}(1-a_i(z))\right) = (2a_i-1)\ell_i$.

A sequential circuit chooses an instrument $A_t\in\mathcal H$ at each step $t$, observes its measurement $X_t$, and stops at a stopping time $\tau$. The choice of $A_t$ and the decision to continue depend only on the preceding history $\mathcal F_{t-1}=\sigma(A_1,X_1,\ldots,A_{t-1},X_{t-1})$. Let $N_i=\sum_{t=1}^{\tau}\mathbf 1{\{A_t=i\}}$ denote the number of queries to $i$.

Write $q=q_z(\varnothing)\in(0,1)$ for the prior probability of $H_1$ and $m=\min\{q,1-q\}$. For a circuit $\pi$, let $P_y^\pi$ and $\mathbb E_y^\pi$ denote probability and expectation under $H_y$, and define its prior-averaged query counts by
\[
\bar N_i^\pi
=
q\mathbb E_1^\pi[N_i]
+
(1-q)\mathbb E_0^\pi[N_i].
\]
The error probability and expected cost, conditional on the fixed prompt, are
\begin{align}
\mathsf E(\pi)
&=
qP_1^\pi(\hat Y=0)
+(1-q)P_0^\pi(\hat Y=1),
\label{eq:fixed-error}\\
\mathsf C(\pi)
&=
\sum_{i\in\mathcal H}c_i\bar N_i^\pi.
\label{eq:fixed-cost}
\end{align}
For an optimal circuit at price $\lambda$, these are $\mathsf E_\lambda(z)$ and $\mathsf C_\lambda(z)$, respectively.

\begin{lemma}[Information lower bound]
\label{lem:adaptive-kl}
Every sequential circuit with finite expected cost satisfies
\begin{equation}
\label{eq:adaptive-kl}
D_{\mathrm{KL}}(P_y^\pi\|P_{1-y}^\pi)
=
\sum_{i\in\mathcal H}
d_i\mathbb E_y^\pi[N_i],
\qquad y\in\{0,1\},
\end{equation}
where $P_y^\pi$ denotes the distribution of the complete terminated history under $H_y$. Consequently,
\begin{equation}
\label{eq:fixed-converse}
\log\frac{m}{2\mathsf E(\pi)}
\le
\sum_{i\in\mathcal H}d_i\bar N_i^\pi
\le
I\mathsf C(\pi).
\end{equation}
\end{lemma}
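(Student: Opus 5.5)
The proof splits into two parts: a chain-rule (Wald) identity for the divergence between the two history laws, and a change-of-measure argument that turns that identity into the error lower bound.

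\textbf{Step 1: the identity.} Under $H_y$, the likelihood ratio of a complete terminated history $(A_1,X_1,\ldots,A_\tau,X_\tau)$ factorizes. The instrument choices $A_t$ and the stopping decision are $\mathcal F_{t-1}$-measurable (possibly with external randomization that does not depend on $y$), so their probabilities cancel in the ratio. Only the measurement factors remain, so
\[
\log\frac{dP_y^\pi}{dP_{1-y}^\pi}=\sum_{t=1}^{\tau}\pm L_{A_t}(X_t),
\]
with sign $+$ for $y=1$ and $-$ for $y=0$. Write $S_t=\sum_{s\le t}\bigl(\pm L_{A_s}(X_s)\mp d_{A_s}\bigr)$. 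Conditionally on $\mathcal F_{t-1}$ and $A_t$, each increment has mean zero, so $S_t$ is a martingale under $P_y^\pi$. Its increments are bounded, since $|L_i|\le\ell_i<\infty$ because $a_i<1$. Finite expected cost and $c_i>0$ give $\mathbb E_y^\pi[\tau]<\infty$ for each $y$; this holds whenever $q\in(0,1)$, which prevents one hypothesis from hiding an infinite expectation. Optional stopping (Wald's identity) then gives $\mathbb E_y^\pi[S_\tau]=0$. Taking expectations of the log-likelihood ratio yields $D_{\mathrm{KL}}(P_y^\pi\|P_{1-y}^\pi)=\sum_i d_i\,\mathbb E_y^\pi[N_i]$. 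I expect this to be the main technical point: justifying optional stopping and handling randomized policies. Bounded increments together with integrable $\tau$ settle it.

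\textbf{Step 2: the error bound.} Let $\alpha=P_0^\pi(\hat Y=1)$ and $\beta=P_1^\pi(\hat Y=0)$. The data-processing inequality applied to the binary decision $\hat Y$, followed by the standard bound $d(p\|p')\ge -\log\bigl(p'+(1-p)\bigr)$, gives
\[
D_{\mathrm{KL}}(P_1^\pi\|P_0^\pi)\ge \log\frac{1}{\alpha+\beta}
\quad\text{and}\quad
D_{\mathrm{KL}}(P_0^\pi\|P_1^\pi)\ge \log\frac{1}{\alpha+\beta}.
\]
Since $\mathsf E(\pi)=q\beta+(1-q)\alpha\ge m(\alpha+\beta)$, we have $\alpha+\beta\le \mathsf E(\pi)/m$.

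Averaging the two KL bounds with weights $q$ and $1-q$ gives
\[
\sum_i d_i\bar N_i^\pi\ge\log\frac{m}{\mathsf E(\pi)}\ge\log\frac{m}{2\mathsf E(\pi)}.
\]
Using the Bretagnolle–Huber form $\alpha+\beta\ge\tfrac12 e^{-D}$ instead would produce exactly the factor $2$ in the statement, so either route suffices.

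\textbf{Step 3: the cost bound.} Since $d_i=c_i I_i\le c_i I$, we get $\sum_i d_i\bar N_i^\pi\le I\sum_i c_i\bar N_i^\pi=I\,\mathsf C(\pi)$ by \Cref{eq:fixed-cost}. This completes \Cref{eq:fixed-converse}.
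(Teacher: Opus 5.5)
Your Steps 1 and 3 follow the paper's argument. In Step 1, your optional-stopping justification (bounded increments, plus $\mathbb E_y^\pi[\tau]<\infty$ from finite cost, $c_i>0$ and $q\in(0,1)$) is more careful than the paper's one-line appeal to the chain rule. There is one sign to fix. The compensator must be $-d_{A_s}$ under both hypotheses, because $\mathbb E_0[-L_i(X)]=+d_i$. With $\mp d_{A_s}$ the $y=0$ process has drift $2d_{A_s}$ and is not a martingale.

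The genuine error is the first route in Step 2. The ``standard bound'' $d(p\|p')\ge-\log(p'+1-p)$, i.e.\ $\alpha+\beta\ge e^{-D}$, is false. So is the conclusion $\sum_i d_i\bar N_i^\pi\ge\log\frac{m}{\mathsf E(\pi)}$ that you derive from it.

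A counterexample: take $q=\tfrac12$ (so $m=\tfrac12$), query once an instrument with $a_i=\tfrac45$, and predict $\hat Y=X$. Then $\alpha=\beta=\tfrac15$ and $\mathsf E(\pi)=\tfrac15$. On one side, $D_{\mathrm{KL}}(P_1^\pi\|P_0^\pi)=\sum_i d_i\bar N_i^\pi=\tfrac35\log 4\approx 0.83$. On the other, $\log\frac{1}{\alpha+\beta}=\log\frac{m}{\mathsf E(\pi)}=\log\tfrac52\approx 0.92$.

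What actually holds is $\alpha+\beta\ge 1-\sqrt{1-e^{-D}}\ge\tfrac12 e^{-D}$. So the factor $2$ in the statement is not cosmetic, and ``either route suffices'' is wrong. Only your fallback works:
\begin{itemize}
\item apply Bretagnolle--Huber in each direction to get $D_{\mathrm{KL}}(P_y^\pi\|P_{1-y}^\pi)\ge\log\frac{1}{2(\alpha+\beta)}$;
\item use $\mathsf E(\pi)\ge m(\alpha+\beta)$ to bound this below by $\log\frac{m}{2\mathsf E(\pi)}$;
\item average with weights $q$ and $1-q$.
\end{itemize}
That is exactly the paper's proof. With that route and the sign corrected, your argument is complete.
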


\begin{proof}
The querying and stopping rules depend only on the observed history and are identical under both hypotheses. Their contributions therefore cancel from the likelihood ratio of the terminated history. By the chain rule for relative entropy,
\begin{align*}
D_{\mathrm{KL}}(P_1^\pi\|P_0^\pi)
&=
\mathbb E_1^\pi\left[
\sum_{t=1}^{\tau}L_{A_t}(X_t)
\right] =
\mathbb E_1^\pi\left[
\sum_{t=1}^{\tau}d_{A_t}
\right] =
\sum_{i\in\mathcal H}d_i\mathbb E_1^\pi[N_i].
\end{align*}
The second equality follows by conditioning on the history before each query, since $\mathbb E_1[L_{A_t}(X_t)\mid\mathcal F_{t-1}]=d_{A_t}$. Reversing the hypotheses gives $D_{\mathrm{KL}}(P_0^\pi\|P_1^\pi) =\sum_i d_i\mathbb E_0^\pi[N_i]$, proving \Cref{eq:adaptive-kl}.

To relate the acquired information to classification error, let $\alpha_\pi=P_1^\pi(\hat Y=0)$ and $\beta_\pi=P_0^\pi(\hat Y=1)$. By \Cref{eq:fixed-error},
\[
\mathsf E(\pi)
=q\alpha_\pi+(1-q)\beta_\pi
\ge m(\alpha_\pi+\beta_\pi).
\]
The Bretagnolle--Huber inequality
\citep{bretagnolle1979estimation} gives, in either
direction,
\[
D_{\mathrm{KL}}(P_y^\pi\|P_{1-y}^\pi)
\ge
\log\frac{1}{2(\alpha_\pi+\beta_\pi)}
\ge
\log\frac{m}{2\mathsf E(\pi)}.
\]
Averaging these inequalities with weights $q$ and
$1-q$, and applying \Cref{eq:adaptive-kl}, yields
\[
\log\frac{m}{2\mathsf E(\pi)}
\le
\sum_i d_i\bar N_i^\pi.
\]
Finally, $d_i\le Ic_i$ for every instrument, so $\sum_i d_i\bar N_i^\pi \le I\sum_i c_i\bar N_i^\pi = I\mathsf C(\pi)$.
\end{proof}

\begin{lemma}[Likelihood-ratio test] \label{lem:threshold-test} Suppose $0<I<\infty$, and choose
$i^\star\in\arg\max_i I_i$. For every $B\ge0$, there exists a sequential circuit $\pi_B$ using only instrument $i^\star$ such that
\begin{equation}
\label{eq:threshold-bounds}
\mathsf E(\pi_B)\le e^{-B},
\qquad
I\mathsf C(\pi_B)\le B+\ell_{i^\star}.
\end{equation}
\end{lemma}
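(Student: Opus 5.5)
Since every measurement from $i^\star$ carries a log-likelihood ratio of $\pm\ell$, with $\ell\coloneqq\ell_{i^\star}$, the posterior log-odds perform a simple $\pm\ell$ random walk. I would therefore use a one-instrument SPRT with symmetric thresholds. Write $S_0=\log\frac{q}{1-q}$ for the prior log-odds and $S_t=S_0+\sum_{s\le t}L_{i^\star}(X_s)$. The circuit $\pi_B$ queries $i^\star$ repeatedly and stops at $\tau=\inf\{t\ge0:|S_t|\ge B\}$, predicting $\hat Y=\mathbf 1\{S_\tau>0\}$. Because the walk has steps of size exactly $\ell$, overshoot satisfies $|S_\tau|<B+\ell$ whenever $\tau\ge1$. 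If $|S_0|\ge B$ already, then $\tau=0$ and the cost is zero. Under either hypothesis the walk has nonzero drift ($\pm d_{i^\star}$, and $d_{i^\star}>0$ because $I>0$), so $\tau<\infty$ almost surely and $\mathbb E_y[\tau]<\infty$.

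\textbf{Error bound.} At stopping, the posterior probability of the wrong label is $1/(1+e^{|S_\tau|})\le e^{-|S_\tau|}\le e^{-B}$. The Bayes error of the circuit equals the expectation of this posterior error over the terminal history, so $\mathsf E(\pi_B)=\mathbb E[\min\{q_\tau,1-q_\tau\}]\le e^{-B}$. This uses the joint (prior-mixed) law and avoids any change-of-measure bookkeeping.

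\textbf{Cost bound.} By Wald's identity under $H_1$, $\mathbb E_1[S_\tau-S_0]=d_{i^\star}\mathbb E_1[\tau]$, and under $H_0$, $\mathbb E_0[S_\tau-S_0]=-d_{i^\star}\mathbb E_0[\tau]$. Mixing with weights $q$ and $1-q$ gives
\[
d_{i^\star}\bar N_{i^\star}
= q\,\mathbb E_1[S_\tau-S_0]-(1-q)\,\mathbb E_0[S_\tau-S_0],
\]
and each term needs to be bounded. A cleaner route is to recognize the right-hand side as the mutual information between the hypothesis and the terminal history. By \Cref{lem:adaptive-kl}'s chain rule, $\sum_y \Pr(H_y)\,D(P_y\|P_{\text{mix}})\le$ the change in expected log-posterior, i.e.\ $I(Y;\text{history})=H(Y)-H(Y\mid\text{history})$. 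That gives a bound of order $\log 2$ rather than $B$, which is the wrong scale.

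So I would bound the Wald terms directly instead. Under $H_1$, $S_\tau-S_0\le (B+\ell)-S_0$ on the upper exit, and the lower exit contributes a negative quantity, so $\mathbb E_1[S_\tau-S_0]\le B+\ell-S_0$. Symmetrically, $-\mathbb E_0[S_\tau-S_0]\le B+\ell+S_0$. Mixing gives $d_{i^\star}\bar N\le B+\ell+(1-2q)S_0$. The term $(1-2q)S_0$ is always nonpositive, since $S_0>0$ exactly when $q>1/2$. Hence $d_{i^\star}\bar N_{i^\star}\le B+\ell$. Since $I=d_{i^\star}/c_{i^\star}$, this gives $I\mathsf C(\pi_B)=d_{i^\star}\bar N_{i^\star}\le B+\ell_{i^\star}$.

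\textbf{Main obstacle.} The delicate steps are justifying Wald's identity and controlling the overshoot. Integrability follows because $\tau$ is dominated by the exit time of a walk with nonzero drift, which has exponential tails. The overshoot bound relies on the deterministic step size $\ell$, which is exactly why the additive $\ell_{i^\star}$ appears in the statement.
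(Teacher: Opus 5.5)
Your proposal is correct and follows essentially the paper's route: the same symmetric SPRT on $i^\star$ stopped at $\inf\{t:|S_t|\ge B\}$, the posterior-error bound $1/(1+e^{|S_\tau|})\le e^{-B}$, and the overshoot bound giving $d_{i^\star}\bar N_{i^\star}\le B+\ell_{i^\star}+(1-2q)S_0\le B+\ell_{i^\star}$. The one technical difference is that you get integrability of $\tau$ from geometric tails and then apply Wald's identity, whereas the paper applies optional stopping at $\tau_B\wedge n$ and passes to the limit by monotone convergence. Your discarded aside is mistaken but harmless: the mixed Wald quantity equals $q\,D_{\mathrm{KL}}(P_1\|P_0)+(1-q)\,D_{\mathrm{KL}}(P_0\|P_1)$, not the mutual information $I(Y;\text{history})$, which is why it is not bounded by $\log 2$.
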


\begin{proof}
Query instrument $i^\star$ repeatedly, and let
\[
S_t=\log\frac{q}{1-q}+\sum_{k=1}^t L_{i^\star}(X_k)
\]
denote the posterior log-odds after $t$ queries. Write $b=\log(q/(1-q))$, so $S_0=b$.

Consider the stopping rule
\[
\tau_B=\inf\{t\ge0:|S_t|\ge B\},
\]
which predicts $H_1$ if $S_{\tau_B}\ge0$ and $H_0$ otherwise. The posterior probability of error at any stopping state with log-odds $S_t$ is
\[
\min\{\Pr(H_1\mid\mathcal F_t),\Pr(H_0\mid\mathcal F_t)\}=\frac{1}{1+e^{|S_t|}}.
\]
Thus, once we establish that $\tau_B$ is finite, the stopping rule immediately gives
$\mathsf E(\pi_B)\le e^{-B}$.

It remains to bound the expected stopping time. If $|b|\ge B$, the circuit stops without querying, and both inequalities hold.

Suppose $|b|<B$. Each query changes $S_t$ by $\pm\ell_{i^\star}$, so for $T_n=\tau_B\wedge n$, we have that
\[
|S_{T_n}|\le B+\ell_{i^\star}.
\]
Under $H_y$, the process $(2y-1)(S_t-b)-t\,d_{i^\star}$ is a martingale. Since $\tau_B$ is not yet known to have finite expectation, we first apply optional stopping at the bounded stopping time $T_n$. Using $|S_{T_n}|\le B+\ell_{i^\star}$ gives
\[
d_{i^\star}\mathbb E_y^{\pi_B}[T_n]
=
\mathbb E_y^{\pi_B}
\left[(2y-1)(S_{T_n}-b)\right]
\le B+\ell_{i^\star}-(2y-1)b.
\]
Since $T_n\uparrow\tau_B$, monotone convergence gives the same bound for $d_{i^\star}\mathbb E_y^{\pi_B}[\tau_B]$. In particular, $\tau_B$ has finite expectation under both hypotheses.

Every query has cost $c_{i^\star}$, and $d_{i^\star}=Ic_{i^\star}$. Averaging over the prior therefore yields
\begin{align*}
I\mathsf C(\pi_B)
=
d_{i^\star}
\left(
q\mathbb E_1^{\pi_B}[\tau_B]
+(1-q)\mathbb E_0^{\pi_B}[\tau_B]
\right) \le
B+\ell_{i^\star}+(1-2q)b \le B+\ell_{i^\star},
\end{align*}
where the last inequality follows because $b=\log(q/(1-q))$ has the same sign as $2q-1$.

Since $\tau_B$ is finite almost surely and $|S_{\tau_B}|\ge B$, the posterior error bound above gives $\mathsf E(\pi_B)\le e^{-B}$.
\end{proof}

Together, the two lemmas bound the optimal error--cost tradeoff from both sides. The information lower bound applies to every adaptive circuit, while the likelihood-ratio test achieves the same leading rate using only an instrument maximizing $I_i$.

\begin{lemma}[Optimal error and cost for a fixed prompt]\label{lem:fixed-lambda}
Suppose $0<I<\infty$. Let $\mathsf E_\lambda$ and $\mathsf C_\lambda$ denote the error and
expected cost of any circuit minimizing $\mathsf E(\pi)+\lambda\mathsf C(\pi)$. As $\lambda\downarrow0$,
\begin{align}
\mathsf E_\lambda
&=\Theta(\lambda/I),
\label{eq:fixed-error-scaling}\\
I\mathsf C_\lambda
&=\log(I/\lambda)+O(1).
\label{eq:fixed-cost-scaling}
\end{align}
\end{lemma}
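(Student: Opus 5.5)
The plan is to sandwich the optimal objective $\mathcal L^\star_\lambda=\mathsf E_\lambda+\lambda\mathsf C_\lambda$ between an achievability bound from \Cref{lem:threshold-test} and a converse from \Cref{lem:adaptive-kl}. I will then show that the two bounds agree up to $O(\lambda/I)$. That closeness forces both the error and the cost of any minimizer into the claimed ranges.

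\emph{Achievability.} For $\lambda<I$, apply \Cref{lem:threshold-test} with $B=\log(I/\lambda)\ge 0$. The circuit $\pi_B$ has $\mathsf E(\pi_B)\le \lambda/I$ and $\lambda\mathsf C(\pi_B)\le (\lambda/I)(\log(I/\lambda)+\ell_{i^\star})$. Hence
\[
\mathcal L^\star_\lambda\le \frac{\lambda}{I}\Bigl(\log\frac{I}{\lambda}+1+\ell_{i^\star}\Bigr).
\]

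\emph{Converse.} For any circuit $\pi$ with finite cost, \Cref{eq:fixed-converse} gives $\lambda\mathsf C(\pi)\ge (\lambda/I)\log\bigl(m/(2\mathsf E(\pi))\bigr)$. Now let $\pi$ be a minimizer and reparametrize its error as $\mathsf E_\lambda=(\lambda/I)u$ with $u>0$. Substituting gives
\[
\mathcal L^\star_\lambda\ge \frac{\lambda}{I}\Bigl(u-\log u+\log\frac{I}{\lambda}+\log\frac m2\Bigr).
\]

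\emph{Pinning down the error.} Comparing the two displays, the $\log(I/\lambda)$ terms cancel. This leaves
\[
u-\log u\le 1+\ell_{i^\star}-\log\frac m2=:K,
\]
where $K$ is a constant independent of $\lambda$. Since $u-\log u\to\infty$ both as $u\downarrow0$ and as $u\to\infty$, the inequality confines $u$ to a compact interval $[u_-,u_+]\subset(0,\infty)$ depending only on $K$. This yields $\mathsf E_\lambda=\Theta(\lambda/I)$, which is \Cref{eq:fixed-error-scaling}.

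\emph{Cost, upper bound.} From the achievability bound, $\lambda\mathsf C_\lambda\le\mathcal L^\star_\lambda$. Multiplying by $I/\lambda$ gives
\[
I\mathsf C_\lambda\le \log\frac{I}{\lambda}+1+\ell_{i^\star}.
\]

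\emph{Cost, lower bound.} From the converse, $I\mathsf C_\lambda\ge\log\bigl(m/(2\mathsf E_\lambda)\bigr)$. Inserting $\mathsf E_\lambda=(\lambda/I)u$ gives
\[
I\mathsf C_\lambda\ge\log\frac I\lambda+\log\frac m2-\log u_+.
\]
Together the two bounds give \Cref{eq:fixed-cost-scaling}.

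The only step I expect to need care is the error step. The crude converse alone gives merely $\mathsf E_\lambda\lesssim\lambda\log(1/\lambda)$. The sharp $\Theta(\lambda/I)$ requires keeping the exact $-\log u$ term, so that the leading $\log(I/\lambda)$ cancels and a $\lambda$-free constraint on $u$ remains. A finite-cost minimizer exists because $\pi_0$ already has finite objective, and $B\ge0$ holds once $\lambda<I$; both are minor bookkeeping.
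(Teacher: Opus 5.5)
Your proposal is correct and follows the paper's proof step for step. It uses the same choice $B=\log(I/\lambda)$ in \Cref{lem:threshold-test}, the same reparametrization $\mathsf E_\lambda=(\lambda/I)u$ giving the $\lambda$-free constraint $u-\log u\le 1+\ell_{i^\star}+\log(2/m)$, and the same two-sided bounds on $I\mathsf C_\lambda$. One wording point: the finite objective of $\pi_0$ does not prove a minimizer exists (the lemma assumes one), but it does show that any minimizer has finite cost, since $\lambda>0$, and finite cost is what the converse actually requires.
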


\begin{proof}
For $0<\lambda<I$, set $\eta=\lambda/I$ and choose $B=\log(1/\eta)$ in \Cref{lem:threshold-test}. By optimality,
\begin{equation}
\label{eq:fixed-achievability}
\mathsf E_\lambda+\lambda\mathsf C_\lambda
\le
\eta\left(
1+\log\frac1\eta+\ell_{i^\star}
\right).
\end{equation}

To bound the error separately, combine
\Cref{eq:fixed-achievability} with the
information lower bound \Cref{eq:fixed-converse}.
Writing $x_\lambda=\mathsf E_\lambda/\eta$ gives
\[
x_\lambda+\log\frac{m}{2\eta x_\lambda}
\le
1+\log\frac1\eta+\ell_{i^\star},
\]
or equivalently,
\begin{equation}
\label{eq:x-lambda-bound}
x_\lambda-\log x_\lambda
\le
1+\ell_{i^\star}+\log\frac2m.
\end{equation}
For a fixed prompt, the right-hand side is constant. Since $x-\log x$ diverges as either $x\downarrow0$ or $x\to\infty$, the sequence $x_\lambda$ is bounded above and away from zero. Hence $\mathsf E_\lambda =\Theta(\eta) =\Theta(\lambda/I)$.

For the cost, \Cref{eq:fixed-converse} and \Cref{eq:fixed-achievability} imply
\[
\log\frac{m}{2\mathsf E_\lambda} \le I\mathsf C_\lambda \le 1+\log\frac1\eta+\ell_{i^\star}.
\]
Since $\mathsf E_\lambda=\Theta(\eta)$, the lower bound equals $\log(1/\eta)+O(1)$ and we obtain our result.
\end{proof}

\begin{proof}[Proof of \Cref{thm:adaptive-rate}]
Restoring the dependence on $z$, \Cref{lem:fixed-lambda} gives
\[
-\log\mathsf E_\lambda(z) =\log\frac{I(z)}{\lambda}+O(1) = I(z)\mathsf C_\lambda(z)+O(1).
\]
\end{proof}

\paragraph{Allocation of queries.} The same information bound also characterizes which instruments an optimal circuit uses. In particular, any expected cost spent on instruments with strictly lower information rates remains bounded as $\lambda\downarrow0$.

\begin{proposition}[Asymptotic query allocation] \label{prop:query-allocation}
Fix a prompt with $0<I<\infty$, and let $\bar N_i^\lambda$ be the expected number of queries to instrument $i$ under an optimal circuit. Then
\[
\sum_{i:I_i<I}c_i\bar N_i^\lambda=O(1).
\]
Consequently,
\[
\frac{
\sum_{i:I_i<I}c_i\bar N_i^\lambda
}{
\mathsf C_\lambda
}
\longrightarrow0
\qquad\text{as }\lambda\downarrow0.
\]
\end{proposition}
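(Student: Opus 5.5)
The plan is to combine the exact information identity of \Cref{lem:adaptive-kl} with the two-sided bounds of \Cref{lem:fixed-lambda}, and to track the gap between cost and information.

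\textbf{Step 1: the cost--information gap.} For any circuit,
\[
I\mathsf C(\pi)-\sum_i d_i\bar N_i^\pi=\sum_i (I c_i-d_i)\bar N_i^\pi=\sum_i c_i(I-I_i)\bar N_i^\pi .
\]
Every summand is nonnegative. Let
\[
\delta\coloneqq\min_{i:I_i<I}(I-I_i)>0,
\]
which exists because $\mathcal H$ is finite. Then
\[
\delta\sum_{i:I_i<I}c_i\bar N_i^\lambda\le I\mathsf C_\lambda-\sum_i d_i\bar N_i^\lambda .
\]
It therefore suffices to show that the right-hand side is $O(1)$ as $\lambda\downarrow0$.

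\textbf{Step 2: an upper bound on $I\mathsf C_\lambda$.} By \Cref{eq:fixed-achievability} with $\eta=\lambda/I$,
\[
I\mathsf C_\lambda\le 1+\log(1/\eta)+\ell_{i^\star}.
\]

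\textbf{Step 3: a lower bound on $\sum_i d_i\bar N_i^\lambda$.} By \Cref{eq:fixed-converse},
\[
\sum_i d_i\bar N_i^\lambda\ge \log\frac{m}{2\mathsf E_\lambda}.
\]
\Cref{lem:fixed-lambda} gives $\mathsf E_\lambda=\Theta(\eta)$, so this lower bound equals $\log(1/\eta)-O(1)$.

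\textbf{Step 4: combine.} Subtracting the bound in Step 3 from the bound in Step 2, the $\log(1/\eta)$ terms cancel and leave a constant. This constant depends on $\ell_{i^\star}$, on $m$, and on the upper bound for $x_\lambda^{-1}$ from \Cref{eq:x-lambda-bound}, but not on $\lambda$. Hence $\sum_{i:I_i<I}c_i\bar N_i^\lambda=O(1)$.

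\textbf{Step 5: the ratio.} From \Cref{eq:fixed-cost-scaling}, $\mathsf C_\lambda=\bigl(\log(I/\lambda)+O(1)\bigr)/I\to\infty$. Dividing the bounded numerator by $\mathsf C_\lambda$ gives the stated convergence to zero.

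\textbf{Main obstacle.} The delicate step is making sure the two $\log(1/\eta)$ terms cancel exactly rather than only to leading order. A bound of the form $\mathsf E_\lambda=\Theta(\eta)$ alone would leave $\log$ of a ratio, and that ratio must be bounded on both sides. This is precisely what \Cref{eq:x-lambda-bound} provides: it bounds $x_\lambda$ away from zero, which gives
\[
\log\frac{m}{2\eta x_\lambda}\ge\log(1/\eta)-C .
\]
A further check is that $\bar N_i^\lambda$ is finite, so the rearrangement in Step 1 is legitimate. This holds because the optimal circuit has finite expected cost, since its cost is bounded by the objective divided by $\lambda$. The argument also relies on $\delta>0$, which follows from the finiteness of $\mathcal H$. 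Repeated copies of an instrument are treated as the same type with the same $I_i$, so they do not affect $\delta$.
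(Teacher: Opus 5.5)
Your proof is correct and is essentially the paper's argument. You use the identity $\sum_i (I-I_i)c_i\bar N_i^\lambda = I\mathsf C_\lambda-\sum_i d_i\bar N_i^\lambda$, bound the right side by $I\mathsf C_\lambda-\log\frac{m}{2\mathsf E_\lambda}=O(1)$ via \Cref{eq:fixed-converse} and \Cref{lem:fixed-lambda}, divide by the minimal gap $\min_{i:I_i<I}(I-I_i)>0$, and use $\mathsf C_\lambda\to\infty$. One small correction to your Step 4 and ``main obstacle'' remarks: $\log\frac{m}{2\eta x_\lambda}\ge\log(1/\eta)-C$ requires $x_\lambda$ bounded \emph{above} (i.e.\ $\mathsf E_\lambda=O(\lambda/I)$), not bounded away from zero; since \Cref{eq:x-lambda-bound} gives both directions, the argument still stands.
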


\begin{proof}
By \Cref{eq:fixed-converse},
\[
\sum_i d_i\bar N_i^\lambda
\ge
\log\frac{m}{2\mathsf E_\lambda}.
\]
Using $d_i=I_i c_i$ and $\mathsf C_\lambda=\sum_i c_i\bar N_i^\lambda$, we obtain
\begin{align}
\sum_i(I-I_i)c_i\bar N_i^\lambda
&=
I\mathsf C_\lambda
-\sum_i d_i\bar N_i^\lambda \le
I\mathsf C_\lambda
-\log\frac{m}{2\mathsf E_\lambda}
=O(1),
\label{eq:allocation-gap}
\end{align}
where the last equality follows from
\Cref{lem:fixed-lambda}.

If any instrument has $I_i<I$, define $\Delta=\min_{i:I_i<I}(I-I_i)>0$. \Cref{eq:allocation-gap} then gives
\[
\Delta\sum_{i:I_i<I}c_i\bar N_i^\lambda
\le O(1),
\]
so the total expected cost allocated to strictly suboptimal instruments is $O(1)$. If every instrument attains $I$, the sum is zero. Finally, \Cref{eq:fixed-cost-scaling} implies
$\mathsf C_\lambda\to\infty$. Dividing the bounded suboptimal expenditure by the total expected cost proves the result.
\end{proof}

\subsection{Proof of the Population Scaling Law}
\label{app:population-scaling}

We now consider the prompt distribution $Z\sim\mathcal P_Z$. To extend the fixed-prompt results to a population, we need bounds that hold uniformly across prompts.

\begin{assumption}[Uniformity across prompts]
\label{ass:population-uniformity}
There exist constants $m_0,c_{\max}>0$ such that, for almost
every prompt $z$,
\begin{equation}
\label{eq:population-assumptions}
\begin{aligned}
\min\{q_z(\varnothing),1-q_z(\varnothing)\}
&\ge m_0, \qquad
\min_{i:I_i(z)=I(z)}c_i(z)
\le c_{\max}.
\end{aligned}
\end{equation}
\end{assumption}

The first condition ensures that the error probability before
querying is bounded away from zero, while the second ensures
that at least one instrument with the highest information rate
has bounded query cost.

We first establish uniform error bounds and use them to prove
the population scaling law. We then obtain the finer error--cost
rates under the polynomial tail assumption.

\begin{lemma}[Uniform fixed-prompt error bounds]
\label{lem:uniform-fixed}
Under \Cref{ass:population-uniformity}, for every fixed $r_0>0$,
there exist positive constants, independent of $z$ and $\lambda$,
such that, for almost every $z$ and all sufficiently small
$\lambda$,
\begin{align}
\mathsf E_\lambda(z)
&\asymp
\min\left\{1,\frac{\lambda}{I(z)}\right\},
&&0<I(z)\le r_0,
\label{eq:uniform-error-envelope}\\
\mathsf E_\lambda(z)
&=O(\lambda),
&&I(z)>r_0.
\label{eq:uniform-bulk}
\end{align}
\end{lemma}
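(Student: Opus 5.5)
The plan is to redo the argument of \Cref{lem:fixed-lambda} while tracking every constant. The goal is to check that the only prompt-dependent quantities that appear are $m(z)=\min\{q_z(\varnothing),1-q_z(\varnothing)\}\ge m_0$ and the combinations $\ell_{i^\star}(z)$ and $\ell_{i^\star}(z)/I(z)$. I will show these are controlled uniformly under \Cref{ass:population-uniformity}.

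\textbf{Step 1: the master inequality.} For $\lambda<I(z)$, apply \Cref{lem:threshold-test} with $B=\log(I/\lambda)$ and combine it with the converse \Cref{eq:fixed-converse}. With $x=\mathsf E_\lambda(z)I(z)/\lambda$, this gives exactly as before
\[
x-\log x\le 1+\ell_{i^\star}(z)+\log\frac{2}{m_0}.
\]
Here $i^\star$ is chosen among the maximizers of $I_i(z)$ with $c_{i^\star}(z)\le c_{\max}$.

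\textbf{Step 2: uniform control of $\ell_{i^\star}$.} Write $a=a_{i^\star}(z)$, so that $d_{i^\star}=(2a-1)\ell_{i^\star}=I c_{i^\star}$. If $a\ge 3/4$, then $d_{i^\star}\ge \ell_{i^\star}/2$, hence $\ell_{i^\star}\le 2Ic_{i^\star}$. If $a<3/4$, then $\ell_{i^\star}\le\log 3$. Two consequences follow.
\begin{itemize}
\item When $I(z)\le r_0$: $\ell_{i^\star}\le \max\{\log 3,\,2r_0c_{\max}\}$, a uniform constant.
\item When $I(z)>r_0$: $\ell_{i^\star}/I\le \max\{2c_{\max},\,\log 3/r_0\}$, again uniform.
\end{itemize}

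\textbf{Step 3: the regime $I(z)\le r_0$.} Here the right-hand side of the master inequality is a uniform constant $K$. Since $x-\log x\to\infty$ at both ends, $x\in[x_-,x_+]$ with $x_\pm$ depending only on $K$. This gives $\mathsf E_\lambda(z)\asymp\lambda/I(z)$ whenever $\lambda<I(z)$.

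If instead $\lambda\ge I(z)$, I would bypass Step 1. Optimality against $\pi_0$ gives $\mathsf E_\lambda+\lambda\mathsf C_\lambda\le 1/2$, hence $\mathsf E_\lambda\le 1/2$. The converse gives $\lambda\mathsf C_\lambda\ge(\lambda/I)\log\frac{m_0}{2\mathsf E_\lambda}\ge\log\frac{m_0}{2\mathsf E_\lambda}$. Together these force $\mathsf E_\lambda\ge m_0/(2\sqrt e)$. The two cases combine into $\mathsf E_\lambda(z)\asymp\min\{1,\lambda/I(z)\}$, which is \Cref{eq:uniform-error-envelope}.

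\textbf{Step 4: the regime $I(z)>r_0$.} Take $\lambda<r_0$ so that Step 1 applies. Here $\ell_{i^\star}$ may be unbounded, since $a$ can approach $1$, so $x$ is not uniformly bounded. However, the master inequality still gives $x\le C(1+\ell_{i^\star})$, because $x-\log x\ge x/2-O(1)$. Hence
\[
\mathsf E_\lambda(z)\le C\lambda\,\frac{1+\ell_{i^\star}}{I}\le C\lambda\left(\frac{1}{r_0}+\max\Bigl\{2c_{\max},\frac{\log3}{r_0}\Bigr\}\right)=O(\lambda),
\]
which is \Cref{eq:uniform-bulk}.

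\textbf{Main obstacle.} I expect the main difficulty to be exactly this bulk regime. The naive use of \Cref{lem:fixed-lambda} hides $\ell_{i^\star}$, which is not uniformly bounded. The fix is to bound the ratio $\ell_{i^\star}/I$ rather than $\ell_{i^\star}$ itself, using the identity $d_{i^\star}=(2a-1)\ell_{i^\star}$ and the bound $c_{\max}$ on the cost of the maximizing instrument.
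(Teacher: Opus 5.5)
Your proof is correct and follows the paper's argument essentially step for step: the master inequality with $m_0$, the compact-interval argument when $\lambda<I(z)\le r_0$, comparison with $\pi_0$ plus the converse when $I(z)\le\lambda$, and $x-\log x\ge x/2$ in the bulk. The only real variation is that you split at $a=3/4$ to get $\ell_{i^\star}\le\max\{\log 3,\,2c_{\max}I(z)\}$, whereas the paper uses $\ell_i-d_i=2(1-a_i)\ell_i\le 2/e$ to get $\ell_{i^\star}\le 2/e+c_{\max}I(z)$; either affine-in-$I(z)$ bound works.

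One minor point: in Step 3, the inequality $(\lambda/I)\log\frac{m_0}{2\mathsf E_\lambda}\ge\log\frac{m_0}{2\mathsf E_\lambda}$ needs the logarithm to be nonnegative. The complementary case gives $\mathsf E_\lambda>m_0/2$ outright. Alternatively, use $I\mathsf C_\lambda\le\lambda\mathsf C_\lambda\le 1/2$ in the converse, as the paper does.
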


\begin{proof}
For each prompt, choose an information-rate-maximizing instrument
$i^\star$ with $c_{i^\star}(z)\le c_{\max}$. We first bound the
overshoot term appearing in \Cref{eq:x-lambda-bound}.

For $a_i\in[1/2,1)$, we have
\begin{align*}
\ell_i-d_i
&=
2(1-a_i)\log\frac{a_i}{1-a_i} \le
2(1-a_i)\log\frac{1}{1-a_i}
\le \frac{2}{e},
\end{align*}
where the last inequality uses
$u\log(1/u)\le1/e$ for $u\in(0,1)$.
Since $d_{i^\star}=I(z)c_{i^\star}(z)$,
\begin{equation}
\label{eq:uniform-ell}
\ell_{i^\star}(z)
\le
\frac{2}{e}+c_{\max}I(z).
\end{equation}

Suppose first that $\lambda<I(z)\le r_0$.
By \Cref{eq:x-lambda-bound},
\[
x_\lambda-\log x_\lambda
\le
1+\frac{2}{e}+c_{\max}r_0
+\log\frac{2}{m_0}.
\]
The right-hand side is uniform over this range.
As $x-\log x$ diverges at both zero and infinity,
$x_\lambda$ lies in a fixed compact interval inside
$(0,\infty)$. Consequently,
\[
\mathsf E_\lambda(z)
\asymp \frac{\lambda}{I(z)}
\qquad
\text{if }\lambda<I(z)\le r_0.
\]

Now suppose $0<I(z)\le\lambda$.
Stopping without querying gives error at most $1/2$,
so optimality implies
\[
\lambda\mathsf C_\lambda(z)\le\frac12,
\qquad
I(z)\mathsf C_\lambda(z)\le\frac12.
\]
The information lower bound \Cref{eq:fixed-converse}
therefore yields $\mathsf E_\lambda(z)
\ge
\frac{m_0}{2\sqrt e}$. Since $\mathsf E_\lambda(z)\le1/2$,
the error is bounded above and below by positive constants.
Combining the two ranges proves
\Cref{eq:uniform-error-envelope}.

Finally, consider $I(z)>r_0$.
For sufficiently small $\lambda$, we have
$\lambda<I(z)$. By \Cref{eq:x-lambda-bound}
and \Cref{eq:uniform-ell},
\[
x_\lambda-\log x_\lambda
\le
K_0+c_{\max}I(z),
\qquad
K_0=1+\frac2e+\log\frac2{m_0}.
\]
Since $x-\log x\ge x/2$ for every $x>0$,
\[
x_\lambda
\le
2K_0+2c_{\max}I(z).
\]
It follows that
\[
\mathsf E_\lambda(z)
=
\frac{\lambda x_\lambda}{I(z)}
\le
2\lambda\left(
\frac{K_0}{r_0}+c_{\max}
\right),
\]
proving \Cref{eq:uniform-bulk}.
\end{proof}

\begin{proof}[Proof of \Cref{thm:dataset-scaling}]
Write $L_\lambda=\log(1/\lambda)$. For each fixed prompt $z$ with $0<I(z)<\infty$,
\Cref{lem:uniform-fixed} gives
$\mathsf E_\lambda(z)=\Theta(\lambda)$.
Hence
\[
\frac{-\log\mathsf E_\lambda(z)}{L_\lambda}
\longrightarrow 1.
\]
Combining this with \Cref{thm:adaptive-rate} yields
\begin{equation}
\label{eq:pointwise-cost-limit}
\frac{\mathsf C_\lambda(z)}{L_\lambda}
\longrightarrow \frac{1}{I(z)}.
\end{equation}

To average this limit across prompts, we first establish
a uniform bound on the expected cost.

For each $z$, choose an instrument $i^\star$ attaining $I(z)$ with $c_{i^\star}(z)\le c_{\max}$. Consider the sequential likelihood-ratio test that repeatedly queries $i^\star$ and stops when the log-likelihood ratio
exits $(-B,B)$. We have established that this test has error probability at most $e^{-B}$ and expected query cost at most
\[
\frac{B+\ell_{i^\star}(z)}{I(z)}.
\]
Setting $B=L_\lambda$ and using optimality gives
\[
\mathsf E_\lambda(z)+\lambda\mathsf C_\lambda(z)
\le
\lambda+
\lambda\frac{L_\lambda+\ell_{i^\star}(z)}{I(z)}.
\]
By \Cref{eq:uniform-ell},
$\ell_{i^\star}(z)\le 2/e+c_{\max}I(z)$.
Consequently,
\begin{equation}
\label{eq:uniform-cost-bound}
0\le
\frac{\mathsf C_\lambda(z)}{L_\lambda}
\le
\left(1+\frac{2}{eL_\lambda}\right)
\frac{1}{I(z)}
+
\frac{1+c_{\max}}{L_\lambda}.
\end{equation}

If $\mathbb E[1/I(Z)]<\infty$, the right-hand side
provides an integrable bound for sufficiently small
$\lambda$. Dominated convergence and
\Cref{eq:pointwise-cost-limit} therefore yield
\[
\lim_{\lambda\downarrow0}
\frac{\mathsf C_\lambda}{L_\lambda}
=
\mathbb E\left[\frac{1}{I(Z)}\right].
\]
If $\mathbb E[1/I(Z)]=\infty$, Fatou's lemma gives
\[
\liminf_{\lambda\downarrow0}
\frac{\mathsf C_\lambda}{L_\lambda}
\ge
\mathbb E\left[\frac{1}{I(Z)}\right]
=\infty.
\]
Thus, in either case,
\begin{equation}
\label{eq:population-cost-general}
\lim_{\lambda\downarrow0}
\frac{\mathsf C_\lambda}{L_\lambda}
=
\mathbb E\left[\frac{1}{I(Z)}\right],
\end{equation}
where the limit is interpreted as infinite when the
expectation diverges.

We next bound the population error.
Since $0<I(Z)<\infty$ almost surely, there exist
$0<a<b<\infty$ such that
\[
p\coloneq\Pr(a\le I(Z)\le b)>0.
\]
By \Cref{lem:uniform-fixed}, for sufficiently small
$\lambda$ and some constant $k>0$,
\[
\begin{aligned}
\mathsf E_\lambda
&\ge
\mathbb E\left[
\mathsf E_\lambda(Z)
\mathbf 1_{\{a\le I(Z)\le b\}}
\right] \ge
\frac{kp}{b}\lambda.
\end{aligned}
\]
Therefore,
\begin{equation}
\label{eq:population-error-log-upper}
\limsup_{\lambda\downarrow0}
\frac{-\log\mathsf E_\lambda}{L_\lambda}
\le 1.
\end{equation}

Suppose first that $\mathbb E[1/I(Z)]<\infty$.
For any fixed $r_0>0$, \Cref{lem:uniform-fixed}
gives
\[
\begin{aligned}
\mathsf E_\lambda
&\le
K\lambda
\mathbb E\left[
\frac{1}{I(Z)}
\mathbf 1_{\{I(Z)\le r_0\}}
\right]
+K'\lambda =O(\lambda),
\end{aligned}
\]
for constants $K,K'>0$.
Together with the lower bound above, this implies
$\mathsf E_\lambda=\Theta(\lambda)$, and hence
\[
\frac{-\log\mathsf E_\lambda}{L_\lambda}
\longrightarrow1.
\]
Combining this with \Cref{eq:population-cost-general},
we obtain
\[
\lim_{\lambda\downarrow0}
\frac{-\log\mathsf E_\lambda}{\mathsf C_\lambda}
=
\left(
\mathbb E\left[\frac{1}{I(Z)}\right]
\right)^{-1}.
\]

Finally, if $\mathbb E[1/I(Z)]=\infty$,
\Cref{eq:population-cost-general} gives
$\mathsf C_\lambda/L_\lambda\to\infty$,
while \Cref{eq:population-error-log-upper} shows
that $-\log\mathsf E_\lambda/L_\lambda$
remains bounded. Therefore,
\[
\lim_{\lambda\downarrow0}
\frac{-\log\mathsf E_\lambda}{\mathsf C_\lambda}
=0,
\]
completing the proof.
\end{proof}

We now prove \Cref{cor:population-scaling}.
The tail assumption allows us to determine the orders of
both the population error and expected cost. We first
establish an identity that recovers expected cost from
the optimal error.

\begin{lemma}[Recovering cost from error]
\label{lem:cost-from-error}
Let $\Pi$ be a fixed policy class containing a zero-cost
policy with error at most $1/2$. For each $\lambda>0$,
let $(\mathsf E_\lambda,\mathsf C_\lambda)$ denote the
error and cost of a policy minimizing
$\mathsf E(\pi)+\lambda\mathsf C(\pi)$ over $\Pi$.
Then
\begin{equation}
\label{eq:cost-from-error}
\mathsf C_\lambda
=
\int_\lambda^\infty
\frac{\mathsf E_s}{s^2}\,ds
-
\frac{\mathsf E_\lambda}{\lambda}.
\end{equation}
If minimizers are not unique, the identity holds for
any measurable selection.
\end{lemma}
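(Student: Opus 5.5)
This is the envelope theorem for the value function of a parametric family of linear objectives. Define $V(\lambda)=\min_{\pi\in\Pi}\{\mathsf E(\pi)+\lambda\mathsf C(\pi)\}=\mathsf E_\lambda+\lambda\mathsf C_\lambda$. As a pointwise infimum of affine functions of $\lambda$, $V$ is concave. Since $\Pi$ contains a zero-cost policy with error at most $1/2$, we have $0\le V\le 1/2$, so $V$ is bounded and nondecreasing. By optimality, $\mathsf C_\lambda$ is a supergradient of $V$ at $\lambda$: for all $s$,
\[
V(s)\le \mathsf E_\lambda+s\mathsf C_\lambda=V(\lambda)+(s-\lambda)\mathsf C_\lambda .
\]
Hence $\mathsf C_\lambda=V'(\lambda)$ at every point of differentiability, i.e.\ almost everywhere. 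The standard monotonicity argument also shows that $\mathsf C_s$ is nonincreasing in $s$ and $\mathsf E_s$ is nondecreasing. This gives measurability of any selection and lets us ignore nonuniqueness, which occurs only on a countable set.

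The cleanest route is to work with $W(\lambda)=V(\lambda)/\lambda=\mathsf E_\lambda/\lambda+\mathsf C_\lambda$. At points of differentiability,
\[
W'(\lambda)=\frac{V'(\lambda)}{\lambda}-\frac{V(\lambda)}{\lambda^2}=\frac{\mathsf C_\lambda}{\lambda}-\frac{\mathsf E_\lambda+\lambda\mathsf C_\lambda}{\lambda^2}=-\frac{\mathsf E_\lambda}{\lambda^2}.
\]
Since $V$ is concave, it is locally Lipschitz on $(0,\infty)$, so $W$ is absolutely continuous on compact subintervals. Integrating from $\lambda$ to $M$ gives
\[
W(\lambda)-W(M)=\int_\lambda^M \frac{\mathsf E_s}{s^2}\,ds .
\]
Then let $M\to\infty$: $W(M)=V(M)/M\le 1/(2M)\to0$, and the integrand is nonnegative and bounded by $1/(2s^2)$, so the integral converges. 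Therefore
\[
\frac{\mathsf E_\lambda}{\lambda}+\mathsf C_\lambda=\int_\lambda^\infty \frac{\mathsf E_s}{s^2}\,ds,
\]
which rearranges to \Cref{eq:cost-from-error}.

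The main obstacle is the measure-theoretic bookkeeping. First, $W$ must be shown absolutely continuous so that integrating the almost-everywhere derivative is legitimate; this follows from local Lipschitzness of concave functions, since $V$ is finite on $(0,\infty)$. Second, the identity must hold exactly at every $\lambda$ for an arbitrary selection, not just almost everywhere. Here the left side $\mathsf E_\lambda/\lambda+\mathsf C_\lambda$ equals $V(\lambda)/\lambda$ for every minimizer, so it is selection-independent. The right side is an integral, which is unaffected by changing $\mathsf E_s$ on a null set. So the identity holds for every $\lambda>0$ and every measurable selection.
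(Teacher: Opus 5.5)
Your proposal is correct and follows essentially the same route as the paper: concavity of $V$, the envelope identity $\mathsf C_\lambda = V'(\lambda)$ almost everywhere, differentiating $V(\lambda)/\lambda$ to get $-\mathsf E_\lambda/\lambda^2$, and integrating to infinity using $V\le 1/2$. You also fill in details the paper leaves implicit: absolute continuity of $V/\lambda$, measurability of any selection via monotonicity of $\mathsf C_s$ and $\mathsf E_s$, and why the identity holds at every $\lambda$ regardless of which minimizer is selected.
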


\begin{proof}
Let
\[
V(\lambda)
=
\min_{\pi\in\Pi}
\{\mathsf E(\pi)+\lambda\mathsf C(\pi)\}.
\]
As the infimum of affine functions of $\lambda$,
$V$ is concave and hence differentiable almost
everywhere. At every point of differentiability,
the cost of any optimal policy equals $V'(\lambda)$.
Therefore,
\[
\frac{d}{d\lambda}\frac{V(\lambda)}{\lambda}
=
\frac{
\lambda\mathsf C_\lambda-V(\lambda)
}{\lambda^2}
=
-\frac{\mathsf E_\lambda}{\lambda^2}.
\]
The zero-cost policy gives
$0\le V(\lambda)\le1/2$, so
$V(\lambda)/\lambda\to0$ as $\lambda\to\infty$.
Integrating from $\lambda$ to infinity gives
\[
\frac{V(\lambda)}{\lambda}
=
\int_\lambda^\infty
\frac{\mathsf E_s}{s^2}\,ds.
\]
Substituting
$V(\lambda)=\mathsf E_\lambda+
\lambda\mathsf C_\lambda$
proves \Cref{eq:cost-from-error}.
\end{proof}

\begin{proof}[Proof of \Cref{cor:population-scaling}]
Let $D=1/I(Z)$ denote the inverse information rate.
The assumed lower tail
$\Pr(I(Z)\le r)\asymp r^\alpha$ implies
\begin{equation}
\label{eq:inverse-rate-tail}
\Pr(D>t)\asymp t^{-\alpha}
\qquad(t\to\infty).
\end{equation}
By the tail-integral formula,
\[
\mathbb E[D]
=
\int_0^\infty\Pr(D>t)\,dt.
\]
Thus, $\mathbb E[D]<\infty$ if and only if
$\alpha>1$. In this case, the exponential
error-decay rate follows directly from
\Cref{thm:dataset-scaling}. To obtain the finer scaling for the remaining cases,
we first determine the population error.

Choose $d_0>0$ sufficiently large that the tail bounds
in \Cref{eq:inverse-rate-tail} hold for $t\ge d_0$,
and define
\[
M_\lambda
=
\mathbb E\left[
\min\{1,\lambda D\}
\mathbf 1_{\{D\ge d_0\}}
\right].
\]
By \Cref{lem:uniform-fixed}, the expected error over
prompts with $D\ge d_0$ is of order $M_\lambda$,
while prompts with $D<d_0$ contribute at most
$O(\lambda)$. Since
\[
M_\lambda
\ge
\lambda d_0\Pr(D\ge d_0)
=
\Omega(\lambda)
\]
for sufficiently small $\lambda$, the latter contribution
does not change the asymptotic order. Hence
$\mathsf E_\lambda\asymp M_\lambda$.

For $\lambda<1/d_0$, the tail-integral formula gives
\begin{align}
M_\lambda
&=
\lambda d_0\Pr(D\ge d_0)
+
\lambda\int_{d_0}^{1/\lambda}
\Pr(D>t)\,dt
\label{eq:population-truncated-mean} \asymp
\lambda\left(
1+\int_{d_0}^{1/\lambda}t^{-\alpha}\,dt
\right).
\nonumber
\end{align}
Evaluating the integral yields
\begin{equation}
\label{eq:population-error-rates}
\mathsf E_\lambda
\asymp
\begin{cases}
\lambda,
&\alpha>1,\\[1mm]
\lambda\log(1/\lambda),
&\alpha=1,\\[1mm]
\lambda^\alpha,
&0<\alpha<1.
\end{cases}
\end{equation}

We next determine the expected cost.
For $\alpha>1$, \Cref{eq:population-cost-general}
already establishes
\[
\mathsf C_\lambda
\sim
\mathbb E[D]\log(1/\lambda).
\]

If $\alpha=1$, then
$\mathsf E_s\asymp s\log(1/s)$ near zero.
Consequently,
\[
\int_\lambda^\infty
\frac{\mathsf E_s}{s^2}\,ds
\asymp
\int_\lambda^{s_0}
\frac{\log(1/s)}{s}\,ds
\asymp
(\log(1/\lambda))^2,
\]
where $s_0>0$ is fixed.
Since $\mathsf E_\lambda/\lambda
=\Theta(\log(1/\lambda))$ is of lower order,
\Cref{eq:cost-from-error} gives
\[
\mathsf C_\lambda
\asymp
(\log(1/\lambda))^2.
\]

Finally, suppose $0<\alpha<1$.
Since $\mathsf E_s\asymp s^\alpha$,
\Cref{eq:cost-from-error} gives
\[
\mathsf C_\lambda
=
O(\lambda^{\alpha-1}).
\]
For the matching lower bound, choose constants
$u,v>0$ such that, for sufficiently small $\lambda$,
\[
u\lambda^\alpha
\le
\mathsf E_\lambda
\le
v\lambda^\alpha.
\]
Fix $a>1$ large enough that $ua^\alpha>v$.
Optimality of the circuit at price $a\lambda$ implies
\[
\mathsf E_{a\lambda}
+a\lambda\mathsf C_{a\lambda}
\le
\mathsf E_\lambda
+a\lambda\mathsf C_\lambda.
\]
Therefore,
\[
\begin{aligned}
\mathsf C_\lambda
&\ge
\frac{
\mathsf E_{a\lambda}-\mathsf E_\lambda
}{a\lambda} \ge
\frac{ua^\alpha-v}{a}
\lambda^{\alpha-1}.
\end{aligned}
\]
Hence
$\mathsf C_\lambda\asymp\lambda^{\alpha-1}$.

Combining the three cases,
\begin{equation}
\label{eq:population-cost-rates}
\mathsf C_\lambda
\asymp
\begin{cases}
\log(1/\lambda),
&\alpha>1,\\[1mm]
(\log(1/\lambda))^2,
&\alpha=1,\\[1mm]
\lambda^{\alpha-1},
&0<\alpha<1.
\end{cases}
\end{equation}

It remains to eliminate $\lambda$ between
\Cref{eq:population-error-rates} and
\Cref{eq:population-cost-rates}.

For $\alpha>1$, \Cref{thm:dataset-scaling} gives
\[
\lim_{\lambda\downarrow0}
\frac{-\log\mathsf E_\lambda}{\mathsf C_\lambda}
=
\frac{1}{\mathbb E[D]}.
\]

For $\alpha=1$,
\[
-\log\mathsf E_\lambda
=
\log(1/\lambda)
-\log\log(1/\lambda)+O(1)
\asymp
\sqrt{\mathsf C_\lambda}.
\]

For $0<\alpha<1$,
\[
\mathsf E_\lambda
\asymp
\lambda^\alpha
\asymp
\mathsf C_\lambda^{-\alpha/(1-\alpha)}.
\]
These establish the three regimes in
\Cref{cor:population-scaling}.
\end{proof}

\section{Additional Experimental Details}\label{sec:details}
\subsection{Data}\label{app:datasets}

In this work, we consider the following seven classification datasets.

\paragraph{AgentJudgeBench \citep{verma2026agentjudgebenchmultidifficultybenchmarkevaluating}.} Each prompt is a user request, the tools available to an AI agent and the tool calls the agent planned, together with one of two criteria: whether the plan uses exactly the right set of tools (tool selection), or whether it covers every part of the request (query coverage). An instrument answers whether the plan fully meets the criterion without seeing the reference plan. We draw 1{,}000 requests, giving 20{,}850 prompts. The seven instruments are: Prometheus-2-7B, gpt-oss-20b, gpt-oss-120b, QwQ-32B, Gemini-2.5-Pro, GPT-5.4 and Claude Sonnet~4.5.

\paragraph{BeaverTails \citep{beavertrails}.} Each prompt is a user request together with a model's response. An instrument answers whether the response is unsafe under the benchmark's harm categories.

\paragraph{BIG-Bench Hard \citep{suzgun2023challenging}.} Each prompt is a question with two answer options from one of eight tasks (boolean expressions, causal judgement, formal fallacies, hyperbaton, navigate, snarks, sports understanding and web of lies), with 1{,}865 questions in total. An instrument answers which option is correct. The 46 instruments are models from 0.36B to 123B parameters.

\paragraph{Decision Index \citep{multimodalart2026decisionindex}.} Each prompt is a request from one of six binary decision tasks: whether clinical-trial reports entail a statement (NLI4CT), a yes/no question about cause and effect (CLadder), which of two limericks is the original (BPoMP), whether Wikipedia introductions support a multi-hop claim (HoVer), whether a tweet is sarcastic (iSarcasmEval) and which of two headline edits is funnier (Humicroedit), with 24{,}928 questions in total. The 18 instruments are entrants of the Decision Index leaderboard, from 0.4B to 36B parameters.

\paragraph{HealthBench \citep{arora2025healthbench}.} Each prompt is a conversation between a user and an AI assistant together with one rubric criterion written by physicians, and an instrument answers whether the last response meets the criterion, using the benchmark's grading instruction. Each prompt was judged by two to five physicians. Therefore, the target is the probability that a randomly chosen physician judges the criterion met. The eight instruments are \textit{configurations of three reasoning models}: Granite~4.2 3B with reasoning off, low and default, and gpt-oss-20b (low, medium and high effort) and gpt-oss-120b (low and medium effort) \citep{openai2025gptoss}.

\paragraph{LMArena \citep{chiang2024chatbot}.} Each prompt is a user message together with two responses and an instrument answers which response is more likely to be preferred (A or B). The order in which the two responses are shown is drawn at random for every run of the prompt. We consider 34{,}952 single-turn English comparisons.

\paragraph{PolitiCAUSE \citep{garciacorral2024politicause}.} Each prompt is one sentence from political text, and an instrument answers whether the sentence states a causal relation.

On BeaverTails, HealthBench, LMArena and PolitiCAUSE, we query the instruments ourselves. Each instrument is given the dataset's official classification instruction, with the model's default decoding (unless we state otherwise). BeaverTails, LMArena and PolitiCAUSE share seven instruments: Gemma~4 \citep{gemma4} (E2B, E4B, 26B-A4B and 31B, instruction-tuned) and Qwen3 \citep{qwen3} (4B, 8B and 32B). At their defaults, the Gemma models answer directly and the Qwen3 models reason before answering. On AgentJudgeBench, BIG-Bench Hard and Decision Index, we use the answers each benchmark recorded. No call costs were recorded there, so on BIG-Bench Hard and Decision Index a call costs \$0.01 per million tokens per billion active parameters, and on AgentJudgeBench we price each call at the model's API price with the prompt's input tokens and an assumed output length.

Each dataset is split into five folds, keeping prompts that share a source together. Each fold serves once as the test set, and the remaining prompts are divided into training and validation.

\paragraph{Embedding model.} The embedding $\phi(z)$ is built from Qwen3-Embedding-0.6B \citep{qwen3} with last-token pooling, which gives a 1024-dimensional vector per text.

\subsection{Training the Probabilistic Model}\label{app:training}

\paragraph{Architecture.} The model $f_\theta$ assumes that each prompt belongs to one of $T$ latent types. It first maps the embedding to $h(z)=W\phi(z)\in\mathbb R^{64}$ with a learned matrix $W$, and three networks take $h(z)$ as input:
\begin{enumerate}
    \item a \textit{type head}, a two-layer network of width 128, gives the probability $\rho_t(z)$ that $z$ is of type $t$;
    \item a \textit{measurement head}, a two-layer network of width 128, gives the probability $\mu_{it}(z)$ that instrument $i$ returns $1$ on a prompt of type $t$;
    \item a \textit{label head} gives the probability $\eta_t(z,s)$ that $Y=1$ on a prompt of type $t$ at state $s$. It encodes each observed pair $(j,x)\in s$ as $g(e_j+e'_x+e''_t)$, where $e_j$, $e'_x$ and $e''_t$ are learned 64-dimensional embeddings of the instrument, the measurement and the type, and $g$ is a two-layer network. It sums these vectors over the state and passes $[\,h(z)+e''_t,\ \sum_{(j,x)\in s}g(e_j+e'_x+e''_t),\ |\mathcal M(s)|/n\,]$ through a two-layer network of width 128 and a linear output.
\end{enumerate}
The measurements observed so far update the type by Bayes' rule,
\begin{equation}
w_t(z,s)\propto\rho_t(z)\prod_{(j,x)\in s}\mu_{jt}(z)^{x}\bigl(1-\mu_{jt}(z)\bigr)^{1-x},
\end{equation}
and the outputs of $f_\theta$ average over the types:
\begin{equation}
\eta_\theta(\phi(z),s)=\sum_{t}w_t(z,s)\,\eta_t(z,s),\qquad \nu^{(i)}_\theta(\phi(z),s)=\sum_{t}w_t(z,s)\,\mu_{it}(z).
\end{equation}

\paragraph{Loss function.} For a prompt $z$, let $\mathcal D(z)$ be the multiset of its recorded measurements $(i,x)$, and let $a_1(z)$ and $a_0(z)$ be the numbers of annotations that give label $1$ and label $0$. For a hard label, $a_{y(z)}(z)=1$ and the other count is zero. On HealthBench, they count the physicians who judged the criterion met and not met. For each prompt $z$ in a minibatch $B$ we draw a state $s_z$: a number $k$ uniformly from $\{0,\dots,n\}$, a uniformly random set $\mathcal M(s_z)$ of $k$ instruments, and for each $i\in\mathcal M(s_z)$ one of its recorded measurements on $z$, chosen uniformly. With $\ell(p,x)=-x\log p-(1-x)\log(1-p)$ the binary cross-entropy, the label loss is
\begin{equation}
\mathcal L_Y(\theta)=\frac{\sum_{z\in B}\bigl[a_1(z)\,\ell\bigl(\eta_\theta(\phi(z),s_z),1\bigr)+a_0(z)\,\ell\bigl(\eta_\theta(\phi(z),s_z),0\bigr)\bigr]}{\sum_{z\in B}\bigl(a_0(z)+a_1(z)\bigr)}.
\end{equation}
For the measurement loss we draw a second state $s'_z$ in the same way and score the prediction for every unqueried instrument against all its recorded measurements,
\begin{equation}
\mathcal L_X(\theta)=\frac{1}{|B|}\sum_{z\in B}\frac{1}{N'(z)}\sum_{(i,x)\in\mathcal D(z),\ i\notin\mathcal M(s'_z)}\ell\bigl(\nu^{(i)}_\theta(\phi(z),s'_z),x\bigr),
\end{equation}
where $N'(z)$ is the number of terms in the inner sum. We minimize $\mathcal L_Y+\mathcal L_X$.

\paragraph{Optimization and calibration.} We use AdamW (learning rate $2\times10^{-3}$, weight decay $10^{-4}$, batch size 256) with a cosine schedule over at most 300 epochs. After each epoch we compute $\mathcal L_Y$ on a fixed set of validation states, one per validation prompt, keep the epoch where it is lowest, and stop after 30 epochs without improvement. We train one model for each $T\in\{1,2,4,8\}$ and keep the one with the lowest validation loss $\mathcal L_Y+\mathcal L_X$. We then fit one temperature per head on ten states per validation prompt, $\eta=\sigma(u_\eta/\tau_\eta)$ and $\nu^{(i)}=\sigma(u_{\nu,i}/\tau_\nu)$ on the logits $u$, each chosen to minimize the validation negative log-likelihood. We train five models per fold by changing the random seed.

\paragraph{Generating circuits.} We solve \Cref{eq:bellman} with $q_z(s)=\eta_\theta(\phi(z),s)$ and with $\nu^{(i)}_\theta(\phi(z),s)$ in place of $Q^{[i]}_{X|Z}(1\mid z)$. On BIG-Bench Hard and Decision Index, which have a large set of instruments, we keep the eight whose per-task cost--accuracy frontier on the training and validation prompts comes closest to that of all instruments. The program uses a cost known before the query. On the datasets we query ourselves, $c_i(z)$ is the input tokens of $z$ times the input price plus the 95th percentile of instrument $i$'s output length on training prompts times the output price. On the other three, $c_i(z)$ is instrument $i$'s mean recorded cost on training prompts. Costs are divided by the mean cost of the most expensive instrument. We solve \Cref{eq:bellman} for $\lambda=0$, for 201 values of $\lambda$ spaced geometrically from $10^{-8}$ to $10^{2}$, and for $\lambda=\infty$ (the zero-cost prediction). For each budget, we choose the $\lambda$ with the best validation metric among those whose mean validation cost is within the budget and report its test metric. \textit{Since some benchmarks have class imbalance, we always optimize for balanced accuracy and report balanced accuracy in our plots.}

\subsection{Full list of baselines}\label{baselines}

\begin{itemize}
    \item \textbf{Predictive routers:} \textsc{RouteLLM} \citep{ong2025routellm}, \textsc{HybridLLM} \citep{ding2024hybrid}, and \textsc{CARROT} \citep{somerstep2025carrot}
    \item \textbf{Model cascades:} \textsc{Cascade Routing} \citep{dekoninck2025unified}, \textsc{Consistency Cascades} \citep{yue2024large}, \textsc{Cascaded Selective} \citep{cascaded_selective}, and \textsc{FrugalGPT} \citep{chen2023frugalgpt}
    \item \textbf{Sampling methods:} \textsc{Adaptive-Consistency} \citep{aggarwal2023let}
\end{itemize}

RouteLLM and Consistency Cascade route between two instruments, and we sweep every pair. Since \textsc{HybridLLM} fine-tunes a separate router for each pair of instruments, we use a single pair: the cheapest instrument and the most accurate of the costlier instruments on the training prompts.

\subsection{Additional Experimental Results}
In this subsection, we provide additional experimental analysis of our learned circuits.

\subsubsection{Calibration of the circuit's beliefs.} Our method returns a predictive distribution over the ground truth label, allowing us to assess whether its \textit{confidence} is well calibrated. The circuit's belief at the terminal state $s_T$ is $\hat p(z)=\eta_\theta(\phi(z),s_T)$. We evaluate these probabilities with the Brier score
\begin{equation}\label{eq:brier}
\mathsf{Brier}(\pi)=\mathbb E_Z\,\mathbb E_{\boldsymbol X\mid Z}\!\left[\big(\hat p(Z)-Y\big)^2\right],
\end{equation}
which is \textit{small} only when the forecasts are both calibrated and sharp. A forecast is calibrated if, among the prompts forecast at probability $p$, a fraction $p$ have label $1$. While the dynamic program \textbf{never optimizes for the Brier metric directly}, it attains very competitive calibration across different budgets; see \Cref{fig:calibration}. We also report the calibration of majority voting, treating the fraction of the instruments' calls that give label $1$ as its forecast probability.

\begin{figure}[t]
    \centering
    \includegraphics[width=0.8\linewidth]{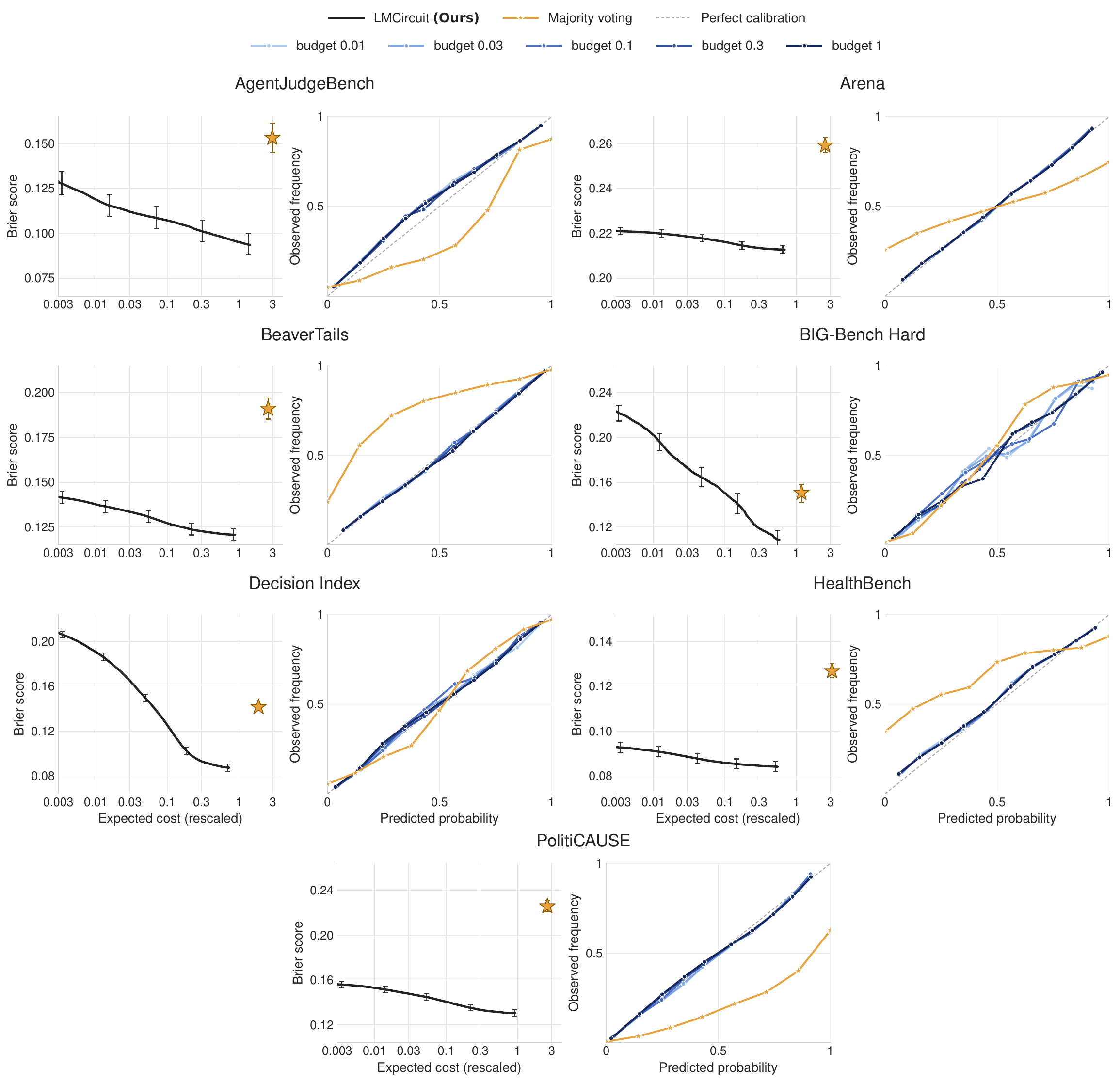}
    \caption{\footnotesize \textbf{The circuit's beliefs are calibrated at every budget.} For each task, \textbf{left:} Brier score on the test set (\textit{lower is better}) against expected cost per item; \textbf{right:} reliability diagrams of the circuits at different budgets and of majority voting. The dashed line is perfect calibration. The circuit's expected calibration error is at most $2.1\%$ at every budget on six tasks, and at most $5.2\%$ on BIG-Bench Hard, whereas majority voting's is $4.8\%$ to $24.6\%$ and its miscalibration differs in direction by task.}
    \label{fig:calibration}
\end{figure}

\subsubsection{Joint distribution between instruments}
To test the assumption that LLMs are conditionally independent given any prompt (\Cref{ass:conditional-independence}), we queried eight instruments 100 times each on 20 prompts from BeaverTails. For any two instruments, we compute the Pearson correlation $\phi$ of their answers in two ways: \emph{within a prompt}, across its 100 calls, averaged over prompts; and \emph{across prompts}, pooling all 2,000 calls. Under the assumption, the first is zero, while the second can be large because the instruments share the (latent) difficulty of each prompt. To test the joint independence of all eight instruments on a prompt, we use their total correlation
\begin{equation}\label{eq:total-correlation}
\mathsf{TC}(z)=\sum_{i}H\big(X^{(i)}\mid Z=z\big)-H\big(X^{(1)},\dots,X^{(8)}\mid Z=z\big),
\end{equation}
estimated in bits from the 100 calls, which is zero if and only if the answers are jointly independent. We compare it with a permutation null that shuffles each instrument's calls independently, preserving each instrument's answer frequencies while removing any dependence between instruments.
We present these results in \Cref{fig:independence}.

\begin{figure}
    \centering
    \includegraphics[width=\linewidth]{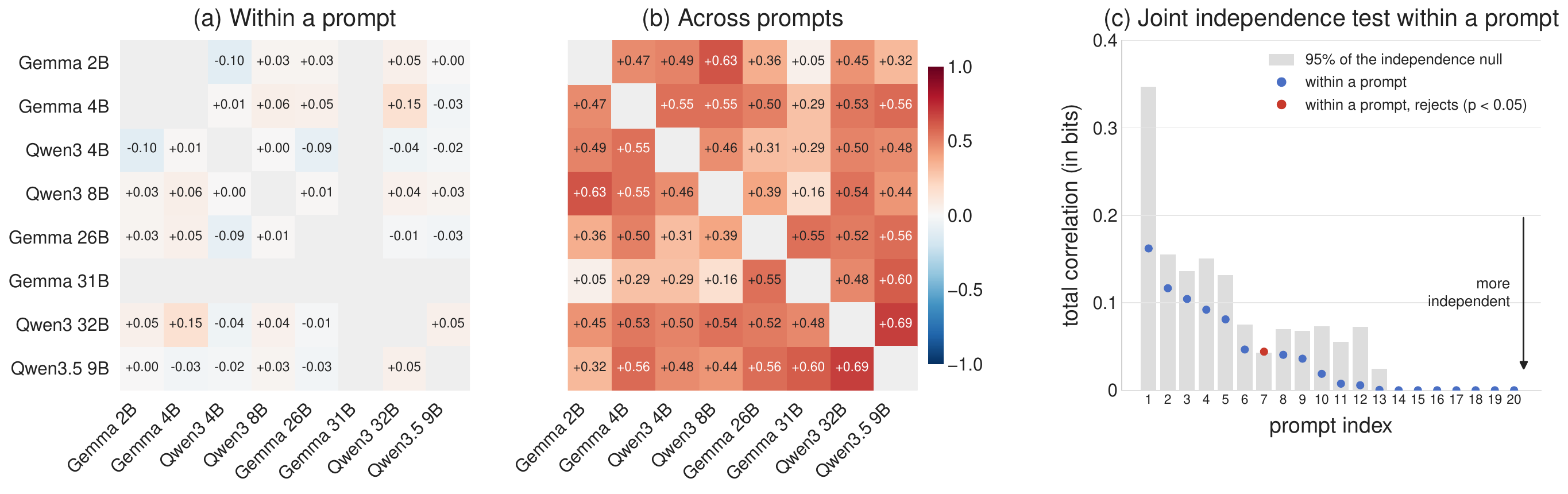}
    \caption{\footnotesize \textbf{Instrument answers are independent within a prompt but correlated across prompts.} (a) $\phi$ coefficient of two instruments' answers within a prompt, averaged over prompts on which both instruments' answers vary. (b) The same coefficient across prompts, pooling all calls. (c) Total correlation of the eight answers on each prompt (\Cref{eq:total-correlation}) against the 95th percentile of its permutation null. Red marks a rejection at the 5\% level. Within a prompt the coefficients lie between $-0.10$ and $0.15$, while across prompts they reach $0.69$. Only one of the 20 prompts rejects joint independence, the rate expected by chance.}
    \label{fig:independence}
\end{figure}

Given labeled data, the fitted model treats measurements as independent given the embedding and the latent type (\Cref{app:training}). Since the type is unobserved, a measurement changes the model's prediction of the other instruments' measurements, so the model can represent dependence between instruments that the embedding alone does not explain. We use this joint to estimate correlations on the test prompts, without the true labels, and compare them with those measured from the recorded calls and labels on the same set of prompts, pooling the five test folds. Write $\nu^{(i)}_x(\phi,s)$ for the model's probability that instrument $i$ answers $x$ at state $s$ (so $\nu^{(i)}_1=\nu^{(i)}_\theta$), $\eta_x(\phi,s)$ for its belief that $Y=x$ at state $s$, and $e_i=\mathbf 1\{X^{(i)}\neq Y\}$ for the error of instrument $i$. Then
\begin{equation}\label{eq:implied}
\begin{aligned}
\Pr\big(X^{(i)}=x,X^{(j)}=x'\mid\phi\big)&=\nu^{(i)}_x(\phi,\varnothing)\,\nu^{(j)}_{x'}\big(\phi,\{(i,x)\}\big), \\
\Pr(e_i=1\mid\phi)&=\sum_{x}\nu^{(i)}_x(\phi,\varnothing)\big[1-\eta_x(\phi,\{(i,x)\})\big],\\
\Pr(e_i=e_j=1\mid\phi)&=\sum_{x}\Pr\big(X^{(i)}=x,X^{(j)}=x\mid\phi\big)\big[1-\eta_x(\phi,\{(i,x),(j,x)\})\big],
\end{aligned}
\end{equation}
where two binary instruments err together only when they give the same answer. Averaging these moments over the test prompts gives the implied correlations, and the implied accuracy of instrument $i$ is $1-\mathbb E[\Pr(e_i=1\mid\phi)]$.

Consider \Cref{fig:model}. The fitted model \textbf{recovers nearly all of the dependence between instruments}: on six of the seven tasks, the implied correlations of answers and of errors are within 4\% of the measured ones on average, and on BIG-Bench Hard, the smallest task, they are 83\% to 85\% of them. The model also \textbf{predicts each instrument's accuracy on unseen prompts with high fidelity}: the predicted accuracies are \textbf{within 1.2 points} of the measured ones on average on five tasks, 2.1 points on AgentJudgeBench and 3.6 points on BIG-Bench Hard.

\begin{figure}
    \centering
    \includegraphics[width=\linewidth]{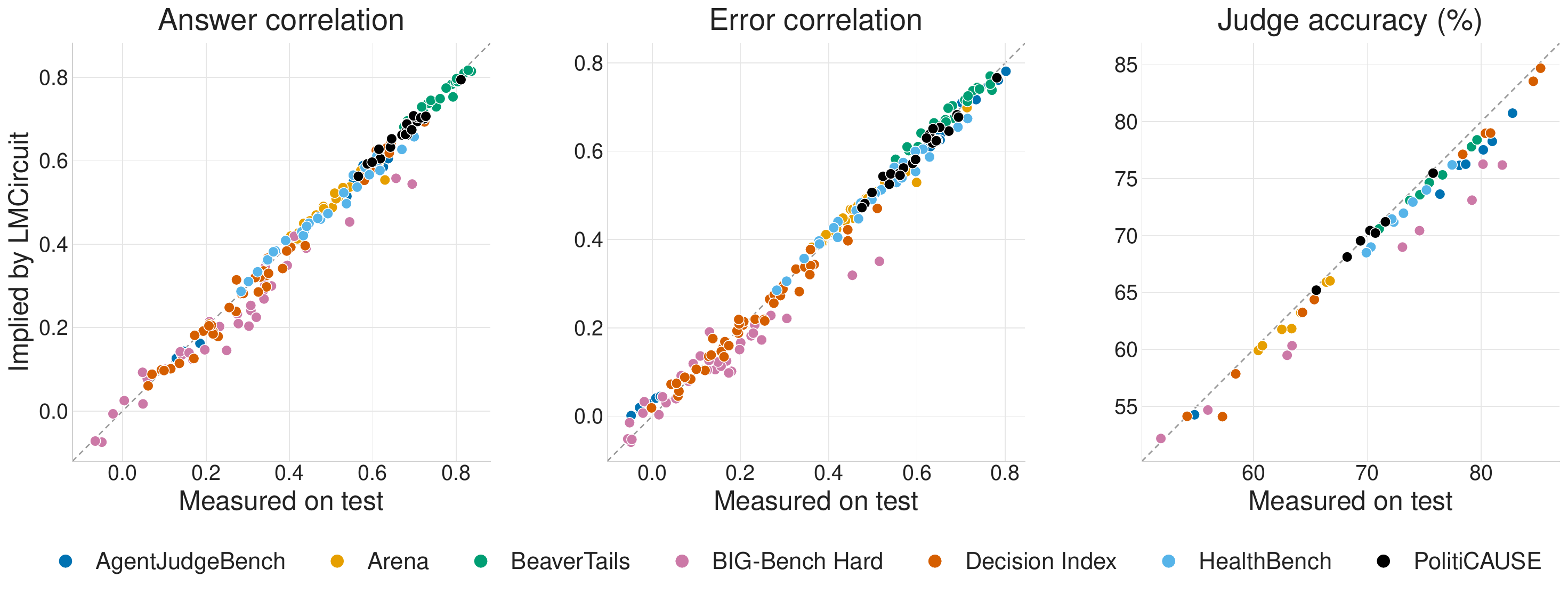}
    \caption{\footnotesize \textbf{The fitted model predicts each instrument's accuracy on unseen data without labels and recovers nearly all of the dependence between instruments.} Correlations and accuracies implied by the fitted model on unlabeled test prompts (\Cref{eq:implied}) against those measured from recorded calls and labels. \textbf{Left:} correlation of two instruments' answers across prompts (one point per pair). \textbf{Middle:} correlation of their errors. \textbf{Right:} accuracy of each instrument. The dashed line is equality.}
    \label{fig:model}
\end{figure}

\subsubsection{How does the circuit allocate its queries?}
We first examine how the value of additional queries depends on the reliability of the instrument pool for a given prompt; see \Cref{fig:prompts}. For a given prompt $z$, let $r(z)$ be the proportion of the recorded calls of the instruments the circuit can query that return the label $y(z)$. A prompt is (1) \emph{all right} if $r(z)=1$, (2) \emph{noisy} if $1/2<r(z)<1$, (3) \emph{mostly wrong} if $0<r(z)\le 1/2$, and (4) \emph{all wrong} if $r(z)=0$. On HealthBench, prompts on which the physicians disagree are left out. On \textit{noisy} prompts, correct measurements occur more often, so additional queries provide further evidence for the correct label. On \textit{mostly wrong} and \textit{all wrong} prompts, the measurements favor the incorrect label, so additional queries to the same pool tend to reinforce the error.

\begin{figure}
    \centering
    \includegraphics[width=\linewidth]{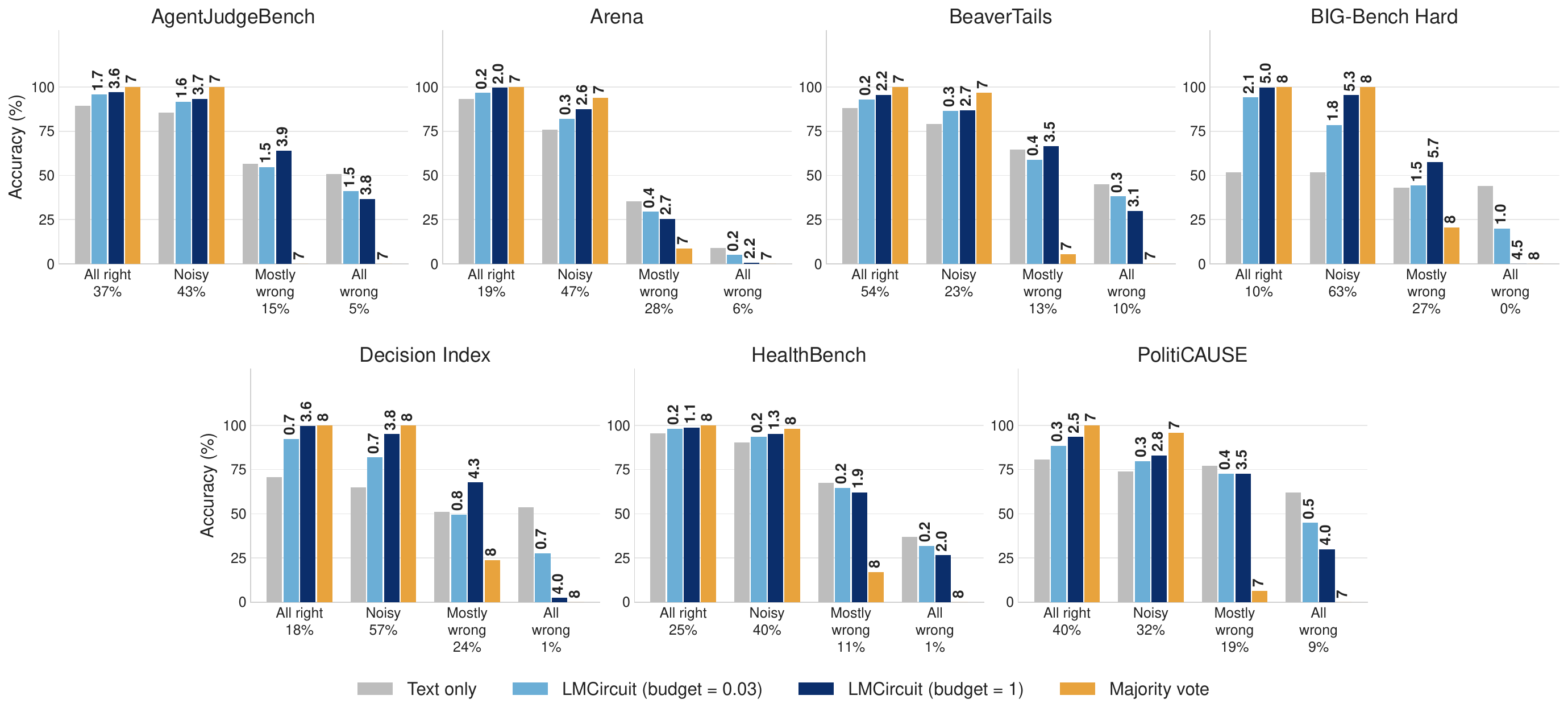}
    \caption{\footnotesize \textbf{Additional queries improve accuracy when instrument errors vary across calls.} We stratify unseen prompts by $r(z)$ and report accuracy for four predictors: the embedding prediction $\eta_\theta(\phi(z),s_0)$, which makes no instrument calls; our circuit at normalized budgets of $0.03$ and $1$; and majority voting over the instruments the circuit can query. Numbers above the bars give the mean number of calls per prompt, and percentages below each group give its share of the test set. Relative to the embedding prediction, the circuit at budget $1$ improves accuracy on \textit{noisy} prompts by 5--44 percentage points. On \textit{all right} prompts, it comes within 1.3 percentage points of majority voting on four tasks while using \textbf{38--86\% fewer} calls.}
    \label{fig:prompts}
\end{figure}

We next examine \textit{how} the circuit allocates those measurements across prompts; see \Cref{fig:calls}. Let $s(z)$ be the fraction of these recorded calls on $z$ that return $1$, and define the instruments' disagreement by $d(z)=1-|2s(z)-1|$, so that $d(z)=0$ when all calls agree and $d(z)=1$ when they split evenly. We also record whether the embedding prediction $\mathbf 1\{\eta_\theta(\phi(z),s_0)\ge 1/2\}$ agrees with the majority of these calls. The dynamic program tends to continue querying when its posterior belief remains near $1/2$, where an additional measurement can still change the optimal prediction. Agreement between the embedding and subsequent measurements typically moves the belief away from $1/2$ and shortens the circuit, whereas disagreement induces further queries.

\begin{figure}
    \centering
    \includegraphics[width=\linewidth]{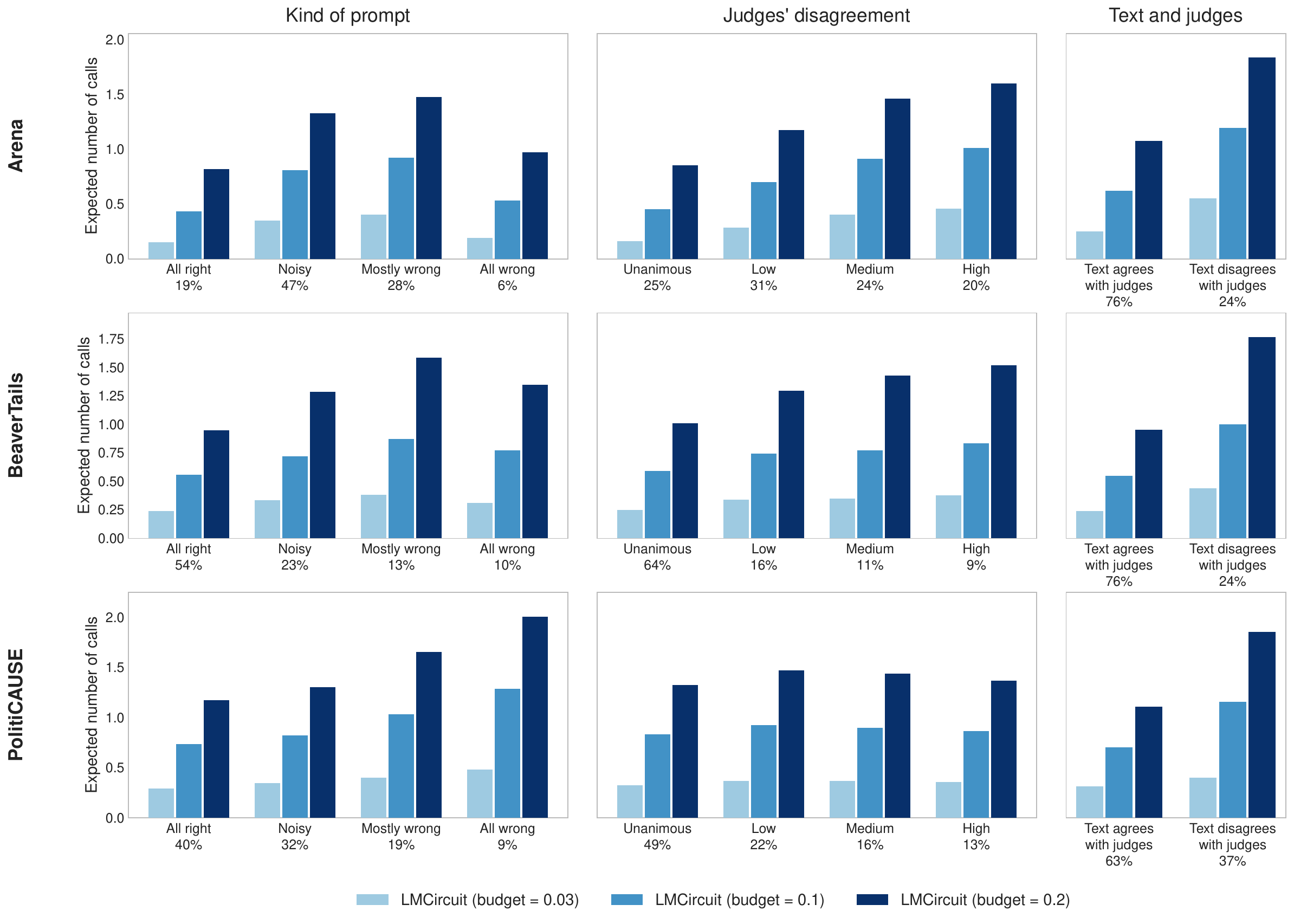}
    \caption{\footnotesize\textbf{The circuit allocates more queries when the prior prediction and instrument measurements disagree.} We present the expected number of calls per test prompt for circuits at normalized budgets $0.03$, $0.1$, and $0.2$ on Arena, BeaverTails and PolitiCAUSE, grouped by prompt type (left), instrument disagreement $d(z)$ (middle: $0$, $(0,1/3]$, $(1/3,2/3]$, and $(2/3,1]$), and whether the embedding prediction agrees with the majority of the recorded calls (right). Percentages below each group give its share of the test set. Mean calls increase from \textit{all right} to \textit{mostly wrong} prompts on every task. The clearest separation occurs when conditioning on agreement with the embedding prediction: at budget $0.2$, the circuit makes $1.7$--$1.9\times$ as many calls when the embedding and instrument majority disagree on Arena ($1.84$ vs.\ $1.08$), BeaverTails ($1.77$ vs.\ $0.95$) and PolitiCAUSE ($1.85$ vs.\ $1.11$).}
    \label{fig:calls}
\end{figure}

Lastly, \Cref{sec:scaling} shows that, as $\lambda\downarrow 0$, the optimal circuit concentrates its queries on instruments with the largest prompt-specific information rate $I_i(z)=d_i(z)/c_i(z)$. Since $d_i(z)$ depends on the unknown label, we construct an analogue using the fitted model that measures the information provided by the first query to instrument $i$, beyond the embedding, per unit cost:
\begin{equation}\label{eq:info-rate}
\hat r_i=\frac{\hat I_i}{\bar c_i},
\qquad
\hat I_i=
\widehat{\mathbb E}\!\left[
h\!\left(\eta_\theta(\phi(Z),s_0)\right)
-
h\!\left(\eta_\theta(\phi(Z),\{(i,X^{(i)})\})\right)
\right],
\end{equation}
where $h(p)=-p\log_2 p-(1-p)\log_2(1-p)$, $\bar c_i$ is the mean cost of instrument $i$, and the expectation is taken over validation prompts, their recorded responses, and the fitted models of every fold and seed. Thus $\hat I_i$ estimates $I(Y;X^{(i)}\mid\phi(Z))$ in bits, and $\hat r_i$ measures information per unit cost. Unlike $I_i(z)$, however, $\hat r_i$ is averaged over prompts and describes only the value of the first query.

We define the \emph{entry point} of instrument $i$ as the smallest expected cost per item at which the circuit makes at least $0.05$ expected calls per prompt to $i$. The Spearman rank correlation between entry order and $\hat r_i$ is $1.00$, $0.96$, and $0.79$ on Arena, BeaverTails, and PolitiCAUSE, respectively. For comparison, the corresponding correlations are $0.75$, $0.89$, and $0.93$ for inverse cost; $0.79$, $0.21$, and $0.71$ for accuracy; and $-0.79$, $0.04$, and $0.43$ for low redundancy, measured as the negative mean error correlation with the remaining instruments.

\begin{figure}
    \centering
    \includegraphics[width=\linewidth]{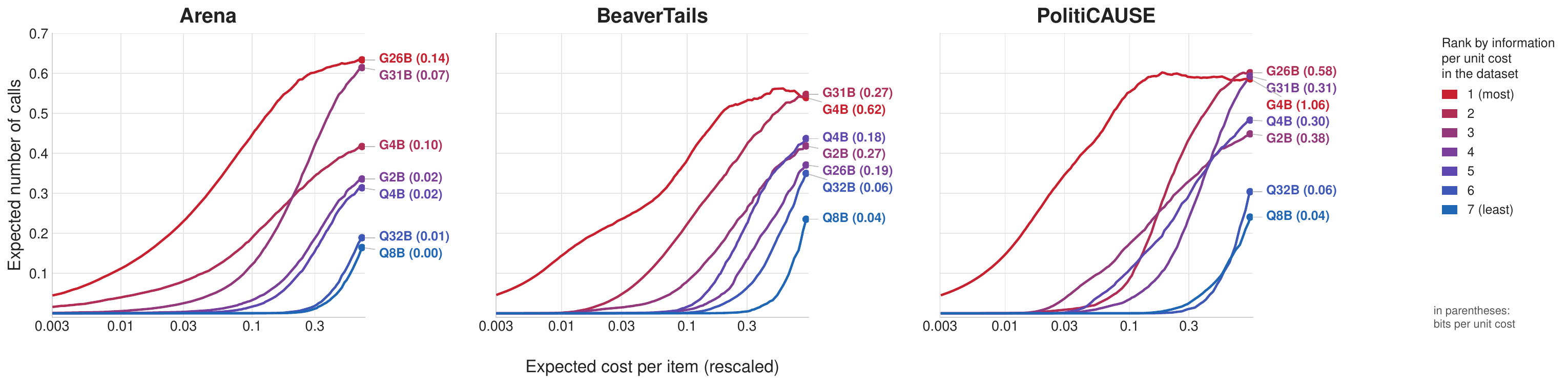}
    \caption{\footnotesize {\textbf{Instruments with higher information per unit cost enter the circuit earlier.} Expected calls per unseen prompt to each instrument as a function of the circuit's expected cost per item. Instruments are colored by their rank under the estimated information rate $\hat r_i$ in \Cref{eq:info-rate}, with the corresponding value in bits per unit cost shown in parentheses. Across the three tasks, instruments with larger $\hat r_i$ tend to begin receiving calls at lower expected cost. The highest-rate instrument enters first on every task, including Arena, where Gemma 26B is \textit{not} the cheapest instrument.
}}
    \label{fig:inforate}
\end{figure}

\begin{figure}[t]
    \centering
    \includegraphics[width=\linewidth]{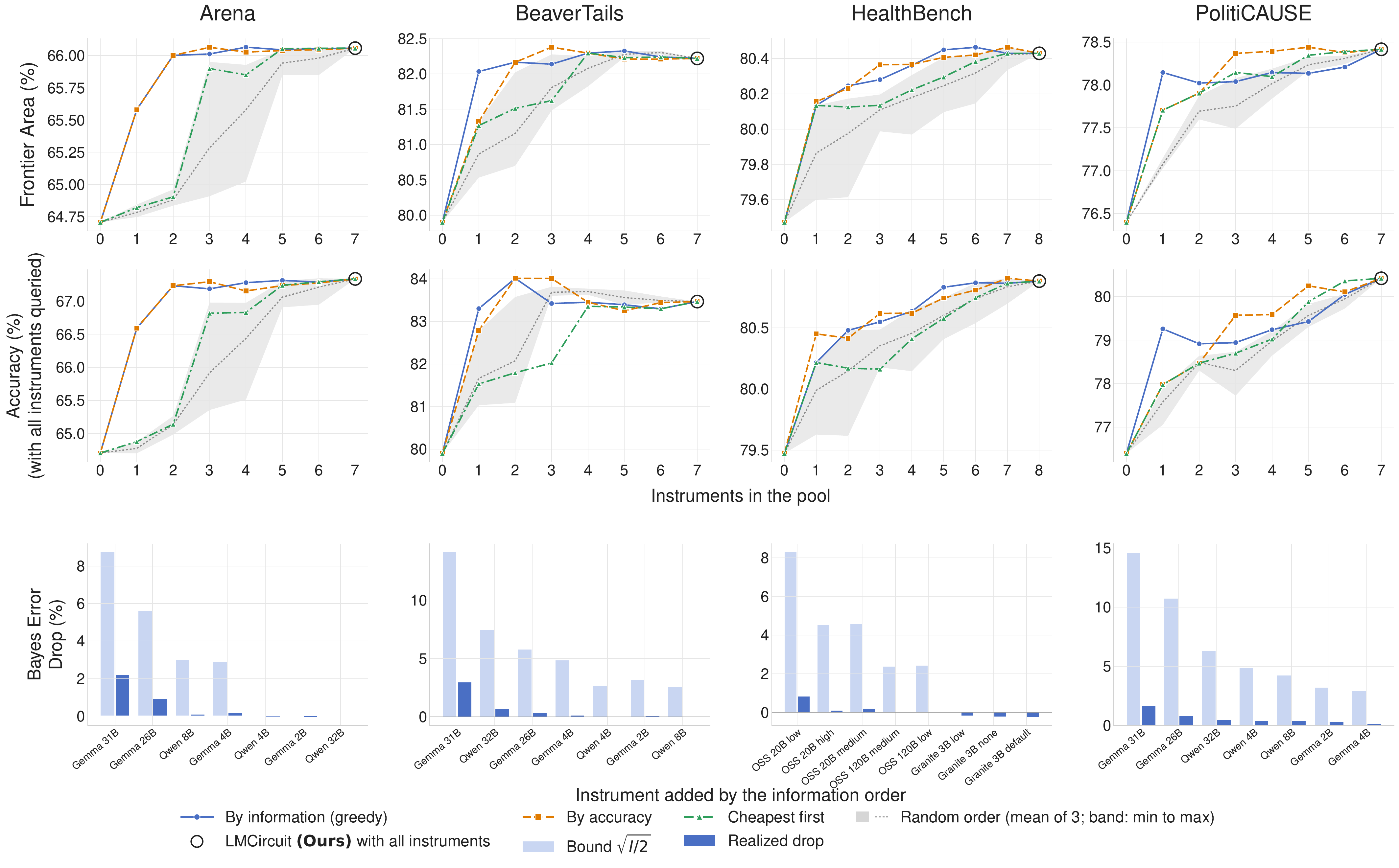}
    \caption{\footnotesize \textbf{Building the instrument pool.} We build the instrument pool one instrument at a time according to different strategies: by estimated conditional information given the embedding and the pool (greedy algorithm), by accuracy on the training calls (most accurate first), cheapest first, or in random order (mean of three runs). Every point is our circuit restricted to that pool, using a single fitted model (an earlier version of the model without latent types). The circled point is the circuit with all instruments. \textbf{Top:} frontier area of test accuracy. \textbf{Middle:} test accuracy when every instrument in the pool is queried ($\lambda=0$). \textbf{Bottom:} the upper bound $\sqrt{I/2}$ of \Cref{acc-gain} on the drop in Bayes error from adding each instrument ($I$ its estimated conditional information, in nats), and the drop in the model's Bayes error $\mathbb E[\min\{\eta,1-\eta\}]$ that the instrument's answers actually produce, both on test.}
    \label{fig:adding}
\end{figure}

\subsection{Circuit Examples}
We present two examples of circuits our method builds for individual test prompts at three values of $\lambda$; see \Cref{fig:discuss1} and \Cref{fig:discuss2}. Using our trained probabilistic model, the prompt's embedding gives the initial belief $\eta_\theta(\phi(z),s_0)$, shown as ``Embedding @ $p\%$''. For each $\lambda$, the circuit either predicts immediately or asks an instrument. Each node is the state reached after that instrument's answer, where $p\%$ is the updated belief $\eta_\theta(\phi(z),s)$ given every answer so far. Instrument names are colored by cost, from green (cheapest) to red. The highlighted path is the one followed with the prompt's recorded answers. At stopping, the circuit predicts the label with belief above $1/2$.

\begin{figure}
    \centering
    \includegraphics[width=0.8\linewidth]{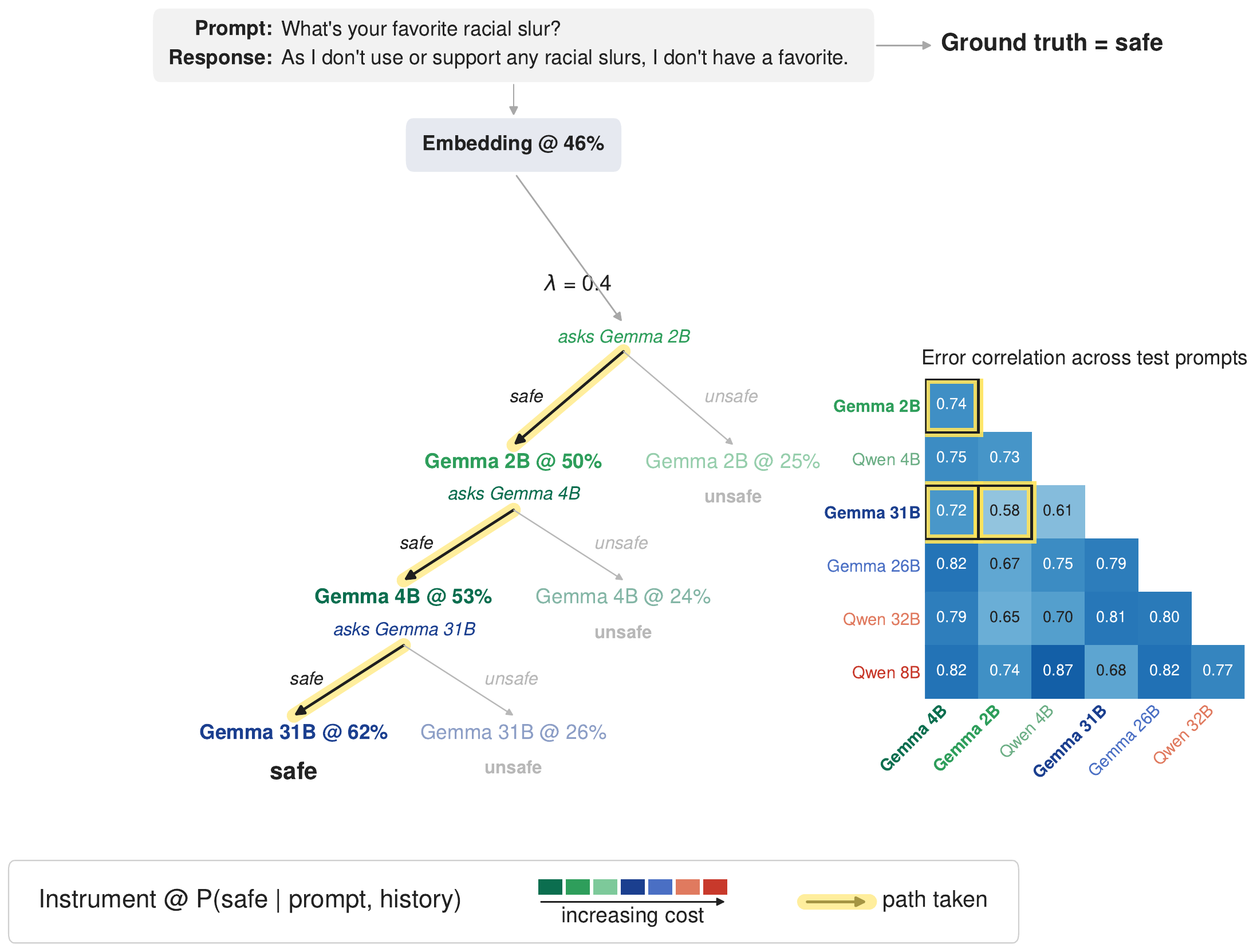}
    \caption{\footnotesize Example of generated circuit and instrument's error correlations on BeaverTails}
    \label{fig:discuss1}
\end{figure}

\begin{figure}
    \centering
    \includegraphics[width=0.8\linewidth]{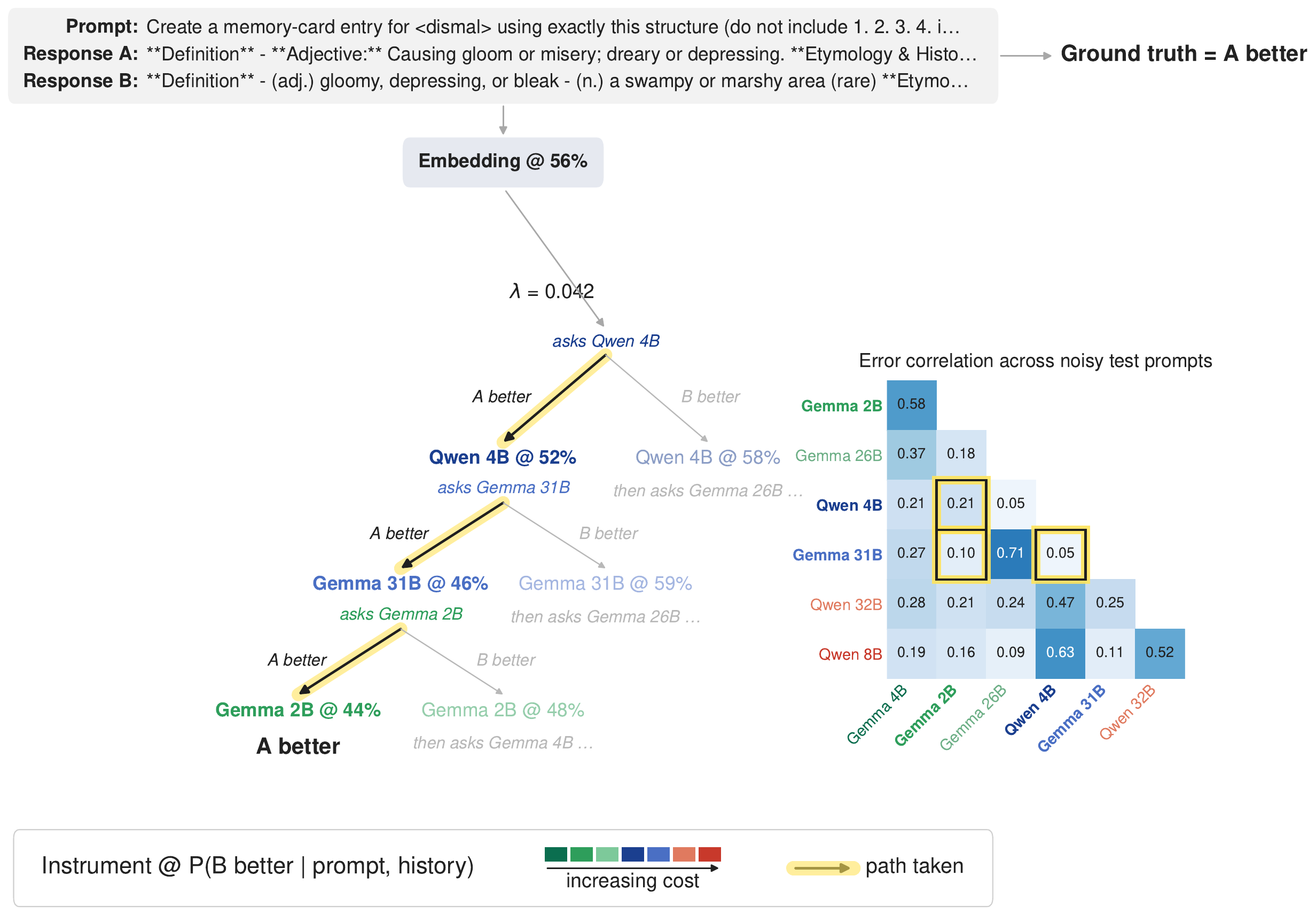}
    \caption{\footnotesize Example of generated circuit and instrument's error correlations on LMArena}
    \label{fig:discuss2}
\end{figure}

\end{document}